\def\BState{\State\hskip-\ALG@thistlm}
\newcommand{\target}{c}
\newcommand{\targetr}{C}
\newcommand{\that}{\hat{c}}
\newcommand{\BlackBox}{\rule{1.5ex}{1.5ex}}  % end of proof
\newenvironment{proof}{\par\noindent{\bf Proof\ }}{\hfill\BlackBox\\[2mm]}
\newtheorem{theorem}{Theorem}
\newcolumntype{Y}{>{\raggedleft\arraybackslash}X}
\newcommand{\thetai}{{\theta(t,i)}}
\newcommand{\mui}{\mu_{t,i}}
\newcommand{\defeq}{\mathrel{\overset{\makebox[0pt]{\mbox{\normalfont\tiny\sffamily def}}}{=}}}
\begin{document}

\title{Adapting Behavior via Intrinsic Reward: \\A Survey and Empirical Study}

\author{\name Cam Linke \email clinke@ualberta.ca \\
	\name Nadia M. Ady \email nmady@ualberta.ca \\
	\name Martha White \email whitem@ualberta.ca\\
       \addr 
       Alberta Machine Intelligence Institute (Amii)\\
       Department of Computing Science\\
       University of Alberta \\
	\name Thomas Degris \email degris@google.com \\ 
       \addr DeepMind  \\
	\name Adam White \email adamwhite@google.com \\
       \addr 
       Alberta Machine Intelligence Institute (Amii)\\
       Department of Computing Science\\
        University of Alberta\\
        DeepMind Alberta\\
}

% For research notes, remove the comment character in the line below.
% \researchnote

\maketitle

\begin{abstract}
%This paper investigates a computational analog of curiosity to drive behavior adaption in learning systems with multiple prediction objectives. The primary goal is to learn multiple independent predictions in parallel from data produced by some decision making policy---learning for the sake of learning. We can frame this as a reinforcement learning problem, where a decision maker's objective is to provide training data for each of the prediction learners, with reward based on each learner's progress. Despite the variety of potential rewards---mainly from the literature on curiosity and intrinsic motivation---there has been little systematic investigation into suitable intrinsic rewards in a pure exploration setting. In this paper, we formalize this pure exploration problem as a multi-arm bandit, enabling different learning scenarios to be simulated by different types of targets for each arm and enabling careful study of the large suite of potential intrinsic rewards. We test 15 different analogs of well-known intrinsic reward schemes, and compare their performance across a wide array of prediction problems. This investigation elucidates issues with several intrinsic rewards for this pure exploration setting, and highlights a promising direction using a simple intrinsic reward based on change in weights.

Learning about many things can provide numerous benefits to a reinforcement learning system.  For example, learning many auxiliary value functions, in addition to optimizing the environmental reward, appears to improve both exploration and representation learning. The question we tackle in this paper is how to sculpt the stream of experience---how to adapt the learning system's behavior---to optimize the learning of a collection of value functions. A simple answer is to compute an intrinsic reward based on the statistics of each auxiliary learner, and use reinforcement learning to maximize that intrinsic reward. Unfortunately, implementing this simple idea has proven difficult, and thus has been the focus of decades of study. It remains unclear which of the many possible measures of learning would work well in a parallel learning setting where environmental reward is extremely sparse or absent. In this paper, we investigate and compare different intrinsic reward mechanisms in a new bandit-like parallel-learning testbed. We discuss the interaction between reward and prediction learners and highlight the importance of introspective prediction learners: those that increase their rate of learning when progress is possible, and decrease when it is not. We provide a comprehensive empirical comparison of 14 different rewards, including well-known ideas from reinforcement learning and active learning. Our results highlight a simple but seemingly powerful principle: intrinsic rewards based on the amount of learning can generate useful behavior, if each individual learner is introspective.
\end{abstract}

\section{Balancing the Needs of Many Learners}
% MARTHAC: A bit too outrageous
%\section{Learning feels good}

Learning about many things can provide numerous benefits to a reinforcement learning system. Adding many auxiliary losses to a deep learning system can act as a regularizer on the representation, ultimately resulting in better final performance in reward maximization problems, as demonstrated with Unreal \citep{jaderberg2016reinforcement}. A collection of value functions encoding goal-directed behaviors can be combined to generate new policies that generalize to goals unseen during training \citep{schaul2015universal}. Learning in hierarchical robot-control problems can be improved with persistent exploration, given call-return execution of a collection of subgoal policies or skills \citep{riedmiller2018learning}, even if those policies are imperfectly learned. In all these examples, a collection of general value functions \citep[see][]{sutton2011horde} is updated from a single stream of experience. The question we tackle in this paper is how to sculpt that stream of experience---how to adapt the learning system's behavior---to optimize the learning of a collection of value functions. 

One answer is to simply maximize the environmental (extrinsic) reward. This was the approach explored in Unreal and it resulted in significant performance improvements in challenging visual navigation problems. However, it is not hard to imagine situations where this approach would be limited. In general, the reward may be delayed and sparse: what should the agent do in the absence of external (environmental) motivations? Learning reusable knowledge such as skills \citep{sutton1999between} or a model of the world might result in more long-term reward. Such auxiliary learning objectives could emerge automatically during learning \citep{silver2017predictron}. Most agent architectures, however, include explicit skill and model learning components.  It seems natural that progress towards these auxiliary learning objectives could positively influence the agent's behavior, resulting in improved learning overall. 

Learning many value functions off-policy from a shared stream of experience---with function approximation and an unknown environment---provides a natural setting to investigate no-reward intrinsically motivated learning. The basic idea is simple. The aim is to accurately estimate many value functions, each with an independent learner---where there is no external reward signal. Directly optimizing the data collection for all learners jointly is difficult because we cannot directly measure this total learning objective and because actions have an indirect impact on learning efficiency. There is a large related literature in active learning \citep{cohn1996active,balcan2009agnostic,settles2009active,golovin2011adaptive,konyushkova2017learning} and active perception \citep{bajcsy2018revisiting},
from which to draw inspiration for a solution but which do not directly apply to this problem.
%that does not directly apply to our parallel value function learning problem. 
In active learning the agent must sub-select from a larger set of items to choose which points to label. Active perception is a subfield of vision and robotics. Much of the work in active perception has focused on specific settings---namely visual attention \citep{bylinskii2015towards}, localization in robotics \citep{patten2018monte} and sensor selection \citep{satsangi2018exploiting,satsangi2020maximizing}---or assumes knowledge of the dynamics of the world \citep[see][]{bajcsy2018revisiting}. %[Section 5]

An alternative strategy is to formulate our task as a reinforcement learning problem. We can use an intrinsic reward, internal to the learning system, that approximates the total learning across all learners. 
%Now we have cast our problem as a reinforcement learning problem. 
The behavior can be adapted to choose actions in each state that maximize the intrinsic reward, towards the goal of maximizing the total learning of the system. The choice of intrinsic rewards can have a significant impact on the sample efficiency of such intrinsically motivated learning systems. This paper provides the first formulation of parallel value function learning as a reinforcement learning task. Fortunately, there are many ideas from related areas that can inform our choice of intrinsic rewards.

Rewards computed from internal statistics about the learning process have been explored in many contexts over the years.
Intrinsic rewards have been shown to induce behavior that resembles the development stages exhibited by young humans and animals \citep{barto2013intrinsic,chentanez2005intrinsically,oudeyer2007intrinsic,lopes2012exploration,haber2018learning}.
Internal measures of learning have been used to improve skill or option learning \citep{chentanez2005intrinsically,schembri2007evolving,barto2005intrinsic,santucci2013intrinsic,vigorito2016intrinsically}, and model learning \citep{schmidhuber1991possibility,schmidhuber2008driven}.
%and representation learning [need more] to improve learning on future tasks---a transfer setting. 
Most recent work has investigated using intrinsic reward as a bonus to encourage additional exploration in single task learning \citep{itti2006bayesian,stadie2015incentivizing,bellemare2016unifying,pathak2017curiosity,hester2017intrinsically,tang2017exploration,andrychowicz2017hindsight,achiam2017surprise,martin2017count,colas2018gep,schossau2016information,pathak2019self}. Few have investigated the impact of making these internal measures the main objective of learning \citep{berseth2019smirl}, however previous studies have noted that intrinsic reward is useful even in single-task problems with a well-defined external goal~\citep{bellemare2016unifying}.  

It remains unclear, however, which of these measures of learning would work best in our no-reward setting. Most prior work has focused on providing demonstrations of the utility of particular intrinsic reward mechanisms. One study focused on a suite of large-scale control domains with a single scalar external reward~\citep{pathak18largescale}, comparing different learning systems that use an intrinsic reward based on model-error as an exploration bonus. 
A large study has been conducted on learning progress measures for curriculum learning for neural networks \citep{graves2017automated}, where the goal is to learn from which task to sample a dataset to update the parameters. Variants of their measures are related to the intrinsic rewards explored in this paper, but their setting differs substantially in that learning is offline from batch supervised learning datasets and the underlying problems are stationary.
%Curriculum has been most explored in the supervised multi-task learning case where the agent can switch between tasks at any moment \citep{graves2017automated,matiisen2017teacher}. Although we could use some ideas from curriculum learning in our bandit-like task, these approaches would not be compatible with more general sequential decision making tasks with delayed outcomes (e.g., a robot), which represents our ultimate goal.} 
To the best of our knowledge, there has never been a broad empirical comparison of intrinsic rewards for the online multi-prediction setting with non-stationary targets.

A computational study of intrinsic rewards is certainly needed, but tackling this problem with function approximation and off-policy updating is not the place to start. Estimating multiple value functions in parallel requires off-policy algorithms because each value function is conditioned on a policy that is different than the exploratory behavior used to select actions. In problems of moderate complexity, these off-policy updates can introduce significant technical challenges. Popular off-policy algorithms like Q-learning and V-trace can diverge with function approximation \citep{sutton2018reinforcement}. Sound off-policy algorithms exist, but require tuning additional parameters and are relatively understudied in practice. Even in tabular problems, good performance requires tuning the parameters of each component of the learning system---a complication that escalates with the number of value functions. Finally, the agent must solve the primary exploration problem in order to make use of intrinsic rewards. Finding states with high intrinsic reward may not be easy, even if we assume the intrinsic reward is reliable and informative. To avoid these many confounding factors, the right place to start is in a simpler setting.

In this paper, we investigate and compare different intrinsic reward mechanisms in a new bandit-like parallel learning testbed. The testbed consists of a single state and multiple actions. Each action is associated with an independent scalar target to be estimated by an independent prediction learner. A reasonable behavior policy will focus on actions that generate the most learning across the prediction learners. However, the overall task is partially observable, and learning is never done. The targets change without an explicit notification to the agent, and the task continually changes due to changes in action selection and learning of the individual prediction learners.  Different configurations of the target distributions can simulate unlearnable targets, non-stationary targets, and easy-to-predict targets. Our new testbed provides a simple instantiation of a problem where \emph{introspective} learners should help achieve low overall error.  An introspective prediction learner is one that can autonomously increase its rate of learning when progress is possible, and decrease learning when progress is not---or cannot---be made. 

This paper summarizes a comprehensive empirical comparison of different intrinsic reward mechanisms, including several ideas from reinforcement learning and active learning. 
% MARTHAC (Apr 17, 2019): this seems repetitive
%Our results highlight scenarios where well-known intrinsic rewards fail to generate behavior that improves learning, whereas simple parameter-free measures based on total amount of learning can work well even when multiple targets exhibit varying degrees of noise and non-stationarity.
%
Our computational study of learning progress highlighted a simple principle: intrinsic rewards based on the {\em amount of learning} (e.g., Bayesian Surprise and simple change in weights) can generate useful behavior if each individual learner is introspective. Across a variety of problem settings we find that the combination of introspective learners and simple intrinsic rewards was most reliable, performant, and easy to tune.
We conclude with a discussion about how these ideas could be extended beyond our one-state prediction problem to drive behavior in large-scale problems where off-policy learning and function approximation are required.

\section{Problem Formulation}
\label{prob}
% Our problem formulation is motivated by our ultimate goal of learning a behavior policy that maximizes the learning progress of a Horde of independent sub-agents. Informally, each sub-learner estimates the expected discounted sum of its own target signal, conditioned on some target policy. The objective of the behavior policy is to select actions in a way to generates the most learning across all the sub-learners. The Horde architecture is too complex for our purposes. Instead we study a bandit version of the problem that retains many of the interesting characteristics of behavior learning in Horde, but is small enough to permit an extensive empirical study different reward functions to generate curious behaviors.
In this section we formalize a testbed for comparing intrinsic reward using a stateless prediction task and independent learners. This formalism is meant to simplify the study of balancing the needs of many learners to facilitate comprehensive comparisons. %An agent that performs well on our test bed must (1) minimizing a set of prediction errors rather than maximizing external reward,  (2) having to choose between concurrent choices and (3) non-stationarity of the different targets to learn about.

% MARTHA: Removed context-free, too much info, and not needed technically, since actually just the bandit has to be context free here
We formalize our multiple-prediction learning setting as a collection of independent, online supervised learning tasks.
On each discrete time step $t=1,2,3,...$, the {\em behavior agent} selects an action $A_t \in \{1,\ldots, N\}$ corresponding to task $i \in \{1,\ldots, N\}$\footnote{To clearly separate the action selected by the agent and the prediction task, we use $A_t$ to denote the action selected at time $t$ and $i$ to denote prediction task. $A_t$ is uppercase to indicate it is a random variable, with lowercase $a$ corresponding to an actual action selected. In our setting, taking action $a$ corresponds to observing data for task $i$, and so there is an equivalence between the actions and tasks. More generally, such as in the full reinforcement learning setting, this is not the case; for extensions on this work, it is useful to clearly delineate between actions and prediction tasks.}, causing a target signal to be sampled from an (unknown) target distribution, $\target_{t,i} \sim \thetai$, where $\targetr_{t,i}$ denotes the random variable with distribution $\thetai$.
This distribution $\thetai$ is indexed by time to reflect that it can change on each time step; this enables a wide range of different target distribution to be considered, to model this non-stationary, multi-prediction learning setting. We provide the definition we use in this work later in this section, in Equation \eqref{drift}.
%$\targetr_{k,i} \sim \thetai$, where $k = \numi$ is the number of times $\thetai$ has been sampled after $t$ time steps. 

Associated with each prediction task is a simple {\em prediction learner} that maintains a real-valued vector of weights $w_{t,i}$, to produce an estimate, $\that_{t,i}\in\mathbb{R}$, of the expected value of the target, $\that_{t,i} \approx \mathbb{E}[\targetr_{t,i}]$. On a step where task $i$ is selected, $w_{t,i}$ could be updated using any standard learning algorithm. In this work, we use a 1-dimensional weight vector, and so the update is a simple delta-rule (least-mean-squares (LMS) learners):
\begin{equation}
w_{t+1,i} \leftarrow w_{t,i} + \alpha_{t,i}\delta_{t,i}  \label{lms}
\end{equation}
where $\alpha_{t,i}$ is a scalar step-size parameter and $\delta_{t,i} \defeq \target_{t,i} - w_{t,i}$ is the prediction error of prediction learner $i$ on step $t$. On a step where task $i$ is not selected, $w_{t,i}$ is not updated, implicitly setting $w_{t+1,i}$ to $w_{t,i}$.

The primary goal is to minimize the Mean Squared Error up to time $t$ for all of the $N$ learners:
\begin{equation} {\tt{MSE}}(t) \defeq \frac{1}{t} \sum_{k=1}^{t} \frac{1}{N} \sum_{i=1}^N  (\that_{k,i}- \mathbb{E}[\targetr_{k,i}])^2\label{eq:MSE_arms}
.
\end{equation}
The behavior agent does not get to observe this error, both because it only observes one of the targets $\target_{k,i}$ on each step, rather than all $N$, and because that target is a noisy sample of the true expected value $\mathbb{E}[\targetr_{k,i}]$. The agent can nonetheless attempt to minimize this unobserved error.

In order to minimize Equation \eqref{eq:MSE_arms}, we must devise a way to choose which prediction task to sample. This can be naturally formulated as a sequential decision-making problem, where on each time step $t$, the behavior agent chooses an task $i$ (corresponding to action $A_t$), resulting in a new sample of $\target_{t,i}$, and an update to $w_{t,i}$. In order to learn a preference over actions we associate an intrinsic reward $R_t \in \mathbb{R}$ with each action selection, and thus with each prediction task. We investigate different intrinsic rewards. Given a definition of the intrinsic reward, we can use a bandit algorithm suitable for non-stationary problems; we discuss two options below in Section \ref{sec_banditalgs}.

The targets for each prediction learner are intended to replicate the dynamics of targets that a parallel auxiliary task learning system might experience, such as sensor values of a robot. To simulate a range of interesting dynamics, we construct each $\thetai$ as a Gaussian distribution with drifting mean:
%More precisely, we define the dynamics of the target distribution for task $i$ as follows:
\begin{align}
&\thetai \defeq \mathcal{N}(\mui,\sigma^2_{t,i}) \label{drift}\\
%\hspace{.25cm}
&\text{for } \ \
\mu_{t+1,i} \leftarrow \Pi_{[-50,50]}\left(\mui + \mathcal{N}(0,\xi^2_{t,i}) \right) \nonumber
\end{align}
where  $\mui\in\mathbb{R}$, $\sigma^2_{t,i}\in\mathbb{R}^+$ controls the sampling noise, $\xi^2_{t,i}\in\mathbb{R}^+$ controls the rate of drift and $\Pi_{[-50,50]}$ projects the drifting $\mu_{t,i}$ back to the range $[-50,50]$ to keep it bounded. The variance and drift are indexed by $t$ because we explore settings where they change periodically. These changes are not communicated to the behavior agent, and the individual LMS learners are prevented from storing explicit histories of the targets. The purpose of this choice was to simulate partial observability common in many large-scale systems \citep[e.g.,][]{sutton2011horde,modayil2014multi,jaderberg2016reinforcement,silver2017predictron}. Given our setup, both prediction learners and the behavior learner would do well to treat their respective learning tasks as non-stationary and track rather than converge \citep{sutton2007role}, as long as $\xi^2_{t,i}$ is greater than zero. Each sample $\target_{k,i} \sim \thetai$, and $\mui$ is bounded between $[-50,50]$, and $\mui$ is updated on each step $t$ regardless of which action is selected.
%, otherwise $\mui$ constant, only applying Equation \ref{drift} every $\tau\in\mathbb{N}$ steps. 
Our formalism is summarized in Figure \ref{fig:diagram}.
\begin{figure}[htbp]
      \centering
      \includegraphics[width=0.8\linewidth]{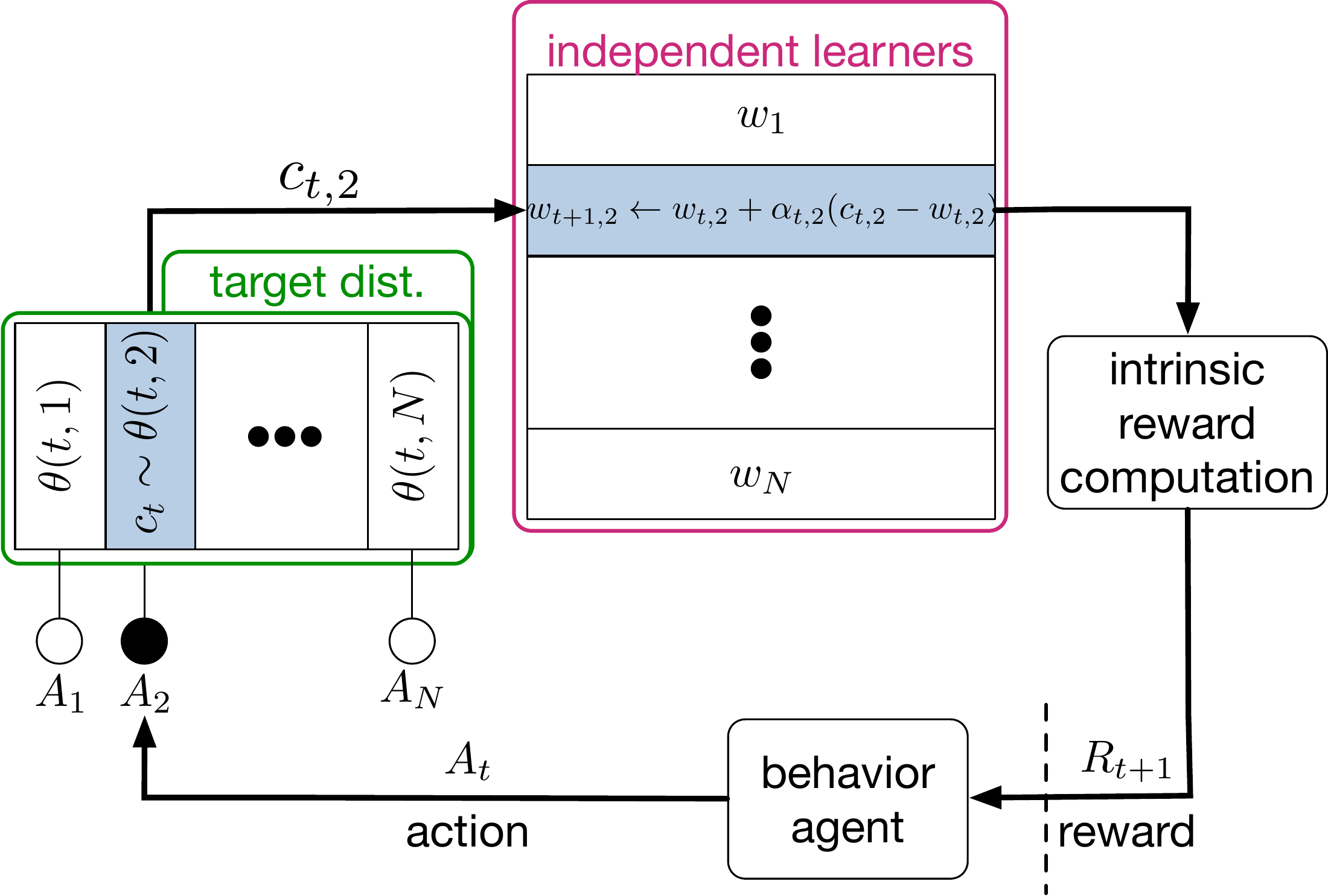}
      \caption{Our parallel multi-prediction learning formulation.}
      \label{fig:diagram}
\end{figure}

\subsection{Non-stationary Bandit Algorithms for Prediction Learning}\label{sec_banditalgs}

We do not focus on the bandit formalism itself nor bandit algorithms in this work. Rather, our goal is to investigate intrinsic rewards and their utility for learning multiple predictions, in the simplest setting in which we can obtain meaningful insights: a bandit-like setting. Our choice of bandit algorithm, therefore, is simply to facilitate this investigation, rather than investigate the properties of the bandit algorithms themselves. We use two different bandit algorithms---a Gradient Bandit and an extension of Dynamic Thompson Sampling (DTS)---so as to ensure our conclusions are not due primarily to the choice of bandit algorithm. We describe these two algorithms below, as well as the reasons for choosing them. 

We cannot simply pick any bandit algorithm, as our prediction learning setting differs from the usual multi-armed bandit setting in at least two ways. First, the distributions of the targets are non-stationary. Non-stationary learning problems have been studied under dynamic bandits, also called restless bandits. The general problem setting is known to be hard, but under some restrictions, some progress can be made. Some algorithms assume piecewise stationarity, such as Discounted UCB or Sliding-Window UCB \citep{garivier2011upper}, or those with a variation budget, which is used to decide how to restart stationary bandit algorithms \citep{besbes2014stochastic}. More suitable for our setting is work assuming restrictions on drift, such as Brownian motion, including State-Oblivious UCB \citep{slivkins2008adapting} and Dynamic Thompson Sampling (DTS) \citep{gupta2011thompson}, or State-Oblivious UCB \citep{slivkins2008adapting} and the Gradient Bandit \citep{sutton2018reinforcement}.
%but the specific instance of state-oblivious dynamic bandits restriction significantly limits options, as there are only a small number of bandit algorithms designed for non-stationary rewards. These include the Gradient Bandit \citep{sutton2018reinforcement}, DTS \citet{gupta2011thompson}, and Window-UCB \citep{garivier2011upper}. UCB-V and Window-UCB rely on information about the non-stationarity, which is not available; therefore, we choose to pursue the Gradient Bandit and DTS, which do not have these requirements. 

Second, our objective is to minimize error across all learners, but we only see an intrinsic reward corresponding to the target we selected for that step. The need to minimize error across all learners---our second issue---is related to partial monitoring (see \citet{lattimore2019cleaning} for an overview). In partial monitoring, the learning system only receives limited feedback about the true loss incurred. For our prediction setting, the true loss is the MSE over all the predictions. The feedback is only about the prediction for the action selected, and, depending on the intrinsic reward, it is a noisy and indirect measure of the MSE for that prediction. Partial monitoring encompasses a wide range of problems, but it is difficult to develop algorithms and regret bounds for the fully general setting. 
To the best of our knowledge, current algorithms rely on finite outcomes and involve estimating distributions over outcomes. Given the difficulty of even that restricted setting, the additional complication of non-stationarity does not as yet seem to have been tackled. 

Fortunately, for our prediction setting, the structure of our problem (described in Section \ref{prob}) admits a simple approach that performs well in practice: to err on the side of taking an action periodically. Selecting any action is not detrimental, as it provides information about one of the targets. Particularly in a non-stationary setting, each action should be taken periodically, to check if expected reward estimates remain accurate. One reasonable strategy is to obtain a distribution over the actions---not find the single best action---and sample proportionally to that distribution, as is done by the Gradient Bandit. For DTS, we prevent the variance for the Bayesian estimate for each action from dropping below a minimum level, both to account for non-stationarity and to increase the probability that an action will be selected. We find these simple choices to be sufficient for reasonable behavior in our multi-prediction problem setting. 
%We are not proposing a new bandit algorithm; in fact, any bandit algorithm suited to non-stationary domains can be used. 
 %and reduces the number of tuneable parameters (no explicit exploration parameter) thus making the results more clear.
 
We now describe these two bandit algorithms. 
%\footnote{In preliminary experiments with the \textit{Drifter-Distractor} problem described in Section \ref{sec:problems}, we also compared the $\varepsilon$-greedy bandit learning algorithm \cite[pp.~27--28]{sutton2018reinforcement}. The conclusions were the same as with the gradient bandit. However, the gradient bandit is better suited to non-stationary tasks \citep{sutton2018reinforcement}, which we focus on here.}
The Gradient Bandit, specified in \citet[Section 2.8]{sutton2018reinforcement}, attempts to maximize the expected average reward by modifying a vector of action preferences $h \in \mathbb{R}^N$---indexed by action---based on the difference between the reward and average reward baseline: 
\begin{align*}
h_{t+1}(a) \leftarrow \left\{ \begin{array}{ll}
         h_t(a) + \alpha(R_{t+1} - \bar{r})(1-\pi_t(a)) & \mbox{if $A_t = a$};\\
        h_{t}(a) - \alpha(R_{t+1} - \bar{r})\pi_t(a) & \mbox{otherwise}.\end{array} \right.
\end{align*}
where $\bar{r} \in \mathbb{R}$ is the average of all the rewards up to time $t$, maintained using an exponential average, and $\bar{r}$ and $h_0(a)$ are both initialized to zero.
%\footnote{In practice, the average reward is estimated with a exponentially weighted moving average with an averaging constant of $\lambda \in (0,1]$.} 
Actions are selected probabilistically according to a softmax distribution which converts the preferences to probabilities:
\begin{equation*}
  Pr\{A_t=a\} = \pi_t(a) \defeq  \frac{e^{h_t(a)}}{\sum_{b=1}^Ne^{h_t(b)}}
\end{equation*}
The Gradient Bandit will sample all the actions infinitely often, though if an action preference is very low then that action will be rarely taken.
Notice that the Gradient Bandit algorithm is similar to policy gradient methods in reinforcement learning. %This similarity is an additional reason we preferred this algorithm., as it reflects the setting which we are ultimately interested in---learning the behavior for a Horde of demons in Markov Decision Process problems with function approximation~\citep{sutton2011horde}---discussed further in Section \ref{horde}.

The second non-stationary bandit algorithm we use is Dynamic Thompson Sampling (DTS) \citep{gupta2011thompson}.
% MARTHAC: This paper is not published, so I'm leaving this footnote out
%\footnote{A similar algorithm, called Discounted Thompson Sampling, was introduced to deal with restless bandits \citet{}. In restless bandits, it is assumed that the rewards change for an action, even if it is not taken. This actually matches our setting well, however, well-matches our setting. Arguably,  } 
%The idea behind DTS is simple: use Bayesian updates, but increase the posterior variance after an update (before it is used as a prior) to reflect uncertainty due to knowing the environment is non-stationary. 
The algorithm maintains a posterior distribution over the expectation and variance of the reward for each action, using a Bayesian update. The posterior variance is increased after each update, to account for non-stationarity in the rewards. This ensures that, before the posterior is treated like a prior for the next update, it reflects the uncertainty in that prior information, due to the fact that the environment is non-stationary. Otherwise, the posterior would concentrate over time. The distribution over expected rewards is then used in the standard way in Thompson sampling: an estimate is sampled for each action, and the action with maximal sampled value is executed. 
 
The algorithm we use is an extension of DTS, which was only specified for Bernoulli rewards. We extended the approach to Gaussian rewards. The behavior agent assumes the rewards for each action (indexed by $a$) come from a Gaussian distribution, $\mathcal{N}(\mu_a, \sigma_a^2)$ with unknown mean $\mu_a$ and unknown variance $\sigma_a^2$. The behavior agent maintains Bayesian estimates, meaning it maintains a normal-inverse-gamma (NIG) distribution over $(\mu_a, \sigma_a^2)$, which is the conjugate prior for a Gaussian with unknown mean and unknown variance. For each action, we maintain three parameters for a NIG: $(m_a, v_a, n_a)$, where $m_a$ is an estimate of the mean, $n_a$ maintains a count and $v_a$ is an estimate of $n_a$ times the variance.\footnote{A NIG typically has four parameters. For us, the parameter typically called $\alpha$, which is used to normalize $v_a$, exactly equals $n_a/2$, so we do not maintain it explicitly. It is only used to sample the action, using inverse-gamma parameters $(\alpha, v_a) = (n_a/2, v_a)$.} To sample a mean $\mu_a$ for each action, to use for action selection, you first sample $\sigma_a^2$ from an inverse-gamma with parameters $(n_a/2, v_a)$ and then sample $\mu_a$ from $\mathcal{N}(m_a, \sigma_a^2/n_a)$. After picking action $A_t = \tilde{a}$ with the largest $\mu_a$, the behavior agent receives a reward $r$ for taking that action and updates the estimate for $\tilde{a}$. These parameters are updated with the standard Bayesian update:
\begin{align*}
v_{\tilde{a}} &\gets v_{\tilde{a}} + \frac{n_{\tilde{a}}}{n_{\tilde{a}}+1} \frac{(r - m_{\tilde{a}})^2}{2}\\
m_{\tilde{a}} &\gets \frac{n_{\tilde{a}} m_a + r}{n_{\tilde{a}}+1} \\
n_{\tilde{a}} &\gets n_{\tilde{a}}+1
\end{align*}
in the order specified above. If the problem was stationary, then this is the complete update. 

But, the problem is non-stationary, due to the fact that the rewards can change over time. Notice that the posterior variance for $\mu_a$, which is $\frac{v_a}{n_a^2/2 - n_a}$ for this NIG, would gradually shrink to zero as the count $n_a$ increases. To account for non-stationarity, the simple idea behind DTS is to increase this posterior variance after the update in such a way as to minimally impact the mean. For an NIG, this means that we would modify the count and variance parameter for the selected action using
\begin{align*}
v_{\tilde{a}} &\gets \max((1-\alpha) v_{\tilde{a}}, 10^{-2})\\
n_{\tilde{a}} &\gets (1-\alpha)n_{\tilde{a}}
\end{align*}
for decay $\alpha \in (0, 1)$, that behaves like a step-size parameter. The count is decayed by $1-\alpha$, providing an exponential decay on older samples and providing an upper bound on $n_a$ of $\sum_{i=0}^\infty (1-\alpha)^i = 1/\alpha$. The $v_{\tilde{a}}$ is similarly decayed. The max with $10^{-2}$ is to ensure $v_a$ never goes to zero, and so that the variance remains at a minimal level. The new variance of $\mu_a$, according to the NIG after decaying $v^{\text{old}}_a$ and $n^{\text{old}}_a$, is a strict increase 
\begin{align*}
\frac{v_a}{n_a^2/2 - n_a} = \frac{(1-\alpha)v^{\text{old}}_a}{(1-\alpha) ((1-\alpha) (n^{\text{old}}_a)^2/2 - n^{\text{old}}_a)} 
&=  \frac{v^{\text{old}}_a}{(1-\alpha) (n^{\text{old}}_a)^2/2 - n^{\text{old}}_a}\\
&>  \frac{v^{\text{old}}_a}{(n^{\text{old}}_a)^2/2 - n^{\text{old}}_a}
\end{align*}
where we assume $n^{\text{old}}_a > 2$. The mean value $m_a$, though, remains unchanged when we increase the posterior variance. During the standard update to $m_a$ above, however, notice that is resembles an exponential moving average because older values are multiplied by $n_a$, using $n_a m_a$. The algorithm requires an initial mean estimate $m_a = m_0$---a good choice being a large positive value for $m_0$ to encourage exploration---with the initial estimate $v_a = m_0^2$. 
  
\section{Simulating Parallel Prediction Problems} \label{sec:problems}

We consider several prediction problems corresponding to different settings of $\xi^2_{t,i}$ and $\sigma^2_{t,i}$ to define task distribution $\thetai$ in Equation \eqref{drift}. We introduce three problems, with target data simulated from those problems show in Figure \ref{fig:cumulants}.

The {\em Drifter-Distractor} problem has four targets, one for each action:
(1) two (stationary) high-variance targets as \emph{distractors} (2) a slowly \emph{drifting} target and (3) a \emph{constant} target, with $\xi^2_{t,i}$ and $\sigma^2_{t,i}$ for each of these types in Table \ref{tab:switch}. 
A distractor target is simply a noisy, stationary target: the variance is high enough such that an agent might oversample the target even after the mean estimate is accurate. This is inspired by the noisy TV problem \citep{schmidhuber2008driven}.

%The target data produced by one run is shown in Figure \ref{fig:cumulants}.
%(and four prediction tasks defined by $\thetai$ in Equation \eqref{drift}): (1) two (stationary) noisy arms with $\xi^2_{t,i}=0$ and $\sigma^2_{t,i}=1$, (2) a slowly drifting arm with $\xi^2_{t,i}=0.1$ and $\sigma^2_{t,i}=0$, and (3) a constant arm with $\xi^2_{t,i}=0$ and $\sigma^2_{t,i}=0$. We expect all arms should be sampled initially. The constant arm should be sampled the least because it should be quickly learned. The two arms with high variance, but constant zero mean value, will take many samples to determine their value, however continual sampling will not improve the MSE of their respective estimates. The drifting arm should be favored in the limit to ensure the underlying learner can continually track its moving target. 
%Figure \ref{fig:data} shows the targets generated by a single run of the four arm Drifter-Distractor problem.

The {\em Switched Drifter-Distractor} problem is similar to Drifter-Distractor except, after 50,000 time-steps the associations between the actions and the target distributions are permuted as detailed in Table \ref{tab:switch}.
To do well in this problem, the learning system must be able to respond to changes. In addition, in phase two of this problem, two targets exhibit the same drift characteristics; the behavior agent should prefer both actions equally. 
%Figure \ref{fig:20ArmJump_data} shows the targets generated by a single run of the four arm switched drifter-distractor problem.

The {\em Jumpy Eight-Action} problem is designed to require sampling different prediction tasks with different frequencies.
In this problem all the $\thetai$ drift, but at different rates and with different amounts of sampling variance as summarized in Table \ref{tab:mbandit}. The best approach is to select several actions probabilistically depending on their drift and sampling variance. 
We add an additional target type, that drifts more dramatically over time, with periodic shifts in the mean:
\begin{equation} 
\mu_{t+1,6} \leftarrow \Pi_{[-50,50]}\left(\mu_{t,6} + \zeta_t \text{Bernoulli}(0.005)\mathcal{N}(10,1.0) \right) \label{eq_jumpy}
\end{equation}
where indicator $\zeta_0 = 1$ and $\zeta_t \in \{-1,1\}$ switches signs if $|\mu_{t+1}| > 50$.  
The sample from a Bernoulli ensures the jumps are rare, but the large mean of the Gaussian makes it likely for this jump to be large when it occurs, as shown in Figure \ref{fig:cumulants}.
%the mean of the Gaussian it takes approximately 10 jumps to reach the bounds of the allowable rewards $(-50, 50)$. 
%The mean is additionally bounded between $(-50, 50)$. 
This problem simulates a prediction problem where the target changes by a large magnitude in a semi-regular pattern, but then remains constant. This could occur due to changes in the world outside the prediction learner's control and representational abilities. 
\begin{figure}[h!]
      \centering
      \includegraphics[width=1.0\linewidth]{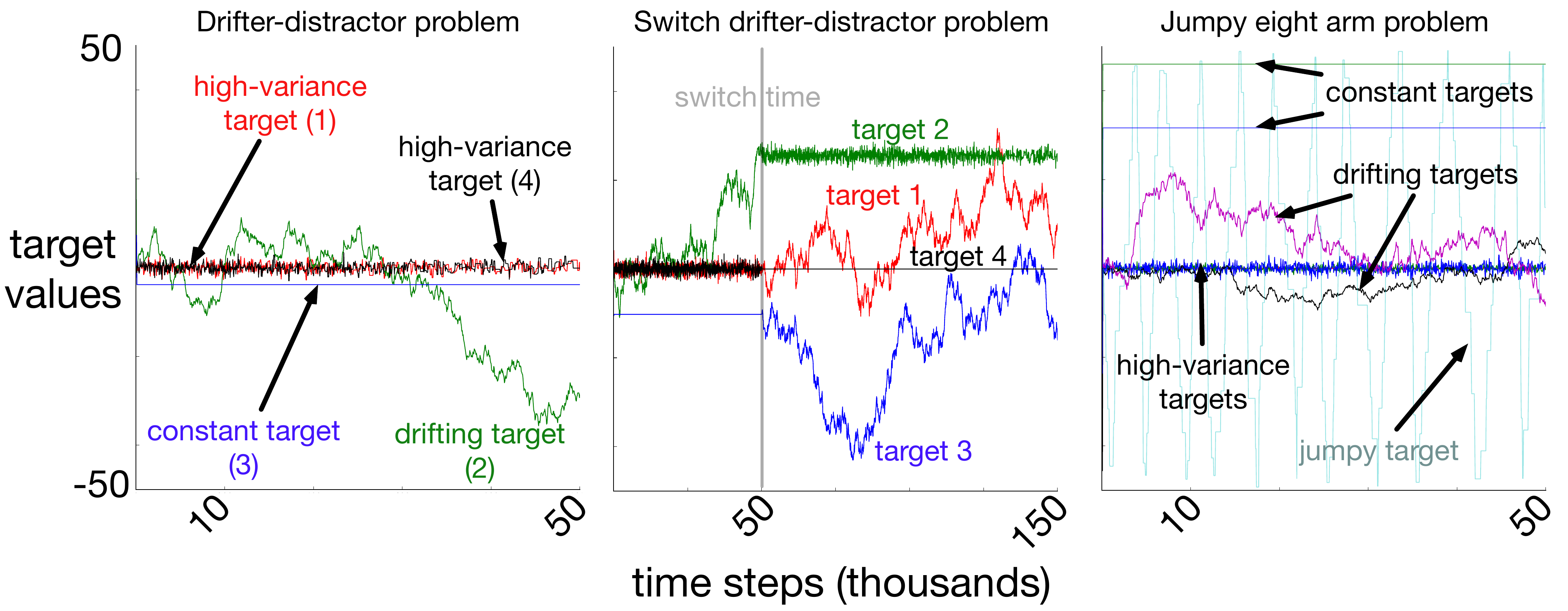}
      \caption{Each subplot shows the target data generated by one run of the problem, with Drifter-Distractor (left), Switched Drifter-Distractor (middle), and Jumpy Eight-Action Problem (right). 
      %In the Switched Drifter-Distractor problem (middle) the first phase lasts 50000 time steps. During this time one arm is drifting slowly with a variance of 0.1. Two arms have zero mean, variance 1.0---noisy arms. One arm produces constant payoffs, and is thus easy to learn. In phase two labeled ``switch time" the distributions are permuted as described in Table \ref{tab:switch}, and remained so for another 100,000 time steps.
      }
      \label{fig:cumulants}
\end{figure}
\begin{table}[h!]
\centering
\begin{tabular}{lccc}  
\toprule
 \emph{target type} & $\mu_0$ & $\sigma^2$& $\xi^2$ \\  \midrule
 constant  & uniform(-50,50) & 0 & 0  \\ 
 distractor & 0 & 1 & 0\\ 
 drifter & 0 &0& 0.1
 \\\bottomrule
\end{tabular}
    \caption{These parameters define each target distribution used in the {\em Drifter-Distractor} and the {\em Switched Drifter-Distractor} problems. The parameter $\mu_0$ specifies the initial mean of each target, $\sigma^2$ is the sampling variance, and $\xi^2$ is the drift variance.}     
 \label{tab:DD}
\end{table}
\begin{table}[h!]
\centering
\begin{tabular}{lccc}  
\toprule
\emph{target} &\emph{phase 1}& $\rightarrow$ & \emph{phase 2}\\  \midrule
 target 1  & distractor & $\rightarrow$ &drifter  \\ 
 target 2 & drifter & $\rightarrow$ &distractor\\ 
 target 3 & constant & $\rightarrow$ &drifter \\
 target 4 & distractor & $\rightarrow$ &constant 
 \\\bottomrule
\end{tabular}
    \caption{The target distributions in the {\em Switched Drifter-Distractor} change part way through the task. Phase one lasts for 50,000 time steps, then targets are permuted and remain fixed for the remainder of the experiment (another 100,000 steps). The initial parameters for each target type---constant, distractor and drifter---are the same as in the Drifter-Distractor Problem described in Table \ref{tab:DD}}
    \label{tab:switch}
\end{table}

\begin{table}[h!]
%\onecolumn
\centering
\begin{tabular}{lcccccc}  
\toprule
% &&&&\emph{task}&&\\  \midrule
 Task  & 1  & 2  & 3  & 4 & 5 &  7 \& 8  \\ 
  \midrule
 $\sigma^2$& 0.1 & 0.5 & 1.0 & 0.01 & 0.01 & 0.0\\ 
 $\xi^2$ & 0.0 & 0.0 & 0.0 & 0.01 & 0.05 & 0.0 \\\bottomrule
\end{tabular}
    \caption{Parameters defining $\thetai$ for each prediction task in the {\em Jumpy Eight-Action} problem, where $\sigma^2$ is the sampling variance and $\xi^2$ is the drift variance for Equation \eqref{drift}. \textbf{Prediction Task 6} is special, defined in Equation \eqref{eq_jumpy}.}
    \label{tab:mbandit}
\end{table}

%\end{center}
%\twocolumn
\clearpage
\section{Introspective Prediction Learners}\label{isl}

The behavior of a learning system that maximizes intrinsic rewards relies on the underlying prediction learning algorithms as well as the definition of the intrinsic reward. In this section we introduce a distinction between two categories of learners, for which behavior can be substantially different: introspective and non-introspective learners. We consider a learner to be {\bf introspective} if the algorithm can modulate its own learning without help from an external process. More concretely, an introspective learner stops updating if it cannot make progress. For example, in the case of prediction learning, an introspective learner would regulate its updates to mitigate noise in its prediction targets. A {\bf non-introspective learner}, on the other hand, will continually update regardless of learning progress. 

In this paper we consider two basic settings representing non-introspective and introspective learners, used as prediction learners in our multi-prediction problem. We use basic LMS learners with a constant step-size parameter as our non-introspective learner. With a {\em constant step-size} parameter, the LMS algorithm will always try to adapt its estimates toward the sample targets on each time step. It does not matter if the target exhibits high variance---say centered mean zero---or if the target is actually constant; the LMS algorithm will continue to adapt its estimates attempting to track each target in the online setting. Consider how a constant global step-size parameter would work on our Drifter-Distractor Problem discussed above. If the step-size parameter value is too large for the high-variance target, then the prediction learner will continually make large updates due to the sampling variance, never converging to low error. If the step-size parameter is too small for the tracking target, then the prediction learner's estimate will often lag, causing high-error. A constant global step-size parameter cannot balance the need to track the drifting targets, and the need to learn slowly on the high-variance targets. 

To create a simple introspective learner for our setting, we simply combine our LMS predictors with a step-size adaption method called Autostep. Autostep is a simple meta-learning algorithm that adapts the step-size parameter of each LMS learner over-time \citep{mahmood2012tuning}. The basic idea behind Autostep is to increase the step-size parameter when learning is progressing, and lower the step-size parameter value when learning is not progressing. It does so by keeping a trace, $h\in\mathbb{R}$, of the previous prediction errors. Roughly speaking, if the error changes sign often then the predictions are not improving and the step-size parameter value should be lowered. If the error is mostly of the same sign, then the step-size parameter value should not be reduced. Autostep has one key hyper-parameter, the meta learning-rate: this controls how quickly the algorithm changes the step-size parameter ($\alpha$). The full pseudocode, specialized to our stateless tracking tasks, is given below. Note that Autostep changes the step-size parameter with a multiplicative exponential, which allows geometric or rapid changes to the LMS learners step-size parameter.  

\begin{algorithm}
\caption{: {\em The Autostep algorithm specialized to stateless prediction} \\ $\kappa$ is the meta learning-rate parameter \\ $n$ and $h$ are scalar memory variables initialized to 1 and 0 \\$\delta$ is the prediction error and $\alpha$ (initialized to 1.0) the step-size parameter of predictor $i$}\label{auto}
\begin{algorithmic}[1]
\Procedure{Autostep}{$\delta$}
\State $n \gets \max(|\delta h|, n+\frac{1}{10000}\alpha(|\delta h| - n))$
\State $\alpha \gets \min(\alpha \exp(\kappa \frac{\delta h}{n}),~0.5)$
\State $h \gets h (1-\alpha) + \alpha \delta $
\EndProcedure
\end{algorithmic}
\end{algorithm}

To give some intuition about how Autostep changes the step-size parameter, consider what happens when we apply it to the Drifter-Distractor Problem in Figure \ref{fig:alphas}. Here we simply plot $\alpha$ over time for four LMS learners---one for each target---with each step-size parameter adapted by Autostep. We used the Gradient Bandit and Weight Change reward~\footnote{The details of the intrinisic reward function used to generate the data do not matter for the purpose of the this experiment. Nevertheless, the Weight Change reward will be defined in the next section.} to generate the behavior. The initial $\alpha$ of each LMS learner were set to one. The lines for the constant target (blue) and drifter target (green) are overlapping, and the lines for the high-variance targets (red and black) are overlapping. Autostep progressively decreases $\alpha$ for the high-variance targets, as the updates oscillate around zero. The update magnitude (or error) for the constant target goes to zero, and so Autostep stops changing $\alpha$. This makes sense: why change the $\alpha$ if the prediction is perfect. Autostep keeps the $\alpha$ for the learner estimating the drifting target high, because continual progress is possible. On each time step the LMS learner moves its estimate towards the recent sample and most of these updates are in the same direction, at least over a recent window of time. In terms of prediction performance, Autostep significantly improves tracking, enabling different update rates for different prediction learners and reducing $\alpha$ on unlearnable targets or noisy targets once learning is complete---as you will see in our main experiments below. 

\begin{figure}[h!]
      \centering
      \includegraphics[width=0.4\linewidth]{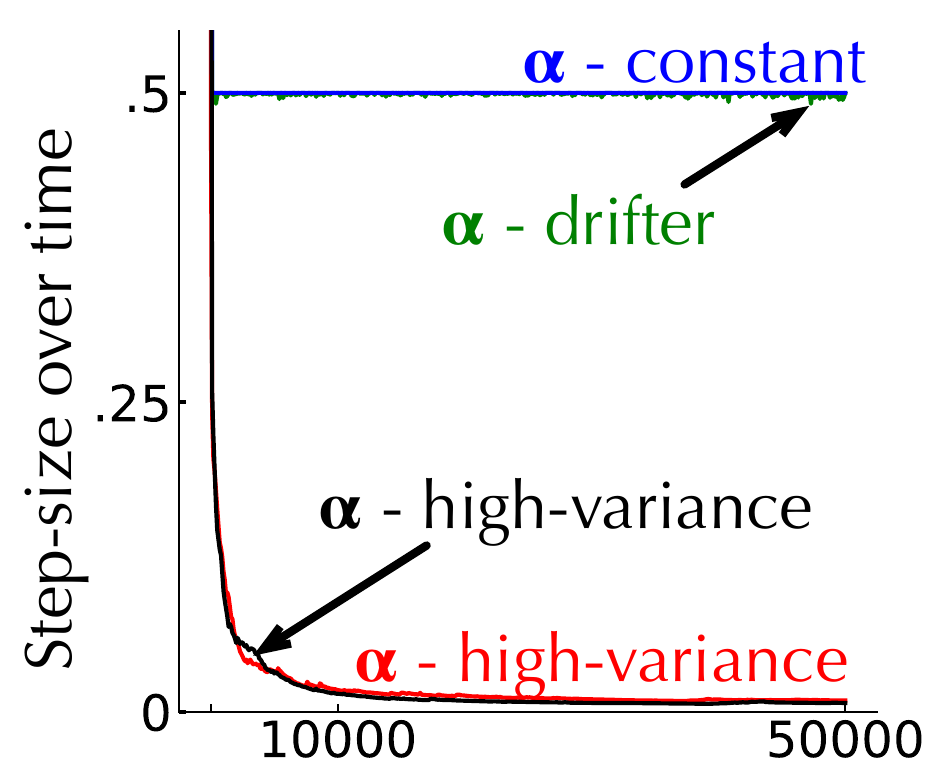}
      \caption{Sample run showing how Autostep changes the step-size parameters ($\alpha$) over time with {\em Weight Change} reward. The lines for the constant target (blue) and drifter target (green) are overlapping, and the lines for the high-variance targets (red and black) are overlapping. }
      \label{fig:alphas}
\end{figure}

We experimented with other step-size adaption methods, including AdaDelta and RMSProp, but the results were qualitatively similar. In this study we chose Autostep because (a) it was specifically designed for non-stationary, incremental, online tracking tasks like ours, (b) it uses a simple and easy to interpret update rule, and (c) there is a long literature demonstrating the practical utility of this method dating back to it's origins in the IDBD method \citep{sutton1992adapting}.

The choice of using meta-learning to obtain introspective learners not only works well in our multi-prediction tasks, but also should scale to larger tasks with function approximation in future work. 
%matha123: removed this sentence: In practice, performance difference between algorithms that employ step-size adaption and not, is stark. 
Step-size adaption methods like Adam and RMSProp can speed up training in neural networks and make learning more robust to non-stationarity. In online reinforcement learning, extensions of Autostep have be shown to improve prediction and control performance with function approximation \citep{kearney2018tidbd,guenther2020examining}. We discuss these extensions, and how our results go beyond stateless tracking at the end of the paper. 
It is worth emphasizing that introspective learners are not optimal learners and that they are in fact the most common type of agent used in deep reinforcement learning. 
%Compared with a non-introspective learner, an introspective learner might learn inaccurate predictions, and might even learn more slowly. 
The main criterion is that an introspective agent should regulate it's own updates based on an internal measure of learning progress. 

\section{Intrinsic Rewards for Multi-prediction Learning}

Many learning systems draw inspiration from the exploratory behavior of humans and animals, uncertainty reduction in active learning, and information theory---and the resulting techniques could all be packed into the suitcase of curiosity and intrinsic motivation. In an attempt to distill the key ideas and perform a meaningful yet inclusive empirical study, we consider only methods applicable to our problem formulation of multi-prediction learning.
% MARTHAC: Moved to intro, not mentioned here
%\footnote{Automatic curriculum generation %is related to our multi-prediction setting, but omitted because it is less clear how it can be used for embodied systems. Such  systems often assume that some tasks cannot be tackled until other easier tasks have been solved first. 
%%%Typically difficulty is accessed by some notion of learning progress, such as the error in predicting salient events \citep{chentanez2005intrinsically}, and model prediction error improvement \citep{oudeyer2007intrinsic}. 
%%ADAMC: perhaps we should explain how some of graves measures map on to our rewards: https://arxiv.org/pdf/1704.03003.pdf
%Curriculum has been most explored in the supervised multi-task learning case where the agent can switch between tasks at any moment \citep{graves2017automated,matiisen2017teacher}. Although we could use some ideas from curriculum learning in our bandit-like task, these approaches would not be compatible with more general sequential decision making tasks with delayed outcomes (e.g., a robot), which represents our ultimate goal.} 
Although few approaches have been suggested for off-policy multi-task reinforcement learning---approaches by \citet{chentanez2005intrinsically,white2014surprise} as notable exceptions---many existing approaches can be used to generate intrinsic rewards for multiple, independent prediction learners (see \citeauthor{barto2013intrinsic}'s (\citeyear{barto2013intrinsic}) excellent summary). We first summarize methods we evaluate in our empirical study.
The specific form of each intrinsic reward discussed below is given in Table \ref{tab:rewards}, with italicized names below corresponding to the entries in the table. We conclude by mentioning several rewards we did not evaluate, and why. 
%because they would induce uniform exploration or are defined by state-reachability, which is not appropriate for our bandit formulation and, more generally, is only indirectly connected to balancing the needs of many learners. 

Several intrinsic rewards are based on \textbf{violated expectations}, or surprise. This notion can be formalized using the prediction error itself to compute the instantaneous {\em Absolute Error} or {\em Squared Error}. We can obtain a less noisy measure of violated expectations with a windowed average of the error, which we call {\em Expected Error}. Regardless of the specific form, if the error increases, then the intrinsic reward increases encouraging further sampling for that target. Such errors can be normalized, such as was done for {\em Unexpected Demon Error} \citep{white2014surprise}, to mitigate the impact of noise in and magnitude of the targets. 

Another category of methods focus on \textbf{learning progress}, and assume that the learning system is capable of continually improving its policy or predictions. This is trivially true for approaches designed for tabular stationary problems \citep{chentanez2005intrinsically,still2012information,little2013learning,meuleau1999exploration,barto2005intrinsic,szita2008many,lopes2012exploration,schossau2016information}. 
The most well-known approaches for integrating intrinsic motivation make use of rewards based on improvements in (model) error: including {\em Error Reduction} \citep{schmidhuber1991possibility,schmidhuber2008driven}, and model {\em Error Derivative} approach \citep{oudeyer2007intrinsic}. Improvement in the value function can also be used to construct rewards, and can be computed from the {\em Positive Error Part} \citep{schembri2007evolving}, or by tracking improvement in the value function over all states \citep{barto2005intrinsic}. 
As our experiments reveal, however, intrinsic rewards requiring improvement can lead to less desireable behavior in non-stationary tracking problems.

An alternative to learning progress is to reward \textbf{amount of learning}. 
This does not penalize errors becoming worse, and instead only measures that estimates are changing: the prediction learner is still adjusting its estimates and so is still learning. {\em Bayesian Surprise} \citep{itti2006bayesian} formalizes the idea of amount of learning. For a Bayesian learner, which maintains a distribution over the weights, Bayesian Surprise corresponds to the KL-divergence between this distribution over parameters before and after the update. This KL-divergence measures how much the distribution over parameters has changed. Bayesian Surprise can be seen as a stochastic sample of Mutual Information, which is the expected KL-divergence between prior and posterior across possible observed targets. We discuss this more in Section \ref{sec_bayes}. Other measures based on Information Gain have been explored \citep{still2012information,little2013learning,achiam2017surprise,de2018curiosity,berseth2019smirl}. In the tabular case different variations of information-gain reward perform similarly to Bayesian Surprise empirically \citep{little2013learning}. 

Though derived assuming stationarity and Bayesian learners, we provide an approach to approximate Bayesian Surprise for our non-stationary setting with non-Bayesian learners. The prediction learner's main objective is to estimate an unknown mean. A Bayesian learner maintains a distribution over this unknown mean, based on the chosen distribution for the targets. A simple choice is to use a Gaussian distribution for the targets, with an unknown mean but a known variance, giving a Gaussian conjugate prior. The variance is not actually known; we maintain an estimate $v^{(y)}_{t}$ of the variance of the target $\text{Var}[C_{t,i}]$.\footnote{
Typically, a Bayesian learner would simply maintain a distribution over both the mean and variance, if they are both unknown. Our goal here, though, is to approximate Bayesian surprise for a non-Bayesian learner. Since the learner only estimates the mean, we assume that the corresponding Bayesian learning can only maintain a distribution over the mean.} The posterior uses the learner's mean estimate and the posterior variance for a Bayesian update, which is proportional to $v^{(y)}_{t}/t$. Effectively, the approximate Bayesian surprise is maintaining a posterior, but uses the learners mean estimate instead of its own. To additionally account for non-stationarity, we use the same idea behind DTS: increasing the posterior variance after each update (see Table \ref{tab:rewards} for the formula).\footnote{The count $n$ is decayed by $1-\beta$, and the variance is prevented from decreasing below $10^{-2}$. To keep the update simpler, we assume the initial variance for the prior is very large---which is in fact reasonable as it implies no prior knowledge about the unknown mean. Therefore the initial variance can be omitted in the posterior variance, as it has a negligible affect.} We make no claims that this is the ideal strategy to approximate to Bayesian Surprise for non-Bayesian learners in non-stationary problems; it is rather a reasonable simple strategy in an effort to include it as best as possible in our experiments.

We can additionally consider non-Bayesian strategies for measuring amount of learning, including those based on change in error ({\em Error Derivative}), {\em Variance of Prediction}, {\em Uncertainty Change}---how much the variance in the prediction changes---and the {\em Weight Change}, which we discuss in more depth in the next section. Note that several learning progress measures can be modified to reflect amount of learning by taking the absolute value, and so removing the focus on increase rather than change (this must be done with care as we likely do not want to reward model predictions becoming worse, for example). 

\renewcommand{\arraystretch}{1.42}
\begin{figure}
\begin{longtable}{p{\dimexpr0.45\linewidth-\tabcolsep\relax}p{\dimexpr0.55\linewidth-\tabcolsep\relax}}
%\caption{A list of intrinsic rewards investigated in this work. For each intrinsic reward, separate statistics are maintained for each learning task $i$, and only updated when task $i$ is selected by the agent. The bolded rewards are included in the results, and the unbolded rewards were tested but performed poorly. We define poor performance as follows: the agent either strongly preferred one of the the noisy arms, or had no preference over the arms---the reward function induced a uniform distribution in Drifter-Distractor problem. In Jumpy Eight-Arm problem, poor performance means the prediction error was high.}
\label{tab:rewards}\\

\textbf{Reward Name} & $R_{t,i}$ \\
\hline \midrule
\endfirsthead

%\multicolumn{2}{c}
%{{\tablename\ \thetable{}: continued from previous page}} \\ \\
% Reward name & $R_{t,i}$\\ \hline \midrule
%\endhead
%
%\multicolumn{2}{c}{\textit{Table continued on next page}} \\
%\endfoot

%\bottomrule
%\endlastfoot

%\emph{Violated Expectations} \\
%  {\bf Surprise} & $|\delta_t|$ \\[1ex]
 \makecell{\bf Error Derivative \\ \citep{oudeyer2007intrinsic}} & $ \displaystyle \left\lvert  \frac{1}{\eta+1}\sum_{j=0}^\eta \delta_{t - j-\tau,i}^2 - \frac{1}{\eta+1}\sum_{j=0}^\eta \delta_{t-j,i}^2 \right\rvert$ \\ \nopagebreak
 \multicolumn{2}{p{\dimexpr\linewidth-2\tabcolsep\relax}}{$\tau\le\eta<t$, where $\eta$ specifies the length of the window and $\tau$ the amount of overlap}\\  
 \makecell[b]{{\bf Expected Error}} & $\left\lvert \overline{\delta_{t,i}}^\beta \right\rvert$ \\
 \multicolumn{2}{p{\dimexpr\linewidth-2\tabcolsep\relax}}{$\overline{x_t}^\beta$ denotes the exponentially weighted average of $x_0$ to $x_t$ with with decay rate $1-\beta$} \\ %\addlinespace[2ex]
 {\bf Step-size Change} & $|\alpha_{t-1,i} - \alpha_{t,i}|$ \\ \addlinespace[1ex]
\makecell{{\bf Error Reduction} \\ \citep{schmidhuber1991curious}} & $ |\delta_{t-1,i}| - |\delta_{t,i}|$ \\ \nopagebreak
%\multicolumn{2}{p{\dimexpr\linewidth-2\tabcolsep\relax}}{Increase in error magnitude results in negative reward} \\ \addlinespace[2ex] 
 \makecell{{\bf Squared Error} \\ \citep{gordon2011reinforcement}} & $\delta_{t,i}^2$ \\
% \multicolumn{2}{p{\dimexpr\linewidth-2\tabcolsep\relax}}{Another instantaneous measure of surprise, that also reflects the variance in the error}\\ \addlinespace[2ex]
 \makecell{{\bf Bayesian Surprise} \\ \citep{itti2006bayesian}}   & $ \displaystyle \log_2\left(\frac{v_{t,i}}{v_{t-1,i}}\right) +\frac{v_{t-1,i} + (\that_{t-1,i} - \that_{t,i})^2}{2v_{t,i}}-\frac{1}{2}$ \\ 
  \multicolumn{2}{p{\dimexpr\linewidth-2\tabcolsep\relax}}{$v_{t,i} = \max(v^{(y)}_t/n_t, 10^{-2})$ where $n_t = (1-\beta) n_{t-1} + 1$ and $v^{(y)}_t$ is an estimate of $\text{Var}[C_{t,i}]$, using an exponential average variant of Welford's algorithm, with $v_{t,i} = (1-\beta) v_{t-1,i} + \beta (\target_{t,i} - \that_{t-1,i})(\target_{t,i} - \that_{t,i})$ for $0 < \beta < 1$}\\ \addlinespace[1ex]
\makecell{{\bf Unexpected Demon Error} \\ \citep{white2014surprise,white2015developing}} & $ \displaystyle \left\lvert \frac{\overline{\delta_{t,i}}^\beta}{\sqrt{{\text{Var}}[\delta_{i}]} + \epsilon} \right\rvert $ \\ 
\multicolumn{2}{p{\dimexpr\linewidth-2\tabcolsep\relax}}{$\epsilon$ is a small constant set to $10^{-6}$ in our experiments}\\
 {\bf Uncertainty Change} & $ \left\lvert \text{Var}[\that_{t-1,i}] -\text{Var}[\that_{t,i}] \right\rvert$ \\
{\bf Variance of Prediction} & $ \text{Var}[\that_{t,i}]$ \\
 {\bf Weight Change} & $\| w_{t,i} - w_{t-1,i} \|_1 = \alpha_t \| \that_{t,i} - \that_{t-1,i}\|_1$ \\ %\addlinespace[2ex]
 \midrule
 \makecell{{\bf Absolute Error*} \\ \citep{schmidhuber1991possibility}} & $|\delta_{t,i}|$ \\ 
% \multicolumn{2}{p{\dimexpr\linewidth-2\tabcolsep\relax}}{An instantaneous measure of surprise} \\ \addlinespace[2ex]
% Controllability reward \citep{gehring2013smart} & $\overline{|\delta_t|}^\beta$ &\\    
  %
% \midrule
%\emph{Learning Progress}  \\ \nopagebreak
 \makecell{\bf Positive Error Part* \\ \citep{mirolli2013functions}} &  $\max(\delta_{t,i}, 0)$ \\ %\addlinespace[2ex]
% \midrule
% \emph{Amount of Learning} \\
%{\bf Absolute Error Derivative*}  & $\displaystyle\left\lvert\frac{1}{\eta}\sum_{j=0}^\eta \delta_{t - (j-\tau-\eta),i}^2 - \frac{1}{\eta}\sum_{j=0}^\eta \delta_{t-(j-\eta),i}^2\right\rvert$ \\ %\addlinespace[2ex] 
 \bf Variance of Error* & $\text{Var}[\delta_{t,i}]$ \\ %\addlinespace[2ex] 
% \multicolumn{2}{p{\dimexpr\linewidth-2\tabcolsep\relax}}{$\overline{y}_t \doteq \overline{\target_{t,i}}^\beta$ is the exponential average of the predictions and an exponential average for the variance as $\overline{(\target_{t,i} -\overline{y}_{t})^2}^\beta$}\\ \addlinespace[2ex] 
 %
 \bf Uncertainty Reduction* & $ \text{Var}[\that_{t-1,i}] -\text{Var}[\that_{t,i}]$ \\ %\addlinespace[2ex] 
%  \multicolumn{2}{p{\dimexpr\linewidth-2\tabcolsep\relax}}{Rewarded both for seeking out uncertainty (increase in variance) and reducing uncertainty (decrease in variance)}\\
%\caption{A list of intrinsic rewards investigated in this work. For each intrinsic reward, separate statistics are maintained for each learning task $i$, and only updated when task $i$ is selected by the agent. }
\hline \midrule
\end{longtable}
\caption{Intrinsic rewards investigated in this work. Separate statistics are maintained for each learning task $i$, and only updated when task $i$ is selected by the behavior agent. Non-starred rewards are included in the results. Starred rewards were tested but performed poorly. }
\end{figure}

There are several strategies which we omit, because they would (1) result in uniform exploration in our pure exploration problem, (2) require particular predictions about state to drive exploration, (3) are designed for the offline batch setting, or (4) are based on statistics of the targets rather than the statistics generated by the prediction learners. Count-based approaches \citep{brafman2002r,bellemare2016unifying,sutton2018reinforcement} are completely unsupervised, rewarding visits to under-sampled states or actions---resulting in uniform exploration in our problem. Though count-based approaches are sometimes used in learning systems, they reflect novelty rather than learning progress or surprise \citep{barto2013novelty}. 

The second set of strategies we omit are methods that use a model to encourage exploration \citep{schmidhuber2008driven,chentanez2005intrinsically,stadie2015incentivizing,pathak2017curiosity,pathak2019self} such as by using Bayesian Surprise for next-state prediction \citep{houthooft2016vime}. Subgoal discovery systems \citep{kulkarni2016hierarchical,andrychowicz2017hindsight,pere2018unsupervised} define rewards to reach particular states. Empowerment and state control systems are explicitly designed to respect and use the fact that some tasks or regions of the state-space cannot be well learned. Often such systems use only unsupervised signals relating to statistics of the exploration policy, ignoring the statistics generated by the learning process itself \citep{karl2017unsupervised}. Like count-based approaches, unsupervised measures like this would induce uniform exploration in our stateless task. 

Curriculum learning---learning what task to sample next---is closely related to our multi-prediction problem. Graves {\em et al.} (2017) introduce several measures for batch curriculum learning that are related to the ideas underlying the intrinsic rewards discussed above. Most related, Prediction Gain corresponds to {\em Error Reduction}, albeit assuming a batch of data rather than an online instance. An approximation, called Gradient Prediction Gain, corresponds to the norm of the gradient; for our setting, this is the same as the {\em Absolute Error}. Several of Graves' measures require the ability to sample new batches of data, such as Supervised Prediction Gain and Target Prediction Gain. Finally, Graves {\em et al.} investigated several Complexity Gain measures for the neural networks, measuring KL divergence between the posterior and a learned prior. The prior is updated towards the previous posterior, and so the resulting KL is related to Bayesian surprise. The KL itself, though, is not used: rather, the gain in complexity is measured by looking at the difference in two KLs, before and after an update. These approaches require Bayesian learners with a separate prior distribution to be learned just to measure the complexity. The most simple and computationally feasible of these is L2 Gain, which is simply the difference in $\ell_2$ norm of the weights before and after and update: $\| w_{t,i} \|_2^2 - \| w_{t-1,i} \|_2^2$. This rewards the learning system for making the weights smaller, and performed worse than random for curriculum learning \citep{graves2017automated}.

Finally, we do not test intrinsic rewards based only on targets, such as variance of the target. To see why, consider a behavior that estimates the variance for a constant target, and quickly determines it only needs to select that action a few times. The prediction learner, however, could have a poor estimate of this target, and may need many more samples to converge to the true value. Separately estimating \emph{possible} amount of learning from \emph{actual} amount of learning has clear limitations. 
Note that in the stationary bandit setting, with a simple sample average learner, the variance of the prediction target provides a measure of uncertainty for the learned prediction \citep{audibert2007variance,garivier2011upper,antos2008active}, and has been successfully applied in education applications \citep{liu2014trading,clement2013multi}. When generalizing to other learners and problem settings, however, variance of the target will no longer obviously reflect uncertainty in the predictions. We therefore instead directly test intrinsic rewards that measure uncertainty in predictions, including Uncertainty Change and Variance of Prediction.

%\section{Approximating Bayesian Surprise}\label{sec_bayes}
\section{Optimal Behavior for Multi-prediction Learning, and Approximations}\label{sec_bayes}

One natural question given this variety of intrinsic rewards, is if there is an optimal approach. In some settings, there is in fact a clear answer. In a stationary, stateless problem where the goal is to estimate means of multiple targets, it has been shown that the behavior agent should take actions proportional to the variance of each target to obtain minimal regret \citep{antos2008active}. For a stationary setting, with state, an optimal approach would be to take actions to maximize Information Gain---the reduction in entropy after an update---across learners \citep{orseau2013universal}. We therefore use Information Gain as the criterion to measure optimal action selection. In this section, we describe how to maximize Information Gain in an ideal case, and how to approximate it otherwise. 
The goal of this section is to provide intuition and motivation, as we do not yet have theoretical claims about the approximation strategies. %We hope instead for this discussion to help lead to such a formalization. 

We first show that Information Gain is maximized when maximizing expected Bayesian surprise, assuming Bayesian learners. A Bayesian learner updates weights $w$ for a parameterized distribution $p_w$ on the parameters $\theta$ needed to make the prediction $\that$. The parameters can be seen as a random variable, $\Theta$, with distribution $p_w$. The goal is to narrow this distribution around the true parameters $\theta^*$ that generate $\target$---that is, $c$ is sampled from $p(c | \theta^*)$. After seeing each new sample, the posterior distribution over parameters is computed using the previous distribution $p_{w_t}(\theta)$ and the new sample, $\target_t$, using the update 
\begin{equation*}
p_{w_{t+1}}(\theta) \defeq p_{w_{t}}(\theta | \target_t) = \frac{p(\target_t | \theta) p_{w_t}(\theta)}{ p_{w_t}(\target_t)}
.
\end{equation*}
The term in the denominator is dependent on $w_t$ because $p_{w_t}(\target_t) = \int p(\target_t | \theta) p_{w_t}(\theta) d\theta$. 
A \emph{Bayesian learner} is one that uses exact updates to obtain the posterior. We assume the prior is appropriately specified so that $p_{w_t}(\target) \neq 0$, and so $p(\theta | \target_1, \ldots, \target_n)$ has non-zero support as $n \rightarrow \infty$ almost surely for any stochastic sequence $\target_1, \ldots, \target_n$. 
%Such a learner is optimal in that it uses new data maximally efficiently: it converges to the true parameters with the optimal convergence rate \citep{}. 

\emph{Bayesian surprise} is defined as the KL divergence between the distribution over parameters before and after an update \citep{itti2006bayesian}
\begin{equation}
\text{KL}(p_{w_{t+1}} || p_{w_{t}}) = \int p_{w_{t+1}}(\theta) \log\frac{p_{w_{t+1}}(\theta)}{p_{w_{t}}(\theta)} d\theta
.
\label{eq_kl_weights}
\end{equation}
The Bayesian surprise is high when taking an action that produces a stochastic outcome $\target_t$ that results in a large change in the prior and posterior distributions over parameters. The expectation of the KL-divergence over stochastic outcomes, with a Bayesian learner, corresponds to the Information Gain. This result is well-known, but we explicitly show it in the following theorem for completeness. Notice that Information Gain defined in Equation \eqref{eq_ig} is relative to the model class of our learner, rather than some objective notion of information content. 

\begin{theorem} 
Assume targets $\targetr \in \mathcal{C}$ are distributed according to true parameters $\theta^*$, with density $p_{\theta^*}: \mathcal{\targetr} \rightarrow \infty$ and event space . 
For a Bayesian learner, that maintains distribution $p_{w_t}$ over parameters $\Theta$, the mutual information (also called the Information Gain) $I(\targetr, \Theta)$ equals the expected KL-divergence between the posterior and prior
  \begin{equation}
I(\targetr, \Theta) = \mathbb{E}[\text{KL}(p_{w_{t+1}} || p_{w_{t}})] \label{eq_ig}
\end{equation}
where the expectation is over stochastic outcomes $\targetr$ that produce $w_{t+1}$ from $w_t$. 
%Note that $I(\targetr, \Theta)$, the mutual information between $\targetr$ and $\Theta$, is the also called the Information Gain. 
\end{theorem}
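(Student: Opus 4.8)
The plan is to recognize this as the standard identity expressing mutual information as an expected KL divergence between posterior and prior, and to verify that it carries through with the Bayesian update stated just above the theorem. The mutual information between the target $\targetr$ and the parameter random variable $\Theta \sim p_{w_t}$ is, by definition, computed under the joint density $p(\target, \theta) = p(\target \mid \theta)\, p_{w_t}(\theta)$ induced by the learner's current beliefs. First I would write
\begin{equation*}
I(\targetr, \Theta) = \int\!\!\int p(\target \mid \theta)\, p_{w_t}(\theta) \log \frac{p(\target \mid \theta)\, p_{w_t}(\theta)}{p_{w_t}(\target)\, p_{w_t}(\theta)}\, d\theta\, d\target,
\end{equation*}
where the marginal of the target is the prior predictive $p_{w_t}(\target) = \int p(\target \mid \theta)\, p_{w_t}(\theta)\, d\theta$, which is exactly the denominator appearing in the Bayesian update given before the theorem.

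Next I would apply Bayes' rule in reverse, using $p(\target \mid \theta)\, p_{w_t}(\theta) = p_{w_{t+1}}(\theta)\, p_{w_t}(\target)$ with $p_{w_{t+1}}(\theta) = p_{w_t}(\theta \mid \target)$ the posterior. Substituting, the log-ratio simplifies to $\log\bigl(p_{w_{t+1}}(\theta)/p_{w_t}(\theta)\bigr)$ while the weighting factor $p(\target \mid \theta)\, p_{w_t}(\theta)$ becomes $p_{w_{t+1}}(\theta)\, p_{w_t}(\target)$. Pulling the integration over $\target$ to the outside then gives
\begin{equation*}
I(\targetr, \Theta) = \int p_{w_t}(\target) \left[ \int p_{w_{t+1}}(\theta) \log \frac{p_{w_{t+1}}(\theta)}{p_{w_t}(\theta)}\, d\theta \right] d\target,
\end{equation*}
and I would recognize the bracketed inner integral as precisely the Bayesian surprise $\text{KL}(p_{w_{t+1}} \,\|\, p_{w_t})$ from Equation \eqref{eq_kl_weights}. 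The outer integral against $p_{w_t}(\target)$ is the expectation over the stochastic outcomes that carry $w_t$ to $w_{t+1}$, yielding $\mathbb{E}[\text{KL}(p_{w_{t+1}} \,\|\, p_{w_t})]$ as claimed in Equation \eqref{eq_ig}.

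The main thing to get right, and the only real subtlety, is the distribution under which the expectation is taken. For the equality to hold exactly, the expectation over $\targetr$ must be against the model's prior predictive $p_{w_t}(\target)$, not the true data-generating density $p_{\theta^*}$. This is consistent with the remark preceding the theorem that the information gain is measured relative to the learner's model class rather than any objective notion of information content, and I would state it explicitly so that the identification of the outer integral with ``the expectation over stochastic outcomes $\targetr$ that produce $w_{t+1}$ from $w_t$'' is unambiguous. The remaining steps are routine: an application of Fubini to exchange the order of integration, justified by the standing assumption that the prior is specified so the posterior retains full support, and the definition of KL divergence.
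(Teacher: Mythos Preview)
Your derivation is the standard one and is correct. The paper follows essentially the same chain of manipulations---expand the mutual information, factor the joint, cancel, pull the target integral outside, and identify the inner integral as the KL of Equation~\eqref{eq_kl_weights}---but with one substantive difference: the paper takes the outer expectation with respect to the true data-generating density $p_{\theta^*}(\target)$ rather than the prior predictive $p_{w_t}(\target)$. Concretely, the paper writes the joint as $p_{\theta^*,w_t}(\target,\theta)=p_{\theta^*}(\target)\,p_{w_t}(\theta\mid\target)$ and ends at $\int p_{\theta^*}(\target)\,\text{KL}(p_{w_{t+1}}\|p_{w_t})\,d\target$.

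Your caveat about which distribution the expectation is taken under is therefore exactly the point on which you and the paper diverge. Your version is the mathematically clean identity: with the joint $p(\target\mid\theta)\,p_{w_t}(\theta)$, both marginals are what they should be and the equality is exact. Under the paper's joint, the $\theta$-marginal is $\int p_{\theta^*}(\target)\,p_{w_t}(\theta\mid\target)\,d\target$, which in general is \emph{not} $p_{w_t}(\theta)$ unless $p_{\theta^*}=p_{w_t}$; so the first line of the paper's display is not literally the mutual information of the joint it writes down. The paper's reading is consistent with its remark that the information gain is ``relative to the model class'' while the outcomes are still drawn from the true distribution, but your explicit identification of the expectation as being under $p_{w_t}(\target)$ is the one that makes the identity hold without qualification.
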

\begin{proof}
\begin{align*}
I(\targetr, \Theta) &= \int \int p_{\theta^*, w_{t}}(\target, \theta) \log \frac{p_{\theta^*, w_{t}}(\target,\theta)}{p_{\theta^*}(\target) p_{w_{t}}(\theta)} d\target d\theta && \triangleright \ p_{\theta^*, w_{t}}(c, \theta) = p_{\theta^*}(\target) p_{w_{t}}(\theta | \target) \\
&= \int \int p_{\theta^*}(\target) p_{w_{t}}(\theta | \target) \log \frac{p_{w_{t}}(\theta | \target)}{p_{w_{t}}(\theta)} d\target d\theta && \triangleright \ p_{\theta^*}(\target) \text{ cancels in the fraction}\\
&= \int  p_{\theta^*}(\target) \left[\int p_{w_{t}}(\theta | \target)  \log \frac{p_{w_{t}}(\theta | \target)}{p_{w_{t}}(\theta)} d\theta \right] d\target && \triangleright \ p_{\theta^*}(\target) \text{ does not involve $\theta$}\\
&= \int  p_{\theta^*}(\target) \left[ \int  p_{w_{t+1}}(\theta) \log \frac{p_{w_{t+1}}(\theta)}{p_{w_{t}}(\theta)} d\theta \right] d\target && \triangleright  \ p_{w_{t+1}}(\theta) = p_{w_{t}}(\theta | \target) \\
&= \int p_{\theta^*}(\target) \text{KL}(p_{w_{t+1}} || p_{w_{t}}) d\target && \triangleright  \ \text{Equation \eqref{eq_kl_weights}}\\
&= \mathbb{E}[\text{KL}(p_{w_{t+1}} || p_{w_{t}})]
\end{align*}
The weights $w_{t+1}$ are dependent on the observed $\target$. By definition, this integral gives an expected KL, across possible observed $\target$. 
%because by definition $p_{w_{t+1}}(\theta) = p_{w_{t}}(\theta | y)$ is the posterior after observing outcome $y$. 
\end{proof}

To make this more concrete, consider Bayesian surprise for a Bayesian learner with a simple Gaussian distribution over parameters. 
% which may not be the true distribution. 
For our simplified problem setting, the weights for the Bayesian learner are $w_t = (\mu_t, \sigma_t^2)$ for the Gaussian distribution over the parameters $\theta$, which in this case is the current estimate of the mean of the target, $\that_t$. 
%Despite the distribution misspecification, the Bayesian learner will still maximally incorporate new information  
The Bayesian surprise is
\begin{equation*}
\text{KL}(p_{w_{t+1}} || p_{w_{t}}) = \log \frac{\sigma_{t+1}^2}{\sigma_t^2} +  \frac{\sigma_{t}^2 + (\mu_t - \mu_{t+1})^2}{2 \sigma_{t+1}^2} - \frac{1}{2}
.
\end{equation*}
We can make this even simpler if we consider the variance $\sigma^2$ to be fixed, rather than learned. The Bayesian surprise then simplifies to
%However, the learner itself now need not be Bayesian; instead, the weights $w_t$ can be used as the mean of the distribution. 
%For this setting with a fixed variance that is not learned, the Bayesian surprise is
 %
\begin{align}
\text{KL}(p_{w_{t+1}} || p_{w_{t}}) 
&= \log \frac{\sigma^2}{\sigma^2} + \frac{\sigma^2 + (w_t - w_{t+1})^2}{2 \sigma^2} - \frac{1}{2} \nonumber\\
&= 0 + \frac{\sigma^2}{2 \sigma^2} + \frac{(w_t - w_{t+1})^2}{2 \sigma^2} - \frac{1}{2} \nonumber\\
&=  \frac{(w_t - w_{t+1})^2}{2 \sigma^2} \label{eq_kl_gauss}
.
\end{align}
This value is maximized when the squared change in weights $(w_t - w_{t+1})^2$ is maximal. Therefore, though Bayesian surprise in general may be expensive to compute, for some settings it is as straightforward as measuring the change in weights. 
%If instead $p_{w_{t}}$ is a Laplace distribution with mean $w_t$ and fixed variance $2b^2$, then the KL-divergence is  $KL(p_{w_{t+1}} || p_{w_t}) = | w_t - w_{t+1} | / b$.

Additionally, we can also consider approximations to Bayesian surprise for non-Bayesian learners. A non-Bayesian learner typically estimates the parameters $\theta_t$ directly, such as by maximizing likelihood or taking the maximum a posteriori (MAP) estimate
\begin{equation*}
\theta_{t+1} \defeq \arg\max_{\theta} p(\theta | \target_1, \ldots, \target_t)
.
\end{equation*}
Now instead of maintaining the full posterior $p(\theta | \target_1, \ldots, \target_t)$ as $p_{w_{t+1}}$, the prediction learner need only learn $\theta_{t+1}$ directly. Because $\theta_{t+1}$ is the mode of the posterior, for many distributions $\theta_{t+1}$ will actually equal a component of $w_{t+1}$. For the Gaussian example above with a learned variance, $\theta_{t+1}$ equals the first component of $w_{t+1}$, the mean $\mu_{t+1}$. For a fixed variance, $\theta_{t+1}$ exactly equals $w_{t+1}$. Therefore, the non-Bayesian learner would have the exact same Information Gain, measured by the Bayesian surprise in \eqref{eq_kl_gauss}. Note that MAP makes the same probabilistic assumptions for the posterior as the Bayesian learner, but is considered non-Bayesian here because it does not maintain the parameters of this posterior. 

This direct connection, for Bayesian and non-Bayesian learners, only exists for a limited set of distributions. One such class is the natural exponential family distribution over the parameters. Examples include the Gaussian with fixed variance and mean $w_{t}$ and the Gamma distribution with a fixed shape parameter and scale parameter $w_t$.  Each natural exponential family has the property that the KL-divergence between two distributions with parameters $w_t$ and $w_{t+1}$ corresponds to a (Bregman) divergence directly on the parameters \citep{banerjee2005clustering}. For a Gaussian, this divergence is the squared error normalized by the variance, as above in Equation \eqref{eq_kl_gauss}. Another distribution that has this connection is a Laplace distribution with mean $w_t$ and fixed variance $2b^2$. Then the KL-divergence is  $KL(p_{w_{t+1}} || p_{w_t}) = | w_t - w_{t+1} | / b$. 

This connection is limited to certain posterior distributions, but is true for general problem settings, even the general reinforcement learning setting. The distributions before and after an update, $p_{w_t}$ and $p_{w_{t+1}}$ respectively, are over the parameters of the prediction learner. These parameters are more complex in settings with state---such as parameters to a neural network---but we can nonetheless consider exponential family distributions on those parameters.

%To see the connection, assume a Bayesian prediction learner that estimates an exponential family distribution over parameters. 
%In this case, the weights $w_t$ parameterize this exponential family $p_{w_t}$. One example is a Gaussian distribution with mean $w_t$, and fixed variance. Another example is a Gamma distribution, with $w_t$ the scale parameter with a fixed shape parameter. 
%%In this setting, this actually corresponds to a distribution over predictions, but more generally when learning parameterized functions, it corresponds to a distribution over the parameters rather than the predictions. 
%Each natural exponential family has the property that the KL-divergence between two distributions with parameters $w_t$ and $w_{t+1}$ corresponds to a (Bregman) divergence. This connection persists, even when considering the general reinforcement learning setting. The distributions before and after an update, $p_{w_t}$ and $p_{w_{t+1}}$ respectively, are over the parameters of the prediction learner. These parameters are more complex in settings with state---such as parameters to a neural network---but we can nonetheless consider exponential family distributions on those parameters. $D$ directly on the parameters \citep{banerjee2005clustering}. For a Gaussian, this divergence is the squared error normalized by the variance, as above in Equation \eqref{eq_kl_gauss}.

This discussion motivates a simple proposal to approximate Bayesian surprise and Bayesian learners for a general setting with non-Bayesian learners: using weight change with \emph{introspective} learners. 
An introspective learner is not a precise definition, but rather a scale. A perfectly introspective learner would be a Bayesian learner, or in some cases the corresponding MAP learner. A perfectly non-introspective learner could be a random update. The more closely the learner approximates the weights to the perfectly introspective learner, the better its solution and the better the Bayesian surprise reflects the Information Gain. Further, because the underlying distribution may not be known, we use the change in weights as an approximation.

%The learners themselves may not use Bayesian updates, further introducing approximation from the optimal setting of Bayesian surprise with consistent Bayesian learners. Instead, we aim for \emph{competent learners} that are approximately as effective as a Bayesian learner. This is not a precise definition, but rather a scale: an exact Bayesian learner would be perfectly competent, whereas a random (non-learning) update can be considered perfectly non-competent. An update in-between produces weights that result in approximate predictions with some level of accuracy comparable to the prediction from a Bayesian update. 

For concreteness, consider the following learning system. Each prediction learner is augmented with a procedure to automatically adapt the step-size parameter $\alpha_{k,i}$, based on the errors produced over time ($\delta_{i,0:k}$). In this paper we use the Autostep algorithm \citep{mahmood2012tuning} described in Section \ref{isl}. Recall, that the Autostep algorithm automatically reduces $\alpha_{k,i}$ towards zero if the target is unlearnable, increase $\alpha_{k,i}$ when successive errors have the same sign, and does not change $\alpha_{k,i}$ if the error is zero. We call a learner with a fixed step-size parameter, on the other hand, non-introspective, because the learner will forever chase the noise. The weight change for such a learner would not be reflective of Information Gain, reflecting instead only the inadequacy of the learner. A learner equipped with Autostep, on the other hand, like a Bayesian or MAP learner, will stop learning once new samples provide no new information. 

This proposal reflects a simply reductionist philosophy: there should be an explicit separation in the role of the behavior agent and the role of the prediction learners. The behavior agent should balance data generation amongst parallel prediction learners. The prediction learner's primary responsibility is to estimate their target accurately. If the behavior agent trusts that the prediction learners are using the data appropriately, then the learning system can make use of intrinsic rewards based solely on the prediction learner's parameters, such as the change in the weights. The alternative is to assume that the intrinsic rewards must be computed to overcome poor learning. This approach would require the learning system to recognize when a prediction learner is non-introspective, and decrease the reward associated with that learner. If the learning system can measure this, though, then presumably so too can the prediction learner---they are after all part of the same system. The learner should then be able to use the same measure to adjust its own learning.

In this work, we define the change in weights using the $\ell_1$ norm, 
\begin{equation}
\text{Weight Change}(w_t, w_{t+1}) \doteq \| w_{t} - w_{t+1} \|_1
.
\end{equation}
In our setting, the Weight Change is simply Absolute Error scaled by the step-size parameter, emphasizing the role that learner capability plays in ensuring an effective reward.
\begin{equation}
\| w_{t} - w_{t+1} \|_1 = \alpha_{t,i} \| \that_{t,i} - \that_{t-1,i}\|_1 = \alpha_{t,i} | \delta_{t,i} |
\end{equation}

\textbf{Remark:} The above discussion applies to the non-stationary setting, by treating the non-stationarity as partial observability. We can assume that the world is stationary, driven by some hidden state, but that it appears non-stationary to the learning system because it only observes partial information. If a Bayesian prediction learner had the correct model class, it could still maximize Information Gain. For example, the prediction learner could know there is a hidden parameter $\xi$ defining the rate of drift for the mean of the distribution over $C$. It could then maintain a posterior over both $\xi$ and the mean and covariance of $C$, based on observed data. As above, it would be unlikely for the prediction learner to have this true model class. It remains an important open theoretical question how such approximations influence the behavior agent's ability to maximize Information Gain.

\newcommand{\figwfour}{0.24\textwidth}
\newcommand{\figwthree}{0.33\textwidth}
\newcommand{\figwtwo}{0.48\textwidth}

\section{Experimental Setup}

We conducted five experiments, across the three problems described in Section \ref{sec:problems}. The goal of these experiments is to (a) assess the utility of different intrinsic rewards in our testbed with many different target distributions, and (b) to understand how the ability of the underlying prediction learners---introspective or not---impact the results. 

%To gain these insights requires extensive experimentation and analysis. 
Each component of the learning system is modulated by several hyper-parameters that interact in different ways. The behavior agent (gradient bandit) makes use of a step-size parameter $\alpha$ and the step-size parameter of the average reward estimate $\alpha_r$. For non-introspective learners, each prediction learner makes use of a (shared) step-size parameter $\alpha_p$, with $\alpha_i = \alpha_p$ for all $i$. For introspective learners, the step-size adaption method Autostep uses a meta learning-rate parameter $\kappa$. Finally, many of the intrinsic rewards have their own tunable parameters. For example, \emph{UDE} uses an exponential average of recent errors which requires a smoothing parameter $\beta$. Oudeyer's \emph{Error Derivative} reward makes use of two windows of recent errors determined by scalar parameters $\eta$ and $\tau$. In most cases the key parameters of the prediction learner, behavior agent, and intrinsic reward correspond to different timescales---slower or faster---and so required noticeably different values. 
%In addition, for introspective learners, we have the additional complexity introduced by the step-size adaption method Autostep whose meta step-size parameter $\kappa$ corresponding to yet another time scale. 
Because these choices have such a big impact on behavior, as we show, we needed extensive sweeps and analysis to gain insight into the methods. This warranted investigating each result deeply, to communicate a nuanced picture.  

We extensively sweep all the key performance parameters of every learner and reward function, to ensure an accurate characterization of performance. 
Table \ref{params} lists all the parameter settings we tested. In some cases we report results for several parameters to gain more specific insights into the behavior induced by an intrinsic reward. When providing overall results, we report the best performance of the learning system for each intrinsic reward, using the best performing parameters across all parameters tested. The best performing parameters were those that achieved the lowest total RMSE (defined in Equation \ref{eq:MSE_arms}) over the duration of the experiment, averaged over 200 independent runs. All told we tested over 50,000 parameter configurations, 200 times each across our three experiments.

\begin{table}[h!]
\caption{\label{params}The hyper-parameter configurations investigated across all three experiments. There was a total of 50,000 combinations of intrinsic reward function and hyper-parameter setting, with each of these evaluated using 200 independent runs.}
\begin{tcolorbox}[tab2,tabularx={X||X|},title=Hyper-parameters,boxrule=0.5pt]
{\bf Behavior agent}  & {\bf Step-size parameter}        $\alpha\in$\{$2^{-8}$, $2^{-7}$, ... , $2^{-2}$\} \\    
{(Gradient Bandit)} &   {\bf Average Reward rate}              \\
& $\alpha_r\in$\{$10^{-5}$, $10^{-4}$, ... , $10^{-1}$\} \\\hline\hline
{\bf Behavior agent}  & {\bf Step-size parameter}        $\alpha\in$\{$2^{-8}$, $2^{-7}$, ... , $2^{-2}$\} \\    
(Dynamic Thompson Sampling) &   {\bf Initial mean estimate}~$m_a = 100$              \\\hline\hline
{\bf Non-introspective prediction learner} & {\bf Step-size parameter}               \\
(LMS with a constant step-size parameter) &  $\alpha_p\in$\{$2^{-7}$, $2^{-6}$, ... , $2^{-2}$\} , \ \ with $\alpha_i = \alpha_p$              \\\hline\hline
{\bf Introspective prediction learner} & {\bf Meta learning-rate}       \\
(LMS with Autostep) &  $\kappa\in$\{$0$, $0.01$, $0.05$, $0.1$\}     \\
   & {\bf Initial step-size} $\alpha_{0,i} = 1.0$       \\\hline\hline
{\bf Smoothing parameter}  &     \\
(Variance of Prediction, Uncertainty Change, Bayesian Surprise, UDE, Expected Error) & $\beta\in$\{$10^{-6}$, $10^{-5}$, ... , $10^{-1}$\}     \\\hline
{\bf Bayesian Surprise Tolerance}  & $\epsilon\in$\{$10^{-5}$, $10^{-4}$, $10^{-3}$\}      \\\hline
{\bf Error Derivative Window}   & $\eta\in$\{$1$, $5$, $10$, $25$, $100$, $1000$\}  \\  
(all combinations s.t. $\eta>\tau$)  & $\tau\in$\{$1$, $5$, $10$, $25$, $100$\}
\end{tcolorbox}
\end{table}

When reporting results under the best parameters, we jointly tune hyper-parameters for the intrinsic reward and the prediction learners. These hyper-parameters are all part of the agent; the best hyper-parameters reflect the best the agent could do for that intrinsic reward and prediction learner. Even under ideal circumstances, many intrinsic rewards can fail to induce the desired behavior, highlighting issues with the intrinsic rewards or with the use of non-introspective learners. 

Nonetheless, reporting the best parameters does not provide the full picture, and though we attempt to highlight certain key results for other hyper-parameters, we cannot and do not attempt to show the full picture. Ideally, we could slice down further, to provide this nuance. At the extreme, this could consist of showing all possible intrinsic rewards---{\em Error Derivative} with the smallest step-size parameter, {\em Error Derivative} with a the largest step-size parameter, and so on---for each of the many combinations of prediction learner and behavior agent (with different hyper-parameter settings). This is infeasible\footnote{To enable the reader to do this on their own, we have provided a python notebook to explore the full set of data, at \url{http://jair.adaptingbehavior.com} .}, and part of the role of the empirical analysis is to summarize key outcomes. We have provided what we believe are the key slices: different intrinsic rewards (under their ideal circumstances) with two types of prediction learners (non-introspective and introspective). When intrinsic rewards fail under idealized scenarios, this reflects how they might perform across hyper-parameter settings. When intrinsic rewards result in reasonable behavior, we then dig deeper to understand if this was an accident of idealized hyper-parameter tuning, or more generally a characteristic of the intrinsic reward.

We follow the same basic template in the presentation of the results. First we report the behavior of the best configuration for each reward function using non-introspective learners ({\em i.e.,} without Autostep). For a given reward, the behavior is depicted by the probability of selecting each action over time according to the behavior agent's policy. This gives us insight into how each reward drives action selection over time. We then investigate the RMSE over time, plotting both the error of each predictor and the average. Finally, in each experiment we investigate the performance sensitivity of several intrinsic rewards with respect to the tunable parameters. This provides more detailed understanding of how the parameters interact and helps explain when some intrinsic rewards produce unexpected behaviors.        

\section{Experiment One: Drifter-Distractor}

%In our first experiment we investigate combining non-introspective learners with several different intrinsic reward functions. 
We start with our simplest task: the Drifter-Distractor problem. This problem has 1 constant target, 2 high-variance targets and 1 drifting target (see Figure \ref{fig:cumulants} in Section \ref{sec:problems}). This four-action problem highlights some key features we want out of our learned behaviors. The behavior should not be continually distracted by noisy or unlearnable things (the two high-variance targets). It should be able to quickly learn about simple targets (the constant target), and ultimately focus action selection on targets that result in continual learning progress (the drifting target). We test if such a behavior is learned, with non-introspective and introspective learners, under different intrinsic rewards.  

Let us first be more precise about how the behavior should look in this problem. The behavior should try out all the actions in the beginning. The prediction learner associated with the constant target should quickly reduce its error and the behavior should stop selecting the corresponding action. The prediction learners associated with the high-variance targets will take longer to learn due to the target variance, but eventually should converge to the correct prediction of zero. Once that happens the behavior should stop choosing the actions corresponding to the high-variance targets. Finally, the prediction learner corresponding to the drifting target cannot ever reduce its error to zero: unending learning progress is possible. Therefore the behavior should eventually settle on selecting the action corresponding to the drifting target the majority of the time.

There are a few common degenerate behaviors that are possible in this problem. The first is over-selecting the actions corresponding to the high-variance targets. Every time the behavior takes one of these actions, the corresponding non-introspective prediction learner updates toward a random target and so its predictions can oscillate around the optimum. Over short windows of time, the variance of the drifting target is smaller than the high-variance targets; within that window, the errors generated by the high-variance targets will appear larger. This results in the behavior frequently selecting the high-variance targets, occasionally selecting the drifting target and cycling between the three. Any methods that rely on prediction learners to not chase noise should exhibit this degenerate behavior, such as \emph{Weight Change}. With non-introspective learners, this can only be prevented if the intrinsic reward can somehow between distinguish high-variance and drifting targets. 

The other common degenerate behavior is selecting all actions nearly equally. This strategy does not result in the lowest possible RMSE, but it does result in lower RMSE than other behaviors such as mostly selecting the actions corresponding to the high-variance targets. The uniform strategy emerges because there is no setting of the parameters of the intrinsic reward to force the behavior to follow the ideal strategy described above. 
%However, in some cases the intrinsic reward's parameters can be set in such a way that all actions generate similar reward and thus induce uniform action selection. 

 \begin{figure}[t]
      \centering
      \includegraphics[width=1.0\linewidth]{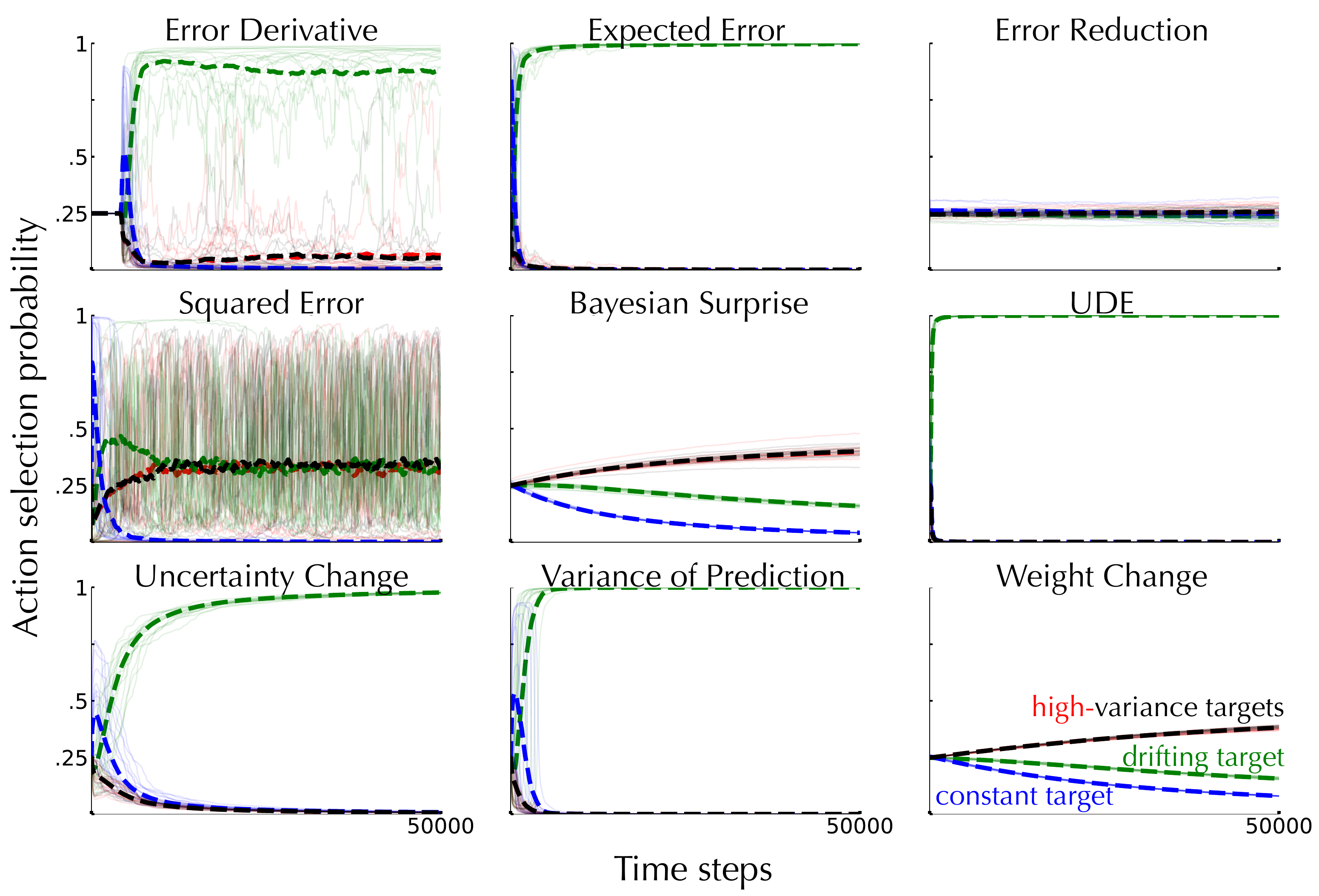}
      \caption{{\textbf{Behavior} in the \textbf{Drifter-Distractor} problem, with \textbf{Non-Introspective Learners}.} 
      %One arm is drifting slowly with a variance of 0.1. Two arms have zero mean variance 1.0---noisy arms. One arm produces constant payoffs, thus easy to learn. 
      Each subplot corresponds to the behavior of the Gradient Bandit with a different intrinsic reward. Each line depicts the action selection probabilities learned by the behavior agent, over 50000 steps. The bold dashed lines show the mean probability of each action, averaged over 200 repetitions of the experiment. The light stroke solid lines show the probabilities computed by the Gradient Bandit for each action on individual runs---we only show a small subset (15 per action) of runs for readability. The \textcolor[rgb]{0,.5,0}{\bf green line} corresponds to the drifting target, the \textcolor[rgb]{0,0,1}{\bf blue line} corresponds to the constant target, and the \textcolor[rgb]{1,0,0}{\bf red} and {\bf black lines} correspond to the high-variance targets. Intrinsic rewards based on variance estimates and averaging errors over time induce sensible action selection. 
%      With a constant step size, and so non-introspective prediction learners, the intrinsic rewards based on amount of learning erroneously focus on the noisy targets (red and black), rather than only the drifting target (green). Variance of Prediction and change in uncertainty---which both reflect amount of learning---correctly choose the drifting target. This, however, was an artifact of the parameter sweep, which chose very long running averages, enabling the larger variance of the random walk to be detected. Shorter averaging windows for the variance computations caused an incorrect preference for the noise targets. Similarly the Expected Error reward correctly identified the drifting target, after first aggressively focusing on the constant target. This was possible because the parameter sweep choose a short averaging window on the Expected Error, allowing it to quickly identify the noisy, high-variance targets---other window lengths caused Expected Error to focus on the high variance targets.  
      %The labels in the bottom right graph correspond to all the subplots. 
      %Only reward functions based on variance of the prediction cause the Bandit agent to correctly prefer the arm with drifting targets.
      }
      \label{fig:r1}
 \end{figure}
  
\subsection{Results with Non-introspective Learners}

Figure \ref{fig:r1} summarizes the behavior of the Gradient Bandit with several intrinsic reward functions, with non-introspective learners. The bold dash lines reflect the probabilities averaged over 200 runs, while the light stroke solid lines depict probabilities of individual runs. Several rewards induced the ideal behavior described above to varying degrees. Rewards based on simple moving averages of each learner's prediction error, including {\em Expected Error} and {\em UDE}, quickly latch onto the action corresponding to the drifting target. This was possible because the parameter sweep choose a short averaging window, allowing the Gradient Bandit to quickly identify the noisy, high-variance targets---other window lengths caused {\em Expected Error} and {\em UDE} to focus on the high variance targets. Using the variance of each predictors estimate, as in {\em Variance of Prediction} and {\em Uncertainty Reduction}, the behavior also converges to mostly selecting the drift action, after exploring the constant and high-variance actions initially a bit longer. In this case, a parameter corresponding to a long window is used, because the drifting target exhibits higher variance than the high-variance targets over a long enough window of data. Perhaps unsurprisingly the {\em Squared Error} and {\em Error Reduction} produce inappropriate behavior. {\em Bayesian Surprise} and {\em Weight Change} cause the Gradient Bandit to be distracted by the high-variance targets resulting in sub-optimal behavior. The {\em Error Derivative} reward induces behavior that looks reasonable in expectation, albeit there is more variance across runs than exhibited by other intrinsic reward functions. 
%Let's take a closer look at the performance of the system with Error Derivative rewards.

%
%The first and most simple experiment in Drifter-Distractor tested the agent's ability to focus on a single arm. This experiment was designed to provide a qualitative understanding of the impacts of different reward functions and step-size adaption. The task is small enough that an algorithm that uniformly samples arms can achieve low overall MSE. However, in larger problems with many possible actions which result in many predictions to update in parallel indiscriminately sampling actions uniformly will result in poor prediction learning. Indeed we demonstrate exactly this case with our third experiment in Jumpy 8-arm. 
%
%We used our first experiment provide a concrete example of how we expect a curious learner to behave, and to evaluate all the proposed reward functions. For clarity of presentation, we include the approaches advocated in the literature, but omit the results for reward functions newly proposed in this paper that performed poorly. In this case poor performance means the agent either strongly preferred one of the the noisy arms, or had no preference over the arms---the reward function induced a uniform distribution. The bold entries in Table \ref{tab:rewards} provide the details of each reward function included in our final results. 

   \begin{figure}[ht]
      \centering
       \begin{tikzpicture}
  \node (img)  {\includegraphics[width=0.95\linewidth]{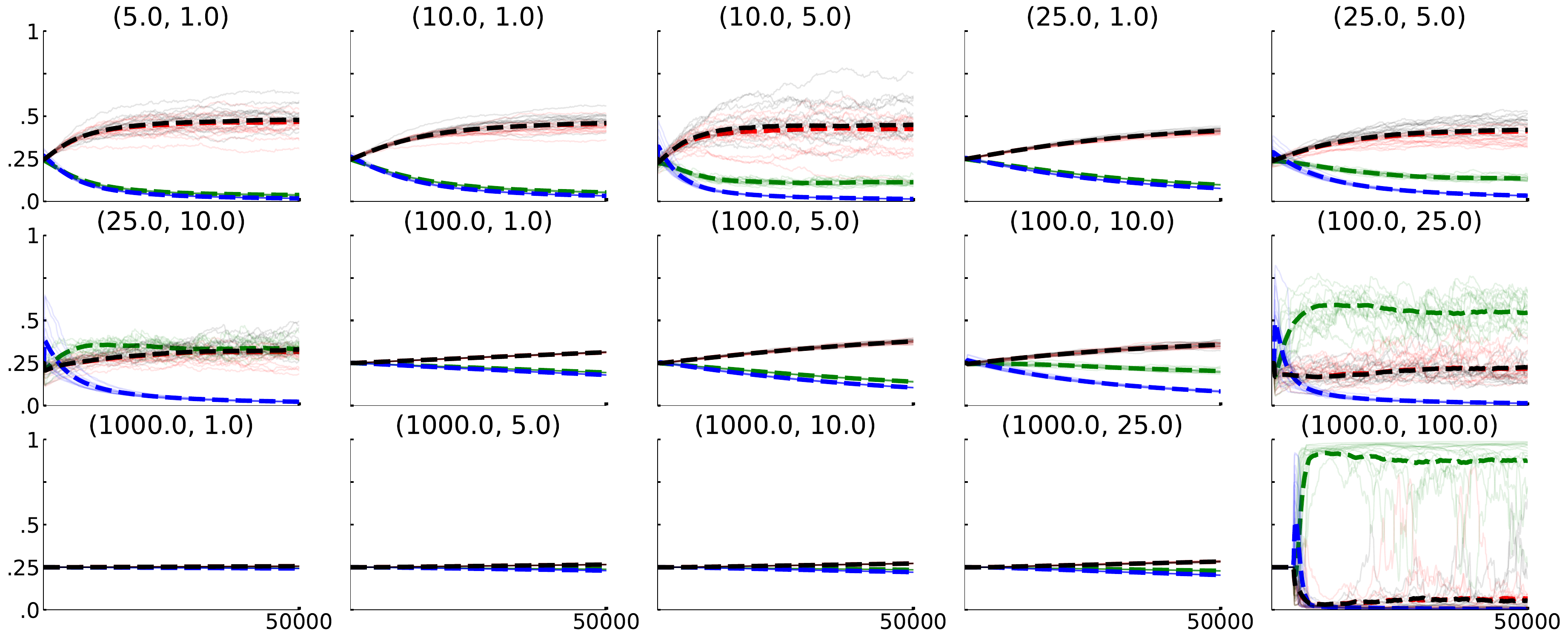}};
  \node[below=of img, node distance=0cm, yshift=1cm,font=\color{black}] {Time steps};
  \node[left=of img, node distance=0cm, rotate=90, anchor=center,yshift=-0.7cm,font=\color{black}] {Action selection probability};
 \end{tikzpicture}
      \caption{{The impact of varying the window length parameters $\eta$ and $\tau$ of {\bf Error Derivative} reward}, in the \textbf{Drifter-Distractor} problem, with \textbf{Non-introspective Learners}. Each subplot depicts the behavior of the Gradient Bandit algorithm with {\em Error Derivative} reward for many combinations of $\eta$, $\tau$ as indicated by the labels. As in Figure \ref{fig:r1}, each subplot shows both the average action selection probability for each action over time, and a small subset of individual runs. A large diversity of behaviors can be induced by changes to the window length parameters. Only one setting induced correct behavior: $\eta = 1000, \tau = 100$. This explains why the initial action selection was uniform in Figure \ref{fig:r1}: the reward is zero until the windows fill, which takes 1000 steps for $\eta = 1000$.} 
      \label{fig:Derv_window}
 \end{figure}

Performance in the Drifter-Distractor problem with non-introspective learners is largely dependent on setting the hyper-parameters of the each reward correctly. To illustrate this sensitivity, consider the {\em Error Derivative} reward, which is parameterized by two scalars $\eta$ and $\tau$. The $\eta$ parameter controls the size of the window used to average recent errors, and $\tau$ controls how much each of the two windows overlap. Figure \ref{fig:Derv_window} shows the behavior of the Gradient Bandit, in terms of action selection probability over time, for every combination of $\eta$ and $\tau$. For each pair of ($\eta$, $\tau$) we selected all the other hyper-parameters in the learning system to minimize the total RMSE; each subplot of the figure represents the best performance possible for a given ($\eta$, $\tau$) pair according to RMSE. Across these combinations, we see the full gamut of behaviors. Only one setting out of twelve exhibited the described good behavior; most were uniform or focused on the distractor targets. 
%In the end a longer window length of 1000, and an overlap of 100 resulted in the best performance in terms of RMSE, and exhibited what we expected in terms of good behavior. 

%  \begin{figure}[ht]
%      \centering
%      \includegraphics[width=1.0\linewidth]{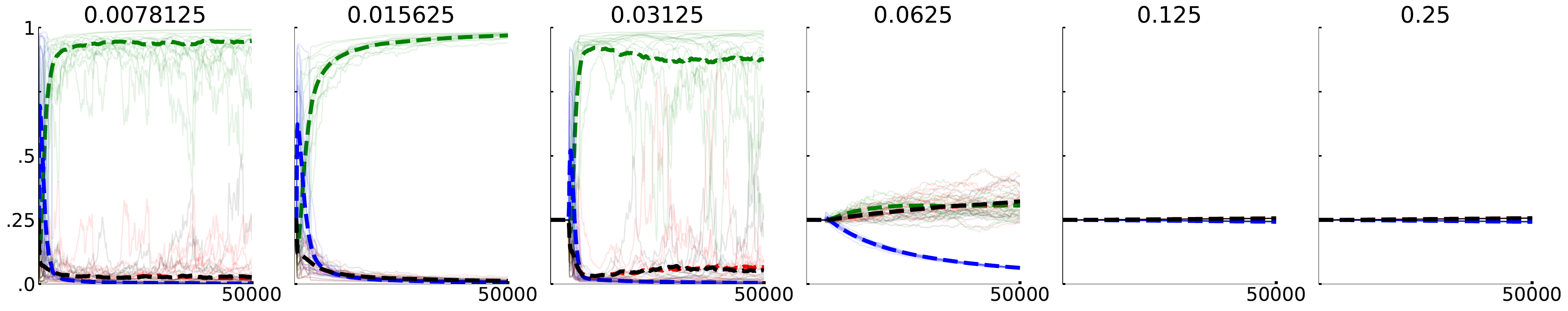}
%      \caption{{The impact of varying the LMS learning rate parameter $\alpha_i$ with the {\bf Error Derivative} reward}, in the \textbf{Drifter-Distractor} problem, with {non-introspective learners}. Each subplot depicts the behavior of the Gradient Bandit algorithm with {\em Error Derivative} reward for for different values of $\alpha_i$ as indicated by the labels. Large $\alpha_i$---faster target tracking---induces a uniform behavior, and smaller $\alpha_i$ produce more sensible action selection but RMSE is higher because predictions are learned slowly. The third subplot, corresponding to $\alpha_i = 0.03125$, achieved the lowest total RMSE, because it allowed for somewhat faster learning for the predictions, but was still slow enough for the behavior to estimate learning.}
%      \label{fig:Derv_alpha}
% \end{figure}
 
  \begin{figure}[ht]
      \centering
       \begin{tikzpicture}
  \node (img)  {\includegraphics[width=0.95\linewidth]{}};
  \node[below=of img, node distance=0cm, yshift=1cm,font=\color{black}] {Time steps};
  \node[left=of img, node distance=0cm, rotate=90, anchor=center,yshift=-0.7cm,font=\color{black}] {Action probability};
 \end{tikzpicture}
      \caption{{The impact of varying the LMS step-size parameter $\alpha_p$ with the {\bf Error Derivative} reward}, in the \textbf{Drifter-Distractor} problem, with \textbf{Non-introspective Learners}. Each subplot depicts the behavior of the Gradient Bandit algorithm with {\em Error Derivative} reward for for different values of $\alpha_p$ as indicated by the labels. Large $\alpha_p$---faster target tracking---induces a uniform behavior, and smaller $\alpha_p$ produce more sensible action selection but RMSE is higher because predictions are learned slowly. The third subplot, corresponding to $\alpha_p = 0.03125$, achieved the lowest total RMSE, because it allowed for somewhat faster learning for the predictions, but was still slow enough for the behavior to estimate learning.}
      \label{fig:Derv_alpha}
 \end{figure}
   
The hyper-parameters of the other components of the learning system also interact with the reward function. Figure \ref{fig:Derv_alpha} shows the best behavior---in terms of RMSE---of the Gradient Bandit for different values of the LMS predictor step-size parameter $\alpha_p$. As the predictors learn faster, the \emph{Error Derivative} reward induces nearly uniform action selection. If we slow the prediction learners updates with a smaller step-size parameter value, then the behavior strongly favors the drifting action. This makes sense because with a small $\alpha_p$, the intrinsic reward for the high-variance targets becomes smaller and much bigger for the drifting target because the step-size parameter value is not large enough to track quickly. Though the action selection by the behavior is correct, this is not what we want from the learning system: we want the prediction learners to learn quickly, rather than artificially slowly so that the behavior can more easily track what they know. In fact, with small step-size parameter values, the RMSE is much worse than we can get with the introspective learners, where it is much easier to estimate learning progress and prediction learners can learn more aggressively. 
%Given accurate target estimates the window averages of the errors over-time will reveal that the drifting targets generate the most error. However, with large $\alpha_i$ all the target estimates will generate high error, making it difficult for the {\em Error Derivative} reward to distinguish the utility of each action. Note that the third subplot, corresponding to $\alpha_i = 0.03125$ achieved the lowest total RMSE. 

  \begin{figure}[h!]
      \centering
      \includegraphics[width=1.0\linewidth]{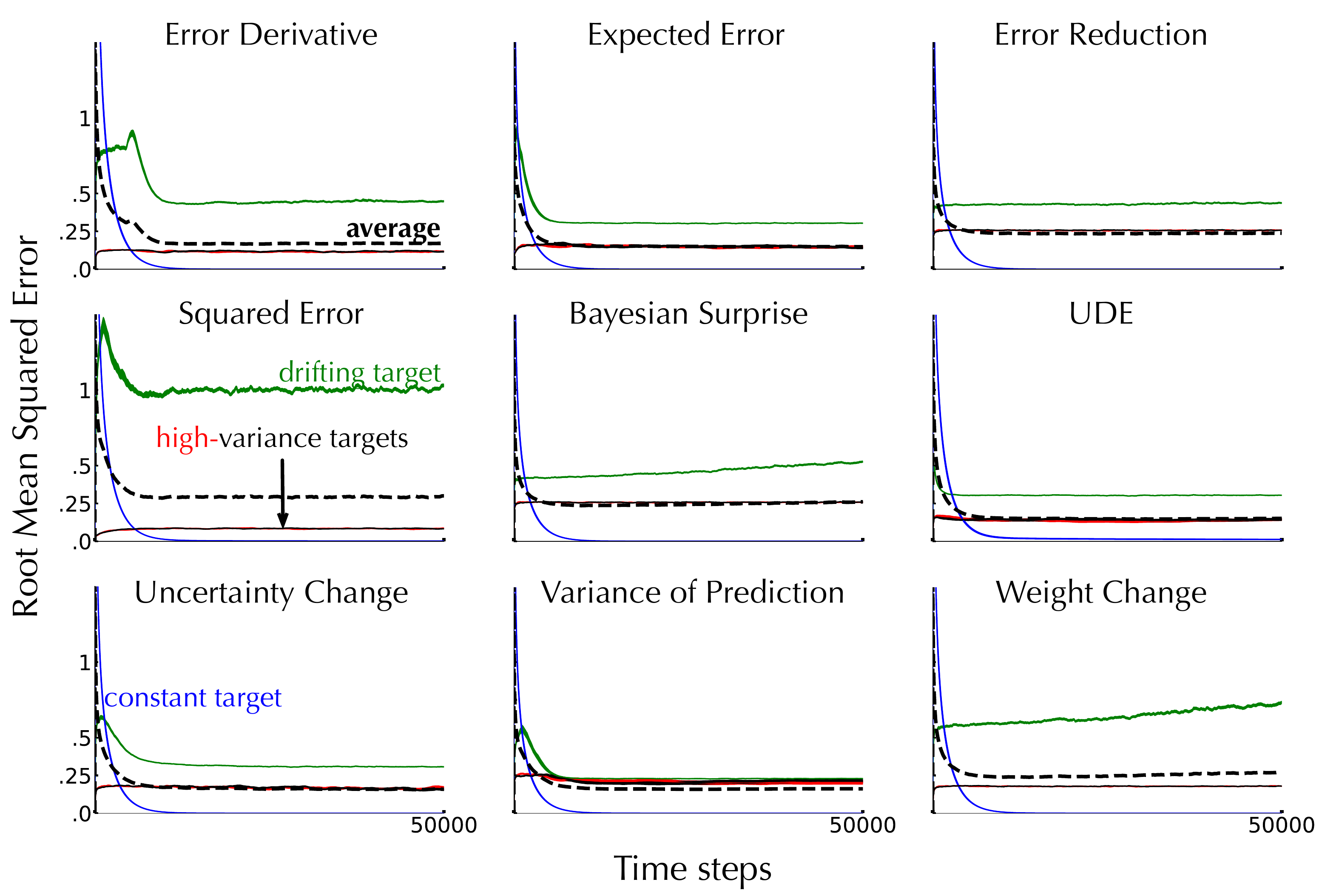}
      \caption{{\textbf{RMSE} over time corresponding to each intrinsic reward function in the \textbf{Drifter-Distractor} problem with \textbf{Non-introspective Learners}.} Each subplot corresponds to a different reward as labelled. The line colors correspond exactly as in the previous plots: \textcolor[rgb]{0,.5,0}{\bf green drifting}, {\bf black} and \textcolor[rgb]{1,0,0}{\bf red} {\bf high}-\textcolor[rgb]{1,0,0}{\bf variance}, and \textcolor[rgb]{0,0,1}{\bf blue constant}. Each line is the exponentially weighted moving average of the LMS predictor's RMSE. The RMSE is computed with an exponential average, with weighting 0.001. The final results are averaged over 200 independent runs (standard error bars are plotted). The heavy stroke black dashed line reports average of the other four. Although many rewards induce similar action selection strategies, they can produce different RMSE curves. 
      }
      \label{fig:rmse_exp1}
 \end{figure}
 
Finally, let us investigate the error over time for each intrinsic reward. Figure \ref{fig:rmse_exp1} shows the exponential average of the RMSE over time for each reward function. We choose an exponential average to smooth the results (with a decay constant of 0.999). We plot both the error of each target, and the error average across targets. All rewards except {\em UDE} result in perfect prediction of the constant target; even {\em UDE} has near-zero error, indicating only minor under-selection of the constant action. Rewards that induce nearly uniform action selection generate larger prediction error in aggregate ({\em Error Reduction} and {\em Surprise}). Reward functions that do not induce a strong preference for the drifting target exhibit high or growing error ({\em Weight Change}). Rewards that induce strong preference for the high-variance targets do achieve better error on those predictions at the cost of accuracy in predicting the drifting targets ({\em Squared Error}). 
%, as exemplified by the performance of the {\em Squared Error} reward which achieves the lowest error on the high-variance targets but the highest error on the drifting target. 
Achieving the lowest overall error requires first selecting the actions for the constant and high-variance targets at first, and then focusing on the drift target (i.e., {\em UDE}, {\em Uncertainty Change}, and {\em Variance of Prediction}). 
%ADAMC: I did not like this sentence at all
%As alluded to above, this RMSE can only be achieved by slowing learning of the predictions, resulting in a RMSE of about 0.2 that is not optimal for this problem. Instead, to truly obtain good performance from the learning system, we need to turn to introspective learners, as we do in the next section.

%balancing action selection in just the right way, and as we can see from the plots of {\em UDE}, {\em Uncertainty Change}, and {\em Variance of Prediction} this can be done in slightly different ways resulting in similar aggregate RMSE (averaged over all four target) of 0.194, 0.212, and 0.209 respectively.  

%\clearpage
\subsection{Results with Introspective Learners}

%\begin{figure}[h!]
%      \centering
%      \includegraphics[width=0.3\linewidth]{new_figs/alphasNewWChangeCum}
%\caption{\label{fig:alphas} Sample run showing how Autostep changes the step-sizes over time with {\em Weight Change} reward. The lines for the constant target (blue) and drifter target (green) are overlapping, and the lines for the high-variance targets (red and black) are overlapping. }
%\end{figure}

In this section we analyze the impact of different intrinsic rewards with introspective learners. We use LMS learners with Autostep, a step-size adaption method, to obtain introspective prediction learners. 
%As previously discussed, we can our simple LMS learners can be made introspective by including step-size adaption via the Autostep algorithm. 
First let us recall how the step-size parameter for each LMS learner might change over time (see Figure \ref{fig:alphas} in Section \ref{isl} for reference), based on the errors generated by each of our three target types. 
The high-variance targets are impossible to predict---even if the mean is stable---so the LMS learner will experience positive and negative errors. The Autostep algorithm will reduce the step-size parameter corresponding to these targets, allowing each LMS learner to mitigate the variance and converge to the correct prediction of zero. The constant target on the other hand is easy to predict. Autostep will keep the step-size parameter large because the errors will be of the same sign. However, the error on the constant error can easily be reduced to zero with repeated sampling. 
Once the prediction error is zero Autostep will modify the step-size parameter no further. The drifting target has noise, like the high-variance targets, but the mean is not centered at zero, and it exhibits temporal structure. Consequently, the Autostep algorithm will keep the step-size parameter value high for the duration of the experiment. It is not hard to see that introspective learners should efficiently reduce error across all the targets, at least compared with a global, constant step-size parameter value. More subtly, an intrinsic reward that takes into account the dynamic values of the step-size parameter could exploit this additional information to adapt behavior to reduce error even faster.

\begin{figure}[h]
      \centering
      \includegraphics[width=1.0\linewidth]{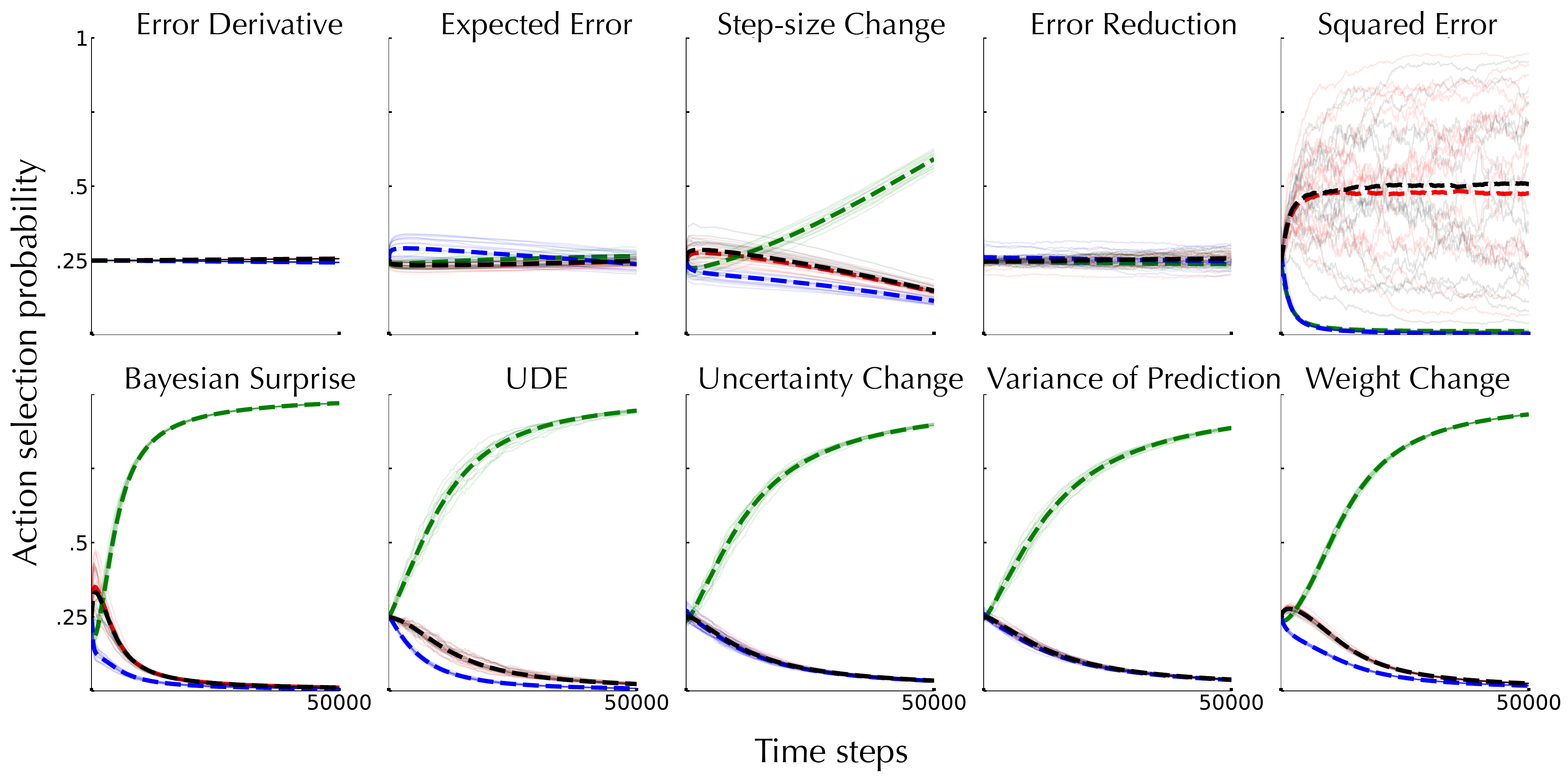}
 \caption{{\textbf{Behavior} in the \textbf{Drifter-Distractor} problem with {\bf Introspective Learners}.} 
      %One arm is drifting slowly with a variance of 0.1. Two arms have zero mean variance 1.0---noisy arms. One arm produces constant payoffs, thus easy to learn. 
      Each subplot corresponds to the behavior of the Gradient Bandit with a different intrinsic reward. Each LMS learner uses the Autostep algorithm to adapt the step-size parameter over time. The line coloring, labelling, and semantics mirror Figure \ref{fig:r1}. With Autostep, {\em Weight Change} induces sensible action selection. {\em Error Derivative} and {\em Expected Error} rewards, on the other hand, induce inappropriate action selection.     
%      With a constant step size, and so non-introspective prediction learners, the intrinsic rewards based on amount of learning erroneously focus on the noisy targets (red and black), rather than only the drifting target (green). Variance of Prediction and change in uncertainty---which both reflect amount of learning---correctly choose the drifting target. This, however, was an artifact of the parameter sweep, which chose very long running averages, enabling the larger variance of the random walk to be detected. Shorter averaging windows for the variance computations caused an incorrect preference for the noise targets. Similarly the Expected Error reward correctly identified the drifting target, after first aggressively focusing on the constant target. This was possible because the parameter sweep choose a short averaging window on the Expected Error, allowing it to quickly identify the noisy, high-variance targets---other window lengths caused Expected Error to focus on the high variance targets.  
      %The labels in the bottom right graph correspond to all the subplots. 
      %Only reward functions based on variance of the prediction cause the Bandit agent to correctly prefer the arm with drifting targets.
      }
      \label{fig:r2}

\end{figure}

The setup of our second experiment was identical to the first except that each LMS learner maintained its own step-size parameter $\alpha_{t,i}$ updated via Autostep. We also include an intrinsic reward based on the change in the step-size parameter to assess the utility of rewarding action choices that caused changes in the step-size parameter values. This reward only makes sense if the step-size parameter can change over time, and thus was not included in the previous experiment. 

The results of our second experiment are summarized in Figure \ref{fig:r2}. As before we plot the action selection probabilities to summarize the behavior. Weight change reward now induces reasonable action selection. The step-size parameters for the high-variance targets decay to a relatively small values causing the weight change to reduce---those actions become less and less rewarding. Autostep keeps the step-size parameter value relatively high for the drifting target, on the other hand, and the change in weights remains relatively high. Finally, even though the step-size parameter does not decay to zero for the constant target, the prediction error for the constant target does go to zero. Consequently, the magnitude of the update also goes to zero, meaning the weight change goes to zero and preference for the constant action diminishes over time. {\em Bayesian Surprise} induces similar behavior as {\em Weight Change} as suggested by our analysis in Section \ref{sec_bayes}. The variance-based rewards and {\em UDE} induce the same overall action preferences as without Autostep.

%\begin{figure}
%                \centering        
%        \begin{subfigure}[b]{0.5\textwidth}
%                \centering        
%                \includegraphics[width=\linewidth]{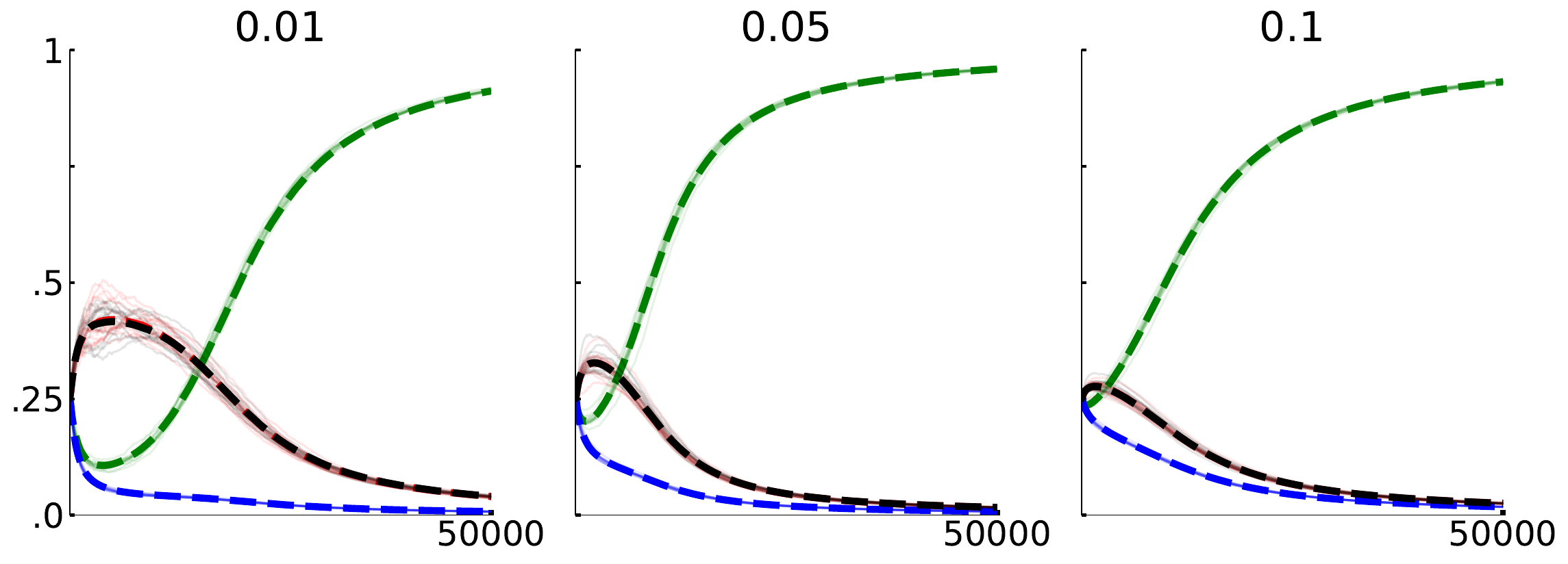}
%                \caption{Impact of Autostep's meta step-size $\kappa$.}                
%        \end{subfigure}\hfill
%        \begin{subfigure}[b]{0.4\textwidth}
%                 \centering
%                \includegraphics[width=\linewidth]{}
%                \caption{Impact of initial LMS step-size $\alpha_{i,0}$.}
%        \end{subfigure}\hfill
%        \caption{\label{fig:wc}Action selections of the Gradient Bandit with {\em Weight-Change} reward for different hyper-parameter settings.}
%\end{figure}

%\begin{figure}
%                \centering             
%                \includegraphics[width=0.75\linewidth]{}
%        \caption{\label{fig:ude} Impact of Autostep's meta step-size $\kappa$. Action selection probabilities for the Gradient Bandit with {\em UDE} reward with different $\kappa$. The corresponding RMSE for each $\kappa$ are 0.18, 0.13, and 0.11}
%\end{figure}

Across the board there is an improvement in RMSE reduction as shown in Figure \ref{fig:rmse_exp2}. The RMSE is about half of that for the non-introspective learners. The differences in RMSE between the intrinsic rewards appear more minor, but the differences are meaningful. The total RMSE is well correlated with our definition of reasonable behavior in this domain---reward functions that result in lower error exhibit the expected action preferences over time. To see larger differences, though, we need more actions. This first experiment was primarily designed to investigate qualitative behavior; the final experiment uses more actions and provides a better insight into quantitative differences. 

  \begin{figure}[ht]
      \centering
      \includegraphics[width=1.0\linewidth]{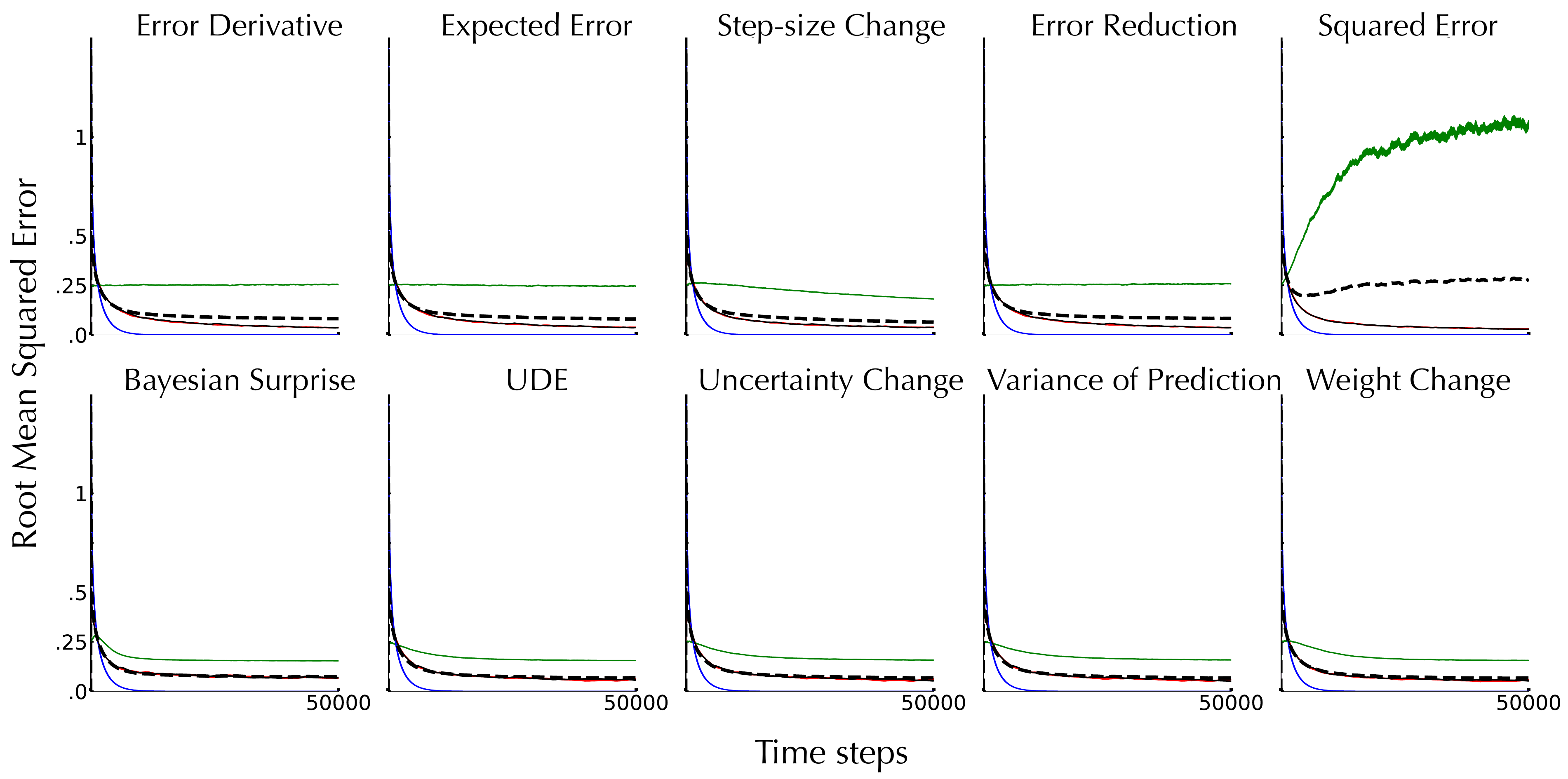}
      \caption{{\textbf{RMSE} over time corresponding to each intrinsic reward function in the \textbf{Drifter-Distractor} problem with {\bf Introspective Learners}.}  In this experiment, reward functions that induce similar action preferences produce similar RMSE reduction over profiles. Using {\em Weight Change} reward produces the lowest RMSE (0.108), however both {\em UDE} (0.109) and {\em Uncertainty Change} (0.110) result in similar performance. {\em Squared Error} results in the worst performance overall (0.292), and rewards that induce uniform action selection like {\em Error Derivative} result in larger error (0.124) compared with {\em Weight Change}.       }
      \label{fig:rmse_exp2}
 \end{figure}
 
For non-introspective learners, we observed that careful tuning of hyper-parameters allowed for the correct behavior for certain intrinsic rewards, by slowing prediction learning. This was the case for the {\em Error Derivative}, where in Figure \ref{fig:Derv_alpha} we observed that if the predictors learned too quickly, the drifting target did not produce the highest {\em Error Derivative}. For introspective learners, prediction learning cannot be slowed: they increase learning when learning is possible. We might expect {\em Error Derivative} to therefore perform poorly, and be unable to find an appropriate hyper-parameter setting. We find this is the case: {\em Error Derivative} with Autostep does not induce the action selection preferences we expect---it causes nearly uniform action behavior---and no setting of the window parameters resulted in appropriate action preferences (Figure \ref{fig:Derv_window2}). 

Overall, the preference for the drifting action is less pronounced with introspective learners, as seen in Figure \ref{fig:r2}. Instead, the behavior selects the high-variance targets for longer. This is because step-size adaption is a meta (or second order) learning process, and so a non-trivial amount of data is required to recognize that learning is oscillating. 

   \begin{figure}[ht!]
      \centering
       \begin{tikzpicture}
  \node (img)  {\includegraphics[width=0.95\linewidth]{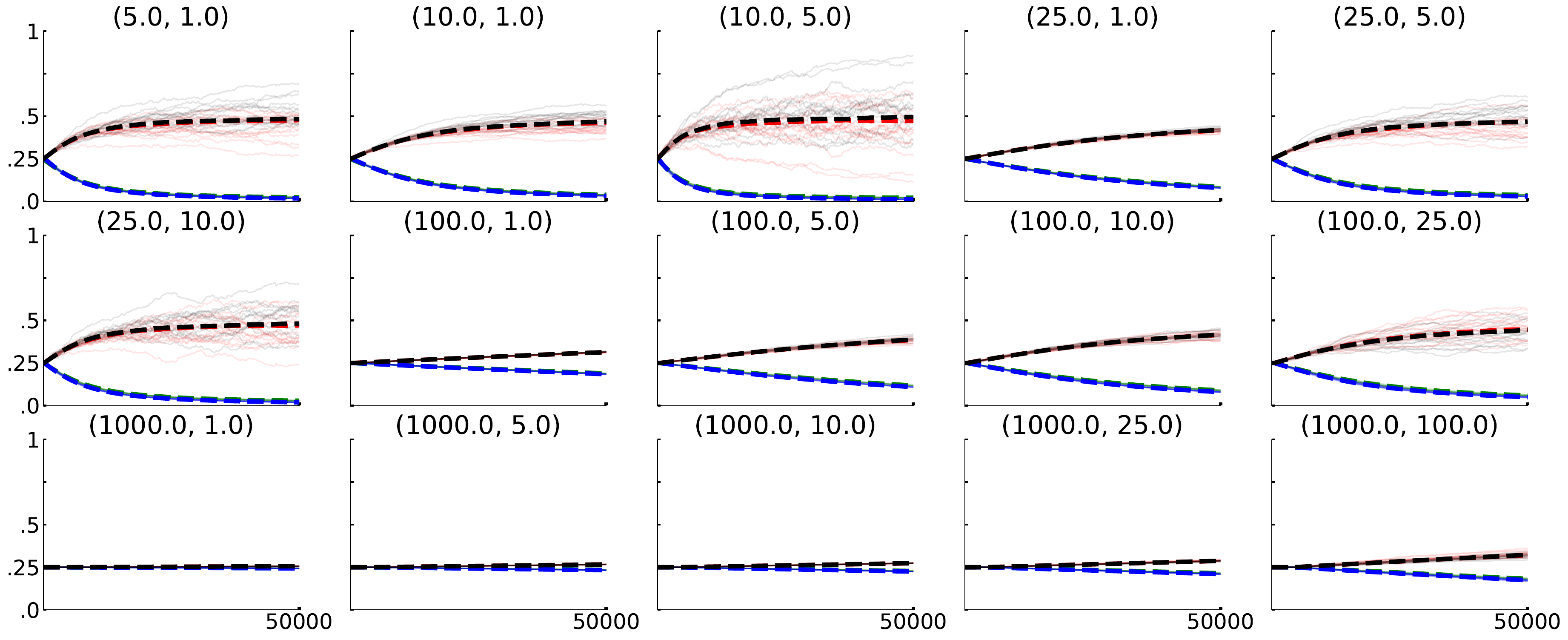}};
  \node[below=of img, node distance=0cm, yshift=1cm,font=\color{black}] {Time steps};
  \node[left=of img, node distance=0cm, rotate=90, anchor=center,yshift=-0.7cm,font=\color{black}] {Action selection probability};
 \end{tikzpicture}     
      \caption{{The impact of varying the window length parameters $\eta$ and $\tau$ of {\bf Error Derivative} reward in the \textbf{Drifter-Distractor} problem with \textbf{Introspective Learners}}.}
      \label{fig:Derv_window2}
 \end{figure}
 
One might therefore wonder if rewards like the {\em Weight Change} reward simply hide the hyper-parameter tuning issue inside the step-size adaption algorithm. This seems not to be the case: the parameters of Autostep are straightforward to tune, and the behavior is largely insensitive to these choices as shown in Figure \ref{fig:wc}. Small meta learning-rate parameter values slow learning but do not prevent preference for the drifting action. 
The results of our first experiment highlight the utility of both simple intrinsic rewards---one's without hyper-parameters---and introspective learners in multi-prediction learning systems.  
%Other rewards with more hyperparameters are not as insensitive, such as {\em UDE} in Figure \ref{}, but we can see that it similarly prefers larger meta learning rates. 
   \begin{figure}[ht!]
      \centering
       \begin{tikzpicture}
  \node (img)  {\includegraphics[width=0.5\linewidth]{}};
  \node[below=of img, node distance=0cm, yshift=1cm,font=\color{black}] {Time steps};
  \node[left=of img, node distance=0cm, rotate=90, anchor=center,yshift=-0.7cm,font=\color{black}] {Action probability};
 \end{tikzpicture}           
        \caption{\label{fig:wc} Action selection probabilities for the Gradient Bandit with \textbf{Weight-Change} reward under different meta learning-rate parameter values $\kappa$, in the \textbf{Drifter-Distractor} problem with \textbf{Introspective Learners}.} %The 0.164963, 0.121076, 0.107719}
\end{figure}

One final point of note is the surprising difference between {\em UDE} and {\em Expected Error}. In the previous experiment, with non-introspective learners, they performed similarly.
In this experiment, with introspective learners, {\em Expected Error} results in uniform action selection whereas {\em UDE} provides the correct behavior. This is surprising, as {\em UDE} corresponds to {\em Expected Error} divided by the long-run sample standard deviation of the target. If we look more closely at the behavior induced by {\em Expected Error} with different smoothing parameters $\beta$, in Figure \ref{fig:trace_var}, it becomes more clear why this is the case. A small $\beta$ in this problem results in early errors dominating the moving average; consequently, the constant action is preferred, as it generates high error at first. A larger $\beta$ is needed to avoid this issue, but this unfortunately causes poor estimates of the true expected error for the high-variance targets (which should be zero). In fact, it makes the errors for those target appear higher. Consequently, for the four smaller $\beta$, the constant target is preferred and for the two large, the high-variance targets are preferred; there is no $\beta$ amongst our set that lets the behavior focus on the drifting target. 

{\em UDE}, on the other hand, has a way to overcome this: the long-run variance estimate makes the drift target appear better. The variance of the drift target appears small in the beginning of learning, and it takes many steps to start to recognize that it is actually high-variance. In contrast, the variance estimate for the high-variance targets are learned quickly, and the variance for the constant target looks higher initially due to consistent decrease in the error. This behavior is perhaps a bit accidental, and again highlights the complex interactions between all these hyper-parameters. This only further motivates the utility of intrinsic rewards with no hyper-parameters, that rely on introspective prediction learners.

   \begin{figure}[ht!]
      \centering
       \begin{tikzpicture}
  \node (img)  {\includegraphics[width=0.95\linewidth]{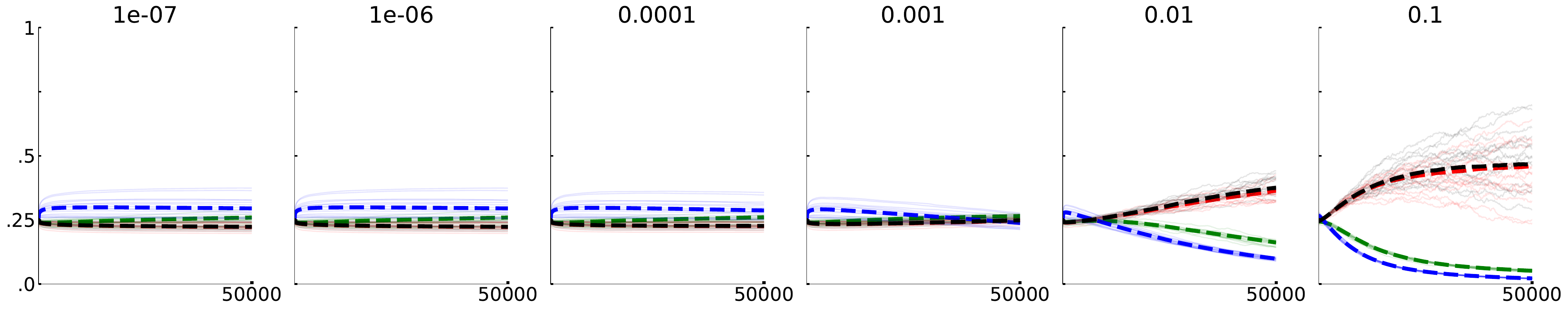}};
  \node[below=of img, node distance=0cm, yshift=1cm,font=\color{black}] {Time steps};
  \node[left=of img, node distance=0cm, rotate=90, anchor=center,yshift=-0.7cm,font=\color{black}] {Action probability};
 \end{tikzpicture}  
      \caption{{The impact of varying the smoothing parameter $\beta$ of {\bf Expected Error} reward in the \textbf{Drifter-Distractor} problem with \textbf{Introspective Learners}}.}
      \label{fig:trace_var}
 \end{figure}

\subsection{Results with Another Bandit Algorithm}

A natural question is if the above results are specific to the Gradient Bandit behavior learner. 
%Explain that we tried epsilon greedy for exp1. The results were similar, but are interested in behaviors that select actions probabilistically, instead of simply the best current action. 
To verify that our conclusions were not somehow overfit to the Gradient Bandit algorithm, we also repeated Experiment One with a Dynamic Thompson Sampling (DTS) algorithm, described in Section \ref{sec_banditalgs}. This bandit algorithm is representative of a different class of algorithms used in online learning: DTS estimates action-values instead of action-preferences and uses optimism (Thompson sampling) to increase exploration.

\begin{figure}[h!]
      \centering
      \includegraphics[width=0.85\linewidth]{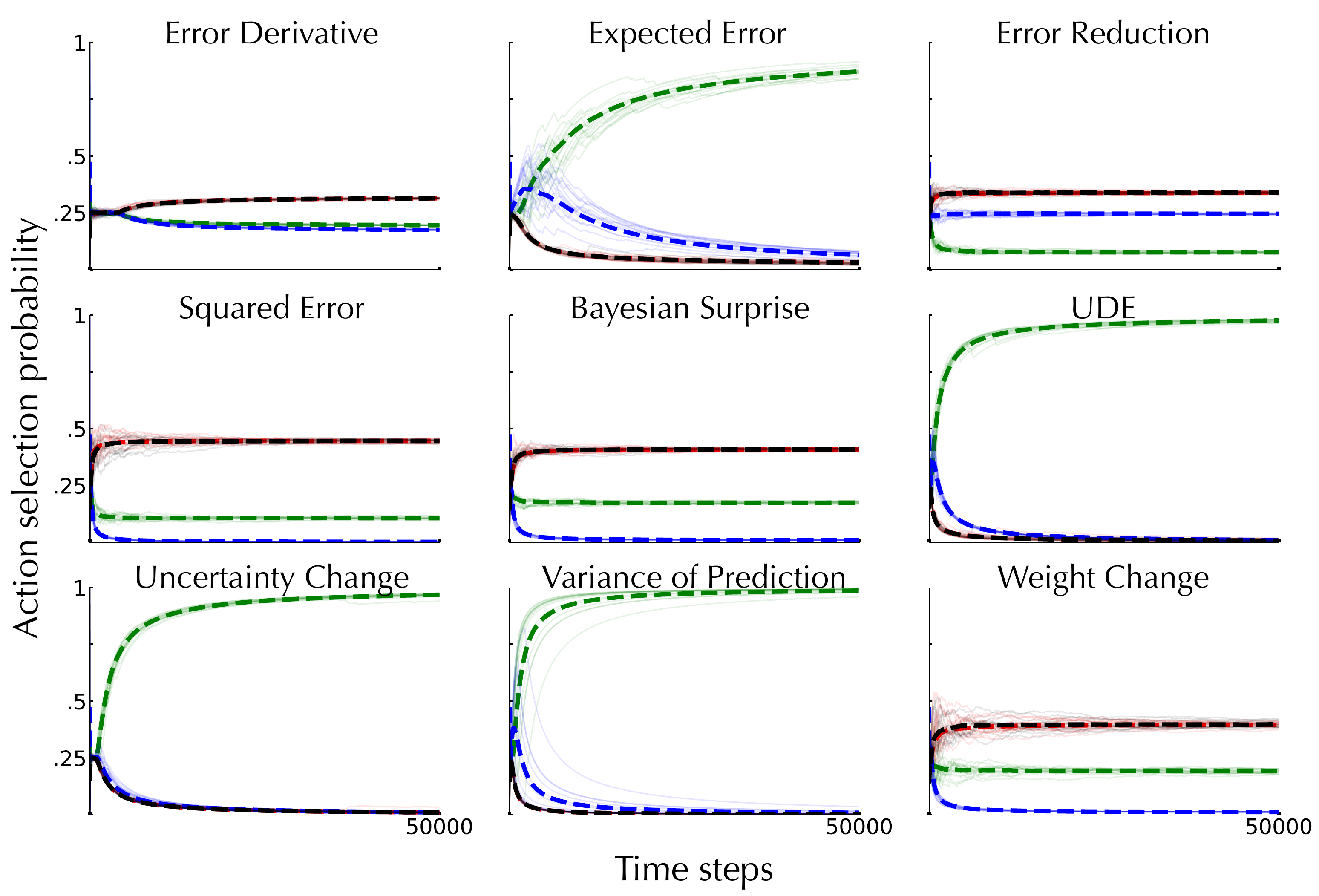}
 \caption{\textbf{Behavior} in the \textbf{Drifter-Distractor} problem, with \textbf{Non-Introspective Learners} where the  behavior is learned using Dynamic Thompson Sampling.
      }
      \label{fig:dts_1}
\end{figure}
\begin{figure}[h!]
      \centering
      \includegraphics[width=1.0\linewidth]{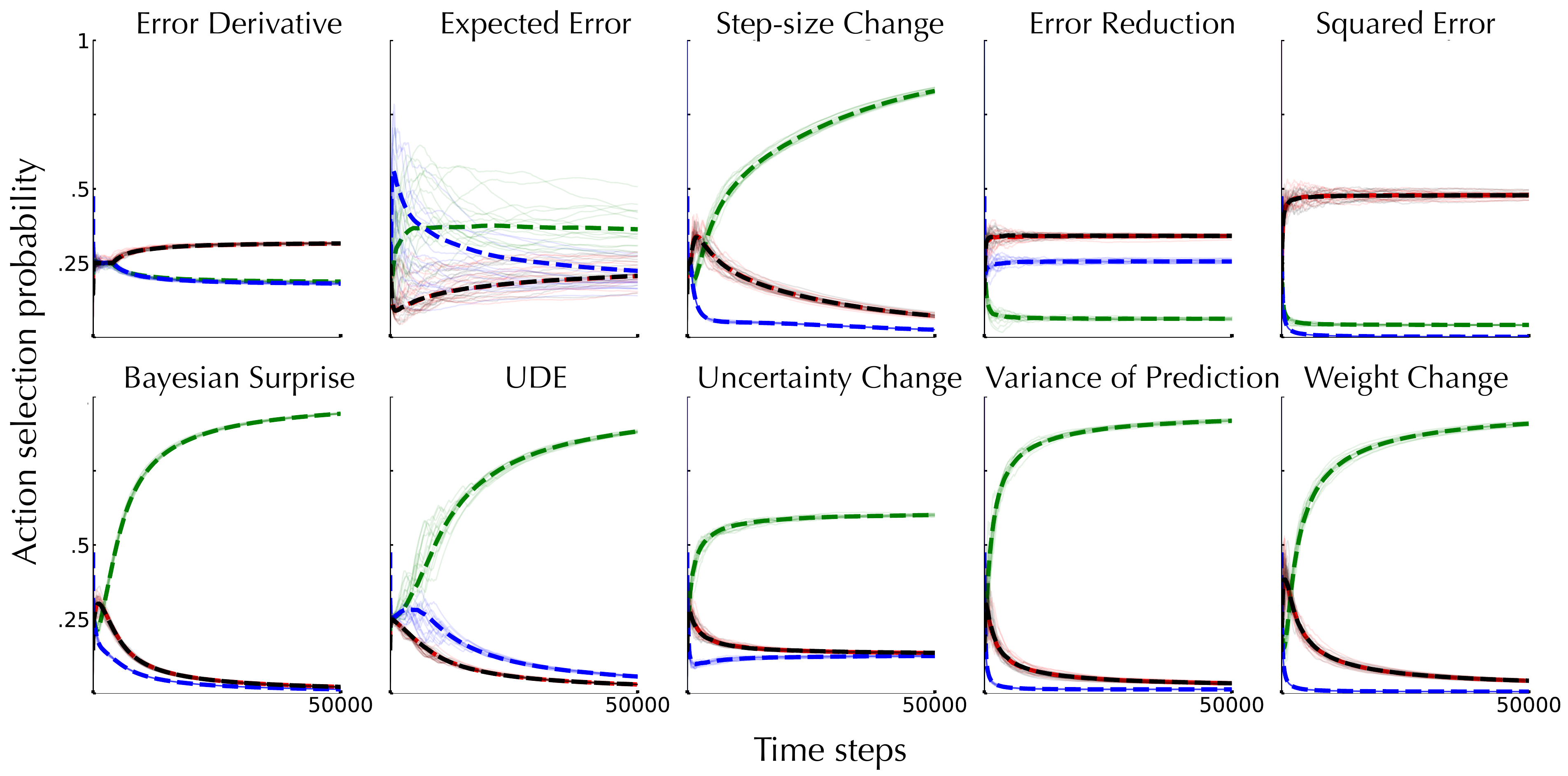}
 \caption{\textbf{Behavior} in the \textbf{Drifter-Distractor} problem, with \textbf{Introspective Learners} where the behavior is learned using Dynamic Thompson Sampling.
      }
      \label{fig:dts_2}
\end{figure}

The qualitative conclusions are similar, shown in Figures \ref{fig:dts_1} and \ref{fig:dts_2}. The primary difference is that the DTS algorithm is less likely to settle on uniform random behavior; rather it is more likely to exhibit a preference. This could be because the algorithm is inherently designed to identify the best action, under drift, whereas the Gradient Bandit samples according to preferences. The Gradient Bandit algorithm is less greedy: if two actions have similar action preferences, even if one is clearly higher than the other, then the Gradient Bandit will spend time selecting both actions. Deterministic methods select the action with the highest value. Despite this difference, we observed the same basic behaviors and the same qualitative differences amongst different intrinsic rewards. 
%Introspective learners help improve over non-introspective learners, and the same qualitative behavior for the intrinsic rewards. 
~The only noteworthy difference is the behavior with {\em Error Derivative} and non-introspective learners: with DTS this reward no longer induces the expected action selection. As we saw, {\em Error Derivative} only worked for a narrow range of its hyper-parameters in Experiment One, and so it is not surprising that it was not a stable result.

%Although the specific action preferences differed slightly, the overall conclusions were similar. Intrinsic rewards that produced low RMSE and sensible action selection probabilities with the Gradient Bandit also did so with Thompson Sampling. The result for Non-introspective learners and and Introspective learners can be found in Figures \ref{fig:dts_1} and \ref{fig:dts_2}.

%\begin{figure}
%                \centering        
%        \begin{subfigure}[b]{0.5\textwidth}
%                \centering        
%                \includegraphics[width=\linewidth]{new_figs/dts_1}
%                \caption{Non-introspective prediction learners.}                
%        \end{subfigure}\hfill
%        \begin{subfigure}[b]{0.5\textwidth}
%                 \centering
%                \includegraphics[width=\linewidth]{new_figs/dts_2}
%                \caption{Introspective prediction learners (using Autostep).}
%        \end{subfigure}\hfill
%        \caption{\label{fig:wc} Drifter-Distractor behavior with action selection via Dynamic Thompson Sampling. Non-introspective prediction learners.}
%\end{figure}

%\begin{figure*}[t!]
%    \centering
%    \begin{subfigure}[t]{0.5\textwidth}
%        \centering
%        \includegraphics[height=1.2in]{new_figs/wc_auto_alpha}
%        \caption{Lorem ipsum}
%    \end{subfigure}%
%    ~ 
%    \begin{subfigure}[t]{0.5\textwidth}
%        \centering
%        \includegraphics[height=1.2in]{new_figs/wc_alpha_demon}
%        \caption{Lorem ipsum, lorem ipsum,Lorem ipsum, lorem ipsum,Lorem ipsum}
%    \end{subfigure}
%    \caption{Caption place holder}
%\end{figure*} 

\section{Experiment Two: Switched Drifter-Distractor Problem}
Our second experiment is similar to the first except we introduce an unexpected change in the target distributions to tax the reward function's ability to help keep track of the relevant actions. Our first experiment reveal that several intrinsic rewards could help the Gradient Bandit algorithm ignore unhelpful actions and focus on the one corresponding to a drifting prediction target. The intrinsic rewards that were most helpful are based on moving estimates of the error or the variance of the prediction itself. 

To further evaluate these rewards we introduce a simple unpredictable change in the targets. For the first 50,000 steps the task is the same as Experiment One, then the targets suddenly switch according to Table \ref{tab:switch}. The ideal behavior before the {\em switch} should be the same as Experiment One: choose the constant and high-variance actions until their error is reduced, then focus on the drift action. After the switch, two of the targets drift. So the ideal behavior should focus on those two actions equally after some initial transient period due to the change.

This task is partially observable by design. The idea is to simulate a situation where the learning system encounters an unexpected change. The question is how does the sudden change interact with each reward function's internal estimates? Can we find a setting of the smoothing parameter and window lengths the help the intrinsic reward identify the appropriate actions?

\begin{figure}[h!]
      \centering
      \includegraphics[width=0.9\linewidth]{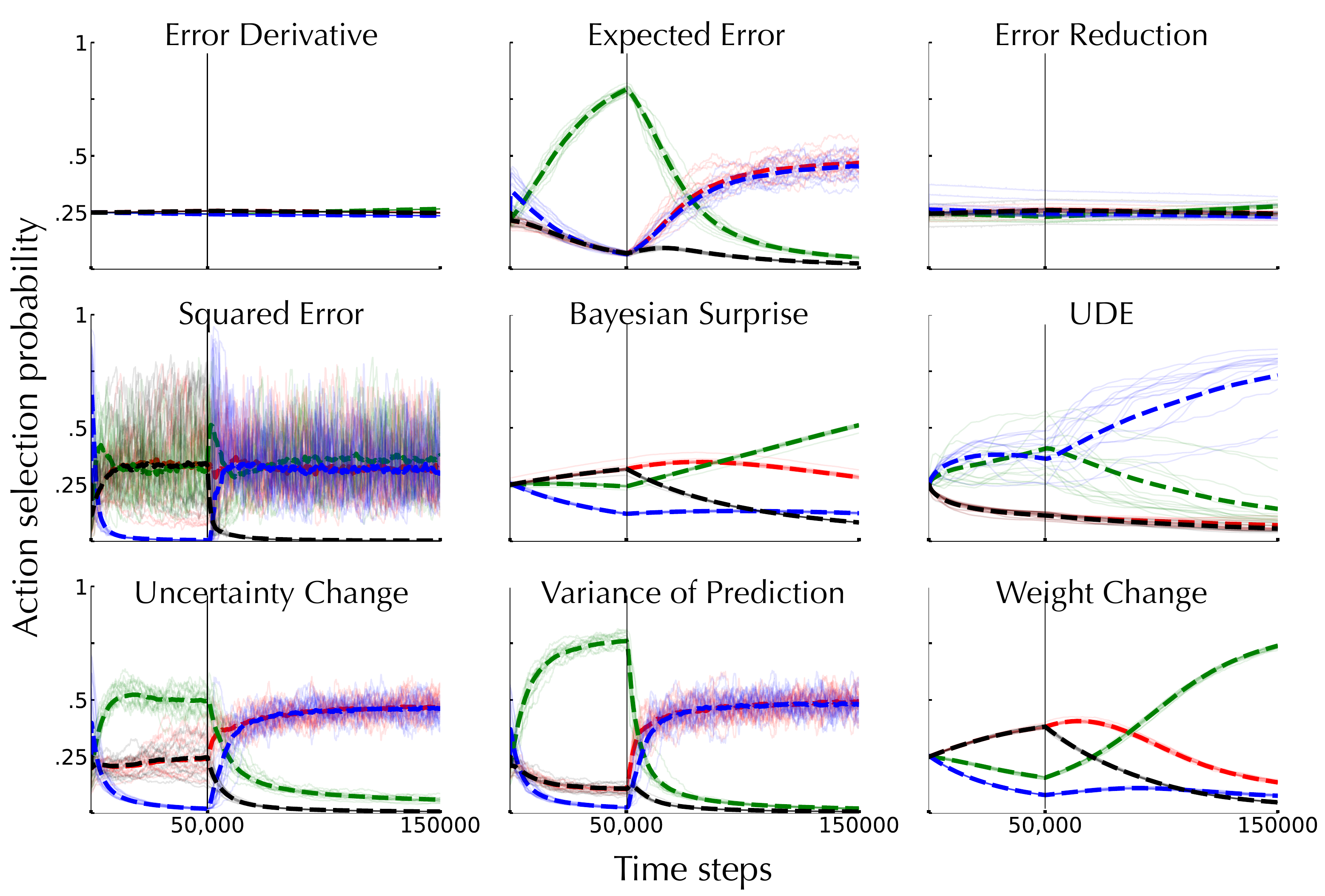}
      \caption{{\textbf{Behavior} in the \textbf{Switched Drifter-Distractor} problem with \textbf{Non-introspective Learners}.} 
      Each subplot corresponds to the behavior of the Gradient Bandit with a different intrinsic reward. The black vertical line in each subplot makes the time of the switch. 
      }
      \label{fig:r3}
 \end{figure}
 
     \begin{figure}[h!]
      \centering
       \begin{tikzpicture}
  \node (img)  {\includegraphics[width=0.9\linewidth]{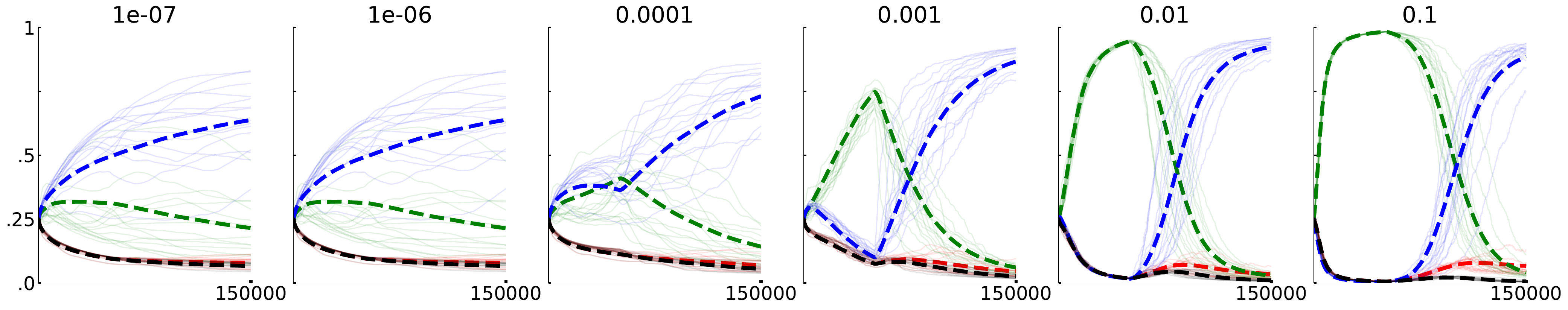}};
  \node[below=of img, node distance=0cm, yshift=1cm,font=\color{black}] {Time steps};
  \node[left=of img, node distance=0cm, rotate=90, anchor=center,yshift=-0.7cm,font=\color{black}] {Action probability};
 \end{tikzpicture}  
      \caption{{The impact of varying the smoothing parameter $\beta$ with the {\bf UDE} reward in the \textbf{Switched Drifter-Distractor} problem with \textbf{Non-introspective Learners}}.  Large $\beta$---more emphasis on recent errors---induces the correct behavior in phase one, but not in phase two.}
      \label{fig:ude_var}
 \end{figure}
 
\subsection{Results with Non-introspective learners}
Figure \ref{fig:r3} summarizes the behavior of the Gradient Bandit with several intrinsic reward functions. In the first phase of the experiment, the behavior is fairly similar to Experiment One. \emph{Weight Change}, \emph{Squared Error}, \emph{Expected Error}, \emph{Bayesian Surprise} and the Variance-based rewards perform almost the same, though there is reduced selection of the drift action. Notably both {\em Error Derivative} and {\em UDE} exhibit substantially different behavior. \emph{Error Derivative} induces uniform action preferences for the entire duration of the experiment. \emph{UDE} has trouble inducing a strong preference between the drift and constant actions in phase one. In the second phase \emph{UDE} correctly focuses action selection on only one of the two drift actions. 
 
 Let us take a closer look first at {\em UDE} to get a better sense of why its behavior is so different in Experiment Two. Figure \ref{fig:ude_var} illustrates how the behavior changes as of function of the smoothing parameter of \emph{UDE}. As in Experiment One, a large value of $\beta$ induces strong preference for the drift action in phase one, however, in phase two mainly one of the drift actions is selected resulting in higher RMSE. The best RMSE is achieved with smaller $\beta$ that (a) over-selects the constant action in phase one, (b) over-selects the high-variance target in phase two, and (c) under-selects the both drift actions in phase two. Figure \ref{fig:ude_alpha_demon} illustrates how the behavior changes as of function of the step-size parameter $\alpha_p$. We see the same phenomenon here with {\em UDE} that we saw with {\em Error Derivative} in Experiment One. If $\alpha_p$ is small, then the prediction of the drifting target is less accurate and thus the behavior induced by {\em UDE} favors the drift action, but results in much higher overall error.

   \begin{figure}[t]
      \centering
       \begin{tikzpicture}
  \node (img)  {\includegraphics[width=0.9\linewidth]{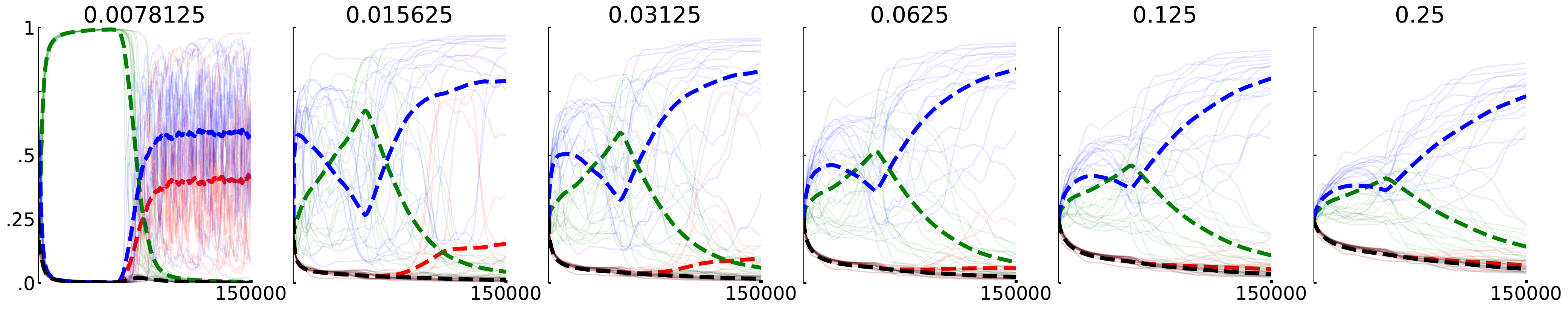}};
  \node[below=of img, node distance=0cm, yshift=1cm,font=\color{black}] {Time steps};
  \node[left=of img, node distance=0cm, rotate=90, anchor=center,yshift=-0.7cm,font=\color{black}] {Action probability};
 \end{tikzpicture}  
      \caption{{The impact of varying the step-size parameter $\alpha_p$ of LMS with the {\bf UDE} reward} in the \textbf{Switched Drifter-Distractor} problem with \textbf{Non-introspective Learners}.  Interestingly very small $\alpha_p$---slow prediction learning---induces the action preferences closer to what we expect. Unfortunately when $\alpha_p = 0.0078125$ the RMSE is 1.376, compared to a RMSE of 0.313 when  $\alpha_p = 0.25$.}
      \label{fig:ude_alpha_demon}
 \end{figure}

 \begin{figure}[t]
      \centering
       \begin{tikzpicture}
  \node (img)  {\includegraphics[width=0.9\linewidth]{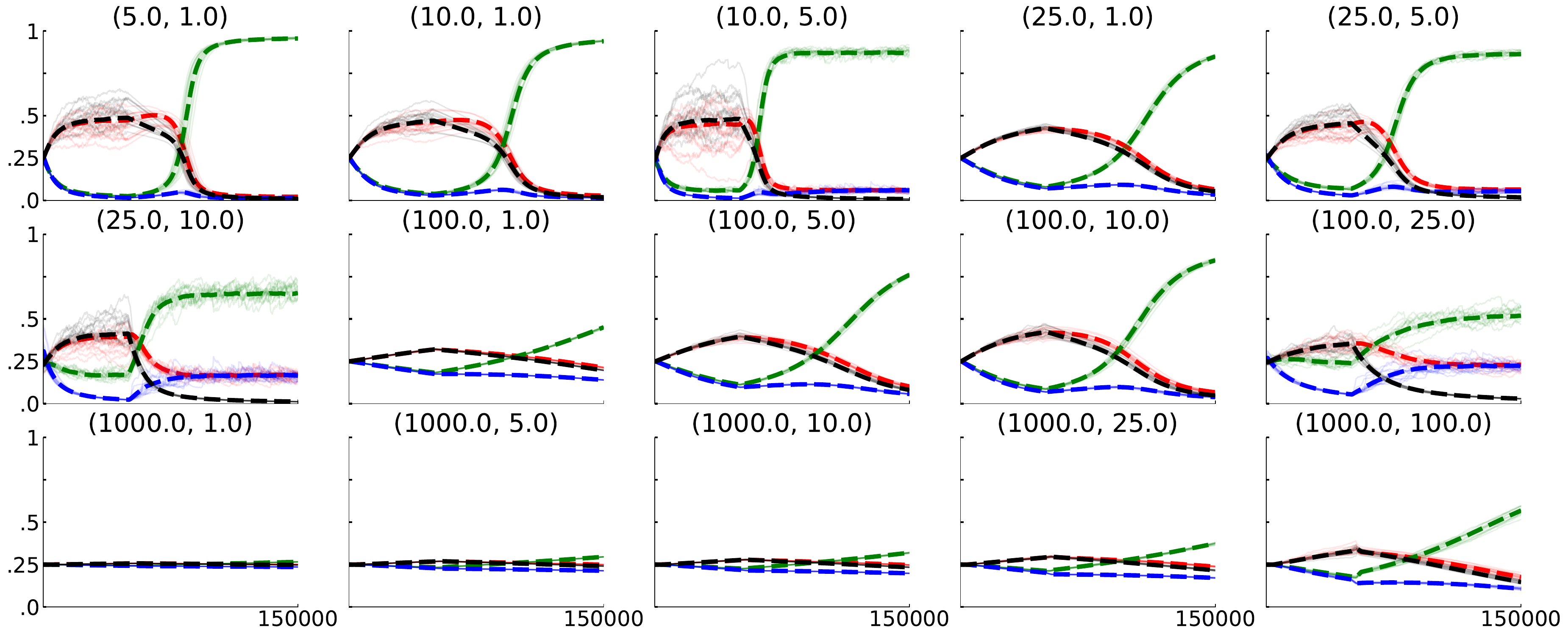}};
  \node[below=of img, node distance=0cm, yshift=1cm,font=\color{black}] {Time steps};
  \node[left=of img, node distance=0cm, rotate=90, anchor=center,yshift=-0.7cm,font=\color{black}] {Action selection probability};
 \end{tikzpicture}  
      \caption{{The impact of varying the window length parameters $\eta$ and $\tau$ of {\bf Error Derivative} reward} in the \textbf{Switched Drifter-Distractor} problem with \textbf{Non-introspective Learners}. }
      \label{fig:exp3_derv_window}
 \end{figure}

   \begin{figure}[h!]
      \centering
       \begin{tikzpicture}
  \node (img)  {\includegraphics[width=0.9\linewidth]{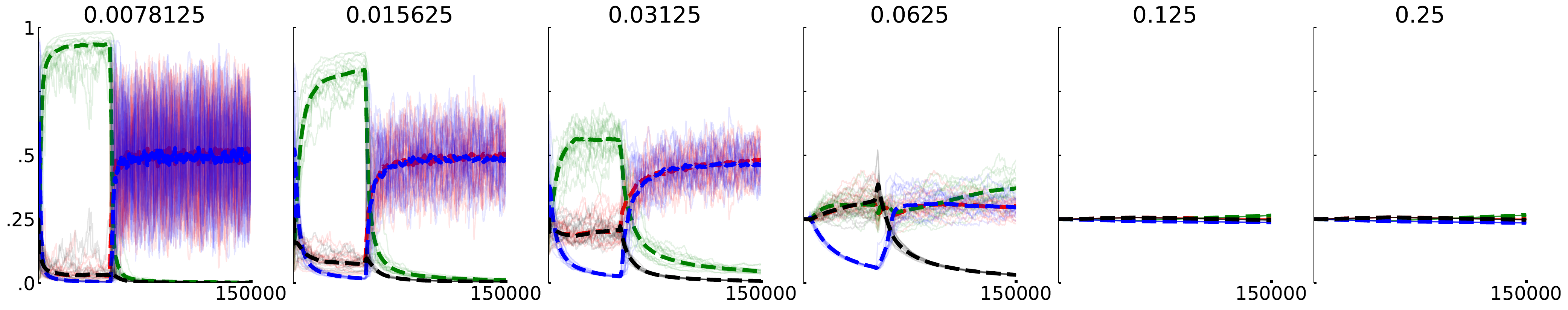}};
  \node[below=of img, node distance=0cm, yshift=1cm,font=\color{black}] {Time steps};
  \node[left=of img, node distance=0cm, rotate=90, anchor=center,yshift=-0.7cm,font=\color{black}] {Action probability};
 \end{tikzpicture}  
      \caption{{The impact of the step-size $\alpha_p$ with the {\bf Error Derivative} reward} in the \textbf{Switched Drifter-Distractor} problem with \textbf{Non-introspective Learners}. When $\alpha_p = 0.0078125$ the behavior looks sensible but the RMSE is 0.650. With $\alpha_p = 0.25$ the RMSE is 0.300, the behavior induced is uniform.}
      \label{fig:exp3_derv_demon}
 \end{figure}

The {\em Error Derivative} has similar problems. The behavior of the Gradient Bandit with {\em Error Derivative} does not exhibit the expected shape under any configuration of the window length parameters we tested (see Figure \ref{fig:exp3_derv_window}). If we inspect the behavior changes as a function of the step-size parameter $\alpha_p$ (Figure \ref{fig:exp3_derv_demon}), then again we observe that slower learning is required for the {\em Error Derivative} reward to be highest for the drift actions, at the cost of high RMSE.

\subsection{Results with Introspective learners}
The results of our second experiment, this time with Autostep, are summarized in Figure \ref{fig:r4}. As before we plot the action selection probabilities to summarize the behavior. As before, {\em Weight Change}  and {\em Surprise} rewards now induce sensible action selection. As in Experiment One, {\em Expected Error} and {\em Error Derivative} do not induce the expected behavior because Autostep increases the effective learning rate of the LMS predictors. {\em UDE} induces similar behavior on this problem with or without Autostep. 

\begin{figure}
      \centering
      \includegraphics[width=0.92\linewidth]{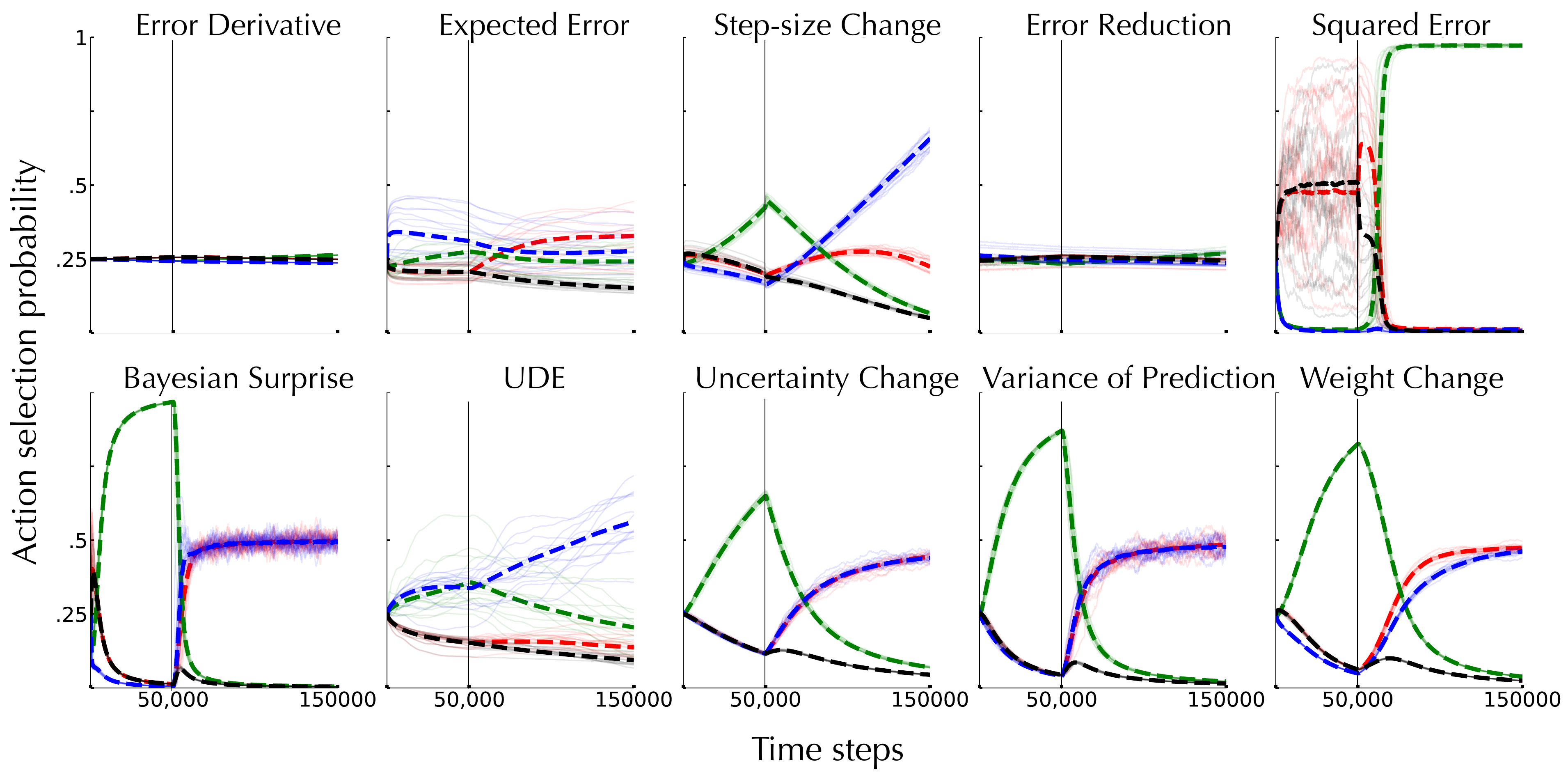}
     \caption{{\textbf{Behavior} in the \textbf{Switched Drifter-Distractor} problem with {\bf Introspective Learners},} 
      with the Gradient Bandit. 
      }
      \label{fig:r4}
 \end{figure}

\section{Experiment Three: Jumpy Eight-Action Problem}  

Our final experiment, in the Jumpy Eight-Action problem, quantitatively compares the best performing intrinsic rewards in a setting where the behavior agent should prefer several different actions. To achieve good performance, the agent must continually sample three actions, with different probabilities, and eventually ignore three noisy targets and two constant targets. The jumpy target follows a pattern increasing towards 50 and then decreasing toward -50 repeatedly. The idea was to design a problem where the intrinsic rewards based on variance would induce the wrong preferences over arms. The jumping-target exhibits high variance, as do the other two drifting targets. Intrinsic rewards based exclusively on the variance of the predictions might over-reward the jumpy-target action. 

This final experiment provides the quantitative comparison for performance, so we include a baseline uniform random behavior. There are eight targets and three of them continually drift. The drift is fast enough that wasting action selection will result in high error.  By the design of the experiment, therefore, uniform behavior should not be optimal, and we should expect some of the intrinsic rewards to significantly outperform uniform random behavior. We exclude rewards that did not succeed in inducing useful behavior in Experiments One and Two, and so have little hope to provide improvements here. This includes {\em Error Reduction} and {\em Squared Error} rewards. 

   \begin{figure}[h!]
      \centering
       \begin{tikzpicture}
  \node (img)  {\includegraphics[width=0.95\linewidth]{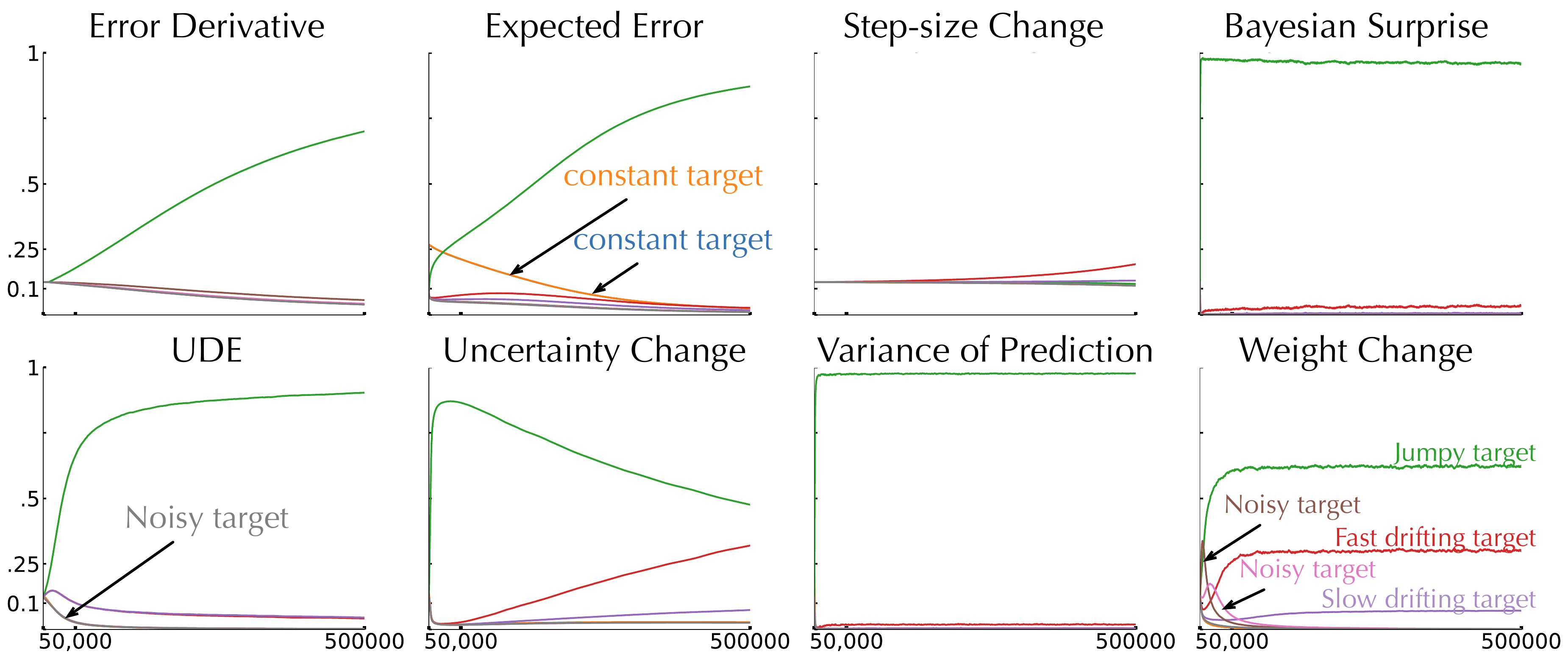}};
  \node[below=of img, node distance=0cm, yshift=1cm,font=\color{black}] {Time steps};
  \node[left=of img, node distance=0cm, rotate=90, anchor=center,yshift=-0.7cm,font=\color{black}] {Action selection probability};
 \end{tikzpicture}  
      \caption{\textbf{Behavior} in the \textbf{Jumpy Eight-Action} problem with \textbf{Introspective Learners}. Plotted is the probability of selecting each action versus time (averaged over 400 runs), with Autostep prediction learners. }
      \label{fig:exp5_pi}
\end{figure}
\begin{figure}[h!]
%\vspace{-0.6cm}
      \centering
      \includegraphics[width=0.7\linewidth]{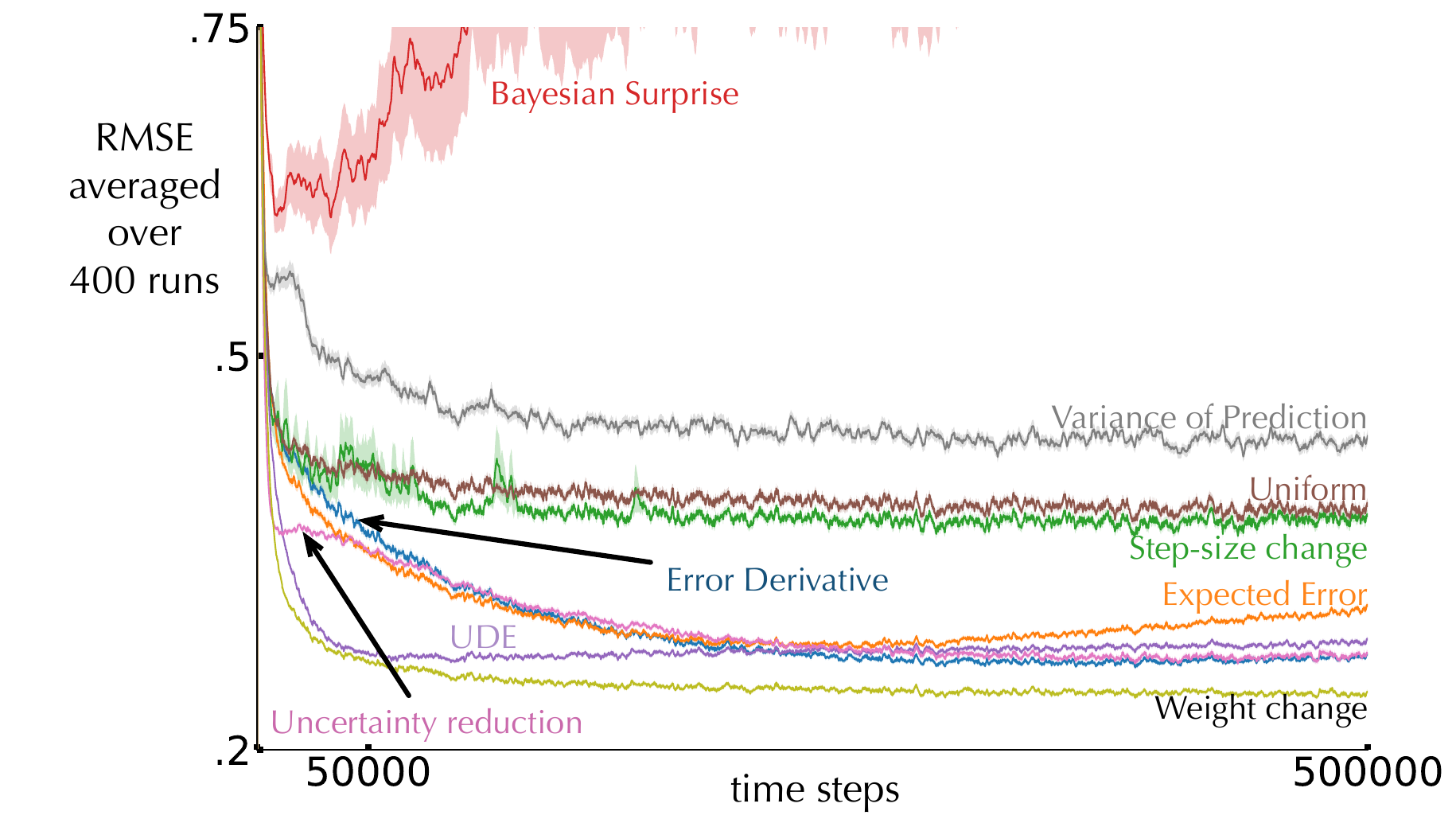}
      \caption{\textbf{RMSE} for the \textbf{Jumpy Eight-Action}  problem with \textbf{Introspective Learners}. Standard error of 400 runs is shown as shading around each line. We include the performance of the uniform behavior to provide a baseline of comparison. The behavior agent based on {\em Weight Change} learned the fastest and achieved the lowest RMSE at the end of learning. Most of the differences in the final RMSE are significantly different, in that the 95\% confidence intervals do not overlap (2 times the standard error), except for the difference between {\em Uniform} and {\em Step-size Change} and between {\em Uncertainty Reduction} and {\em Error Derivative}.}
      \label{fig:exp5_lc}
\end{figure}

Figure \ref{fig:exp5_pi} shows the behavior of the Gradient Bandit with eight different intrinsic rewards. We report the results using the hyper-parameter settings that resulted in the lowest total RMSE. We expect the behavior should initially select all the actions, eventually learning to ignore the constant and high-variance targets once they have been learned. Let us inspect the behavior with {\em Weight Change}. Initially, the actions corresponding to the high-error constant targets are preferred. Next the action corresponding to the high-variance target (with variance 1.0) is strongly preferred for a while, as is the other other high-variance target (with variance 0.5). The remaining noisy target (with variance 0.1), is easy to quickly learn and is not preferred after the beginning. Eventually the steady-state behavior selects the jumpy action most often, followed by the two drift actions. As you can see in Figure \ref{fig:exp5_lc}, the behavior induced by {\em Weight Change} significantly outperforms uniform action selection, and all other intrinsic rewards we tested.

Several rewards cause the behavior to over select the jumpy action, at the cost of under selecting the two drift actions. {\em Bayesian Surprise} and {\em Variance of Prediction} eventually select the jump action nearly 100\% of the time. In Figure \ref{fig:exp5_lc} we can see that both these rewards results in high RMSE compared to uniform action selection. This makes sense for intrinsic rewards based on variance because the variance of the jumpy target is significantly higher than the others. 
{\em Bayesian Surprise} likely did not work because our distributional assumptions are wrong---recall we assumed all targets were Gaussian with a moving estimate of the variance of the target, to provide an approximate Bayesian surprise for non-Bayesian learners. This is simply a problem with using Bayesian surprise outside its intended use-case---it should be used with Bayesian learners. But, as motivated above, we wanted to verify if a more explicit form of Bayesian surprise, rather than the much simpler {\em Weight Change}, could provide benefit, even for non-Bayesian learners. 

Many of the other rewards also over selected the jumpy action, though less excessively. \emph{Expected Error} takes a long time to induce a preference for the jumpy action, initially favoring the high-variance ones. \emph{UDE} induces a preference for the correct actions, but over-rewards the jumpy action---especially toward the end of the experiment---resulting in increasing RSME over time as we see in the upward trend of the error of \emph{UDE} in Figure \ref{fig:exp5_lc}. {\em Uncertainty Change} eventually induces reasonable action selection, though it starts by over selecting the jumpy action.  {\em Error Derivative}, after 500,000 steps, appears to be trending towards over selection of the jumpy action as well, while {\em Step-size Change} seems to be getting it all wrong. \emph{Weight Change} induces a preference for all three non-stationary targets after initially selecting all the actions. We cannot say if this is the optimal behavior, but it does cause the prediction learners to learn faster and achieve lower error compared with all other intrinsic rewards we tested.

\section{Adapting the Behavior of a Horde of Demons}
\label{horde}
The ideas and algorithms promoted in this paper may be more impactful when combined with policy-contingent, temporally-extended prediction learning. Imagine learning hundreds or thousands of off-policy predictions from a single stream of experience, as in the Unreal \citep{jaderberg2016reinforcement} and Horde \citep{sutton2011horde} architectures. In these settings, the behavior must balance overall speed of learning with prediction accuracy. That is, balancing action choices that generate off-policy updates across many predictions, with the need to occasionally choose actions in almost total agreement with one particular policy. In general we cannot assume that each prediction target is independent as we have done in this paper; selecting a particular sequence of actions might generate useful off-policy updates to several predictions in parallel \citep{white2012scaling}. There have been several promising investigations of how intrinsic rewards might benefit single (albeit complex) task learning \citep[see][]{pathak2017curiosity,hester2017intrinsically,tang2017exploration,colas2018gep,pathak2019self}. However, to the best of our knowledge, no existing work has studied adapting the behavior based on intrinsic rewards of a model-based or otherwise parallel off-policy learning system.

It seems clear that simple intrinsic reward schemes and the concept of an introspective learning system should scale nicely to these more ambitious problem settings. We could swap our stateless LMS learners for Q-learning with experience replay, or gradient temporal difference learning \citep{maei2010toward}. The Weight Change reward could be computed for each predictor with computation linear in the number of weights. It would be natural to learn the behavior policy with an average-reward actor-critic architecture, instead of the gradient bandit algorithm used here.  Finally, the notion of an introspective learner still simply requires that each prediction learner can adapt its rate of learning. This can be achieved with quasi second order methods like Adam \citep{kingma2015adam}, or extensions of the Autostep algorithm to the case of temporal difference learning and function approximation \citep{kearney2018tidbd,kearney2019learning,guenther2020examining,jacobsen2019meta}. It is not possible to know if the ideas advocated in this paper will work well in a large-scale off-policy prediction learning architecture like Horde, however they will certainly scale up. 

In this paper we focused on (non-Bayesian) mean prediction learners. 
This is a natural first step, as most prediction algorithms in reinforcement learning form point estimates (e.g., value functions). It is possible, however, to go beyond mean predictions, both for our testbed and for the Horde setting. For example, it is straightforward to use distributional prediction learners---those that estimate the distribution over the targets. For mean prediction learners we used the normed difference between weights before and after an update; for distributional learners, we could use a KL divergence between the distribution over targets before and after an update. Note that this differs from Bayesian surprise, which is the KL divergence between the distributions over the \emph{weights} before and after an update. We expect similar conclusions with distributional prediction learners, as conceptually the definition of \emph{Weight Change} simply uses a modified difference measure---a norm versus a divergence. The extension to Bayesian prediction learners---for either the mean or distribution over targets---is more complex, though improvements in Bayesian neural networks make it a more feasible line of investigation.
%There is consistent progress in improving optimization approaches for Bayesian neural networks. Such Bayesian prediction learners could provide alternative mechanism to estimate certain intrinsic rewards, such as uncertainty reduction. 
%An obvious next step is to evaluate the feasibility of Bayesian learners, , including the use of density networks for a more practical strategy to scale to the reinforcement learning setting discussed in this section.

Maximizing intrinsic reward as presented in this paper is not a form of exploration, it's the objective of the learning system---the agent must explore in order to maximize the intrinsic reward. The intrinsic rewards do not provide a bonus to help improve exploration.
In our stateless prediction task, sufficient exploration was provided by the stochastic behavior policy.
This will not always be the case, and additional exploration will likely be needed. 
%The objective of this paper was to define good behaviors for multi-prediction learning through the lens of intrinsic reward and internal measures of learning. 
Efficient exploration is an open problem in reinforcement learning. Combining the ideas advocated in this paper with exploration bonuses or planning could work well, but this topic is left to future work.

\section{Conclusion}
The goal of this work was to systematically investigate intrinsic rewards for a multi-prediction setting. This paper has three main contributions. The first is a new benchmark suite for comparing intrinsic rewards. Our bandit-like task requires the learning system to demonstrate several important capabilities: avoiding dawdling on noisy outcomes, tracking non-stationary outcomes, and seeking actions for which consistent learning progress is possible. Second, we provide a survey of intrinsically motivated learning systems, and empirically investigated 10 different analogs of well-known intrinsic reward schemes. Finally, we found that simple intrinsic rewards based on amount of learning, can induce effective behavior---avoiding classic degenerative behavior---if the base prediction learners are introspective. Introspective prediction learners can decide for themselves when learning is done. Previous work focused on designing more complex intrinsic rewards to mitigate bad behavior. Our results suggest the opposite: we should focus on designing better learners and use simple (ideally parameter-free) intrinsic rewards. We found that intrinsic rewards based on amount of learning---like Weight Change---can perform well in problems specifically designed to distract the learning system. This work provides several new insights into the strengths and weakness of different intrinsic reward mechanisms, and may provide guidance for constructing larger more complex intrinsically motivated reinforcement learning systems where an extensive and systematic study like ours is not feasible. 

\acks{We would like to thank our generous funders for supporting this work, specifically the CIFAR Canada AI Chairs program and NSERC Discovery grant program. We would like to thank Rich Sutton for his ideas and insights that shaped this project early on. Finally, we would like to thank our colleagues at the Reinforcement Learning and Artificial Intelligence Lab at the University of Alberta and DeepMind Alberta for providing an exciting and stimulating environment for research. 
}

\vskip 0.2in
\bibliographystyle{apacite}
\bibliography{curiosity}

\begin{thebibliography}{}

\bibitem [\protect \citeauthoryear {%
Achiam%
\ \BBA {} Sastry%
}{%
Achiam%
\ \BBA {} Sastry%
}{%
{\protect \APACyear {2017}}%
}]{%
achiam2017surprise}
\APACinsertmetastar {%
achiam2017surprise}%
\begin{APACrefauthors}%
Achiam, J.%
\BCBT {}\ \BBA {} Sastry, S.%
\end{APACrefauthors}%
\unskip\
\newblock
\APACrefYearMonthDay{2017}{}{}.
\newblock
{\BBOQ}\APACrefatitle {Surprise-based intrinsic motivation for deep
  reinforcement learning} {Surprise-based intrinsic motivation for deep
  reinforcement learning}.{\BBCQ}
\newblock
\APACjournalVolNumPages{arXiv:1703.01732}{}{}{}.
\PrintBackRefs{\CurrentBib}

\bibitem [\protect \citeauthoryear {%
Andrychowicz%
\ \protect \BOthers {.}}{%
Andrychowicz%
\ \protect \BOthers {.}}{%
{\protect \APACyear {2017}}%
}]{%
andrychowicz2017hindsight}
\APACinsertmetastar {%
andrychowicz2017hindsight}%
\begin{APACrefauthors}%
Andrychowicz, M.%
, Wolski, F.%
, Ray, A.%
, Schneider, J.%
, Fong, R.%
, Welinder, P.%
\BDBL {}Zaremba, W.%
\end{APACrefauthors}%
\unskip\
\newblock
\APACrefYearMonthDay{2017}{}{}.
\newblock
{\BBOQ}\APACrefatitle {Hindsight experience replay} {Hindsight experience
  replay}.{\BBCQ}
\newblock
\BIn{} \APACrefbtitle {Advances in Neural Information Processing Systems.}
  {Advances in neural information processing systems.}
\PrintBackRefs{\CurrentBib}

\bibitem [\protect \citeauthoryear {%
Antos%
, Grover%
\BCBL {}\ \BBA {} Szepesv{\'a}ri%
}{%
Antos%
\ \protect \BOthers {.}}{%
{\protect \APACyear {2008}}%
}]{%
antos2008active}
\APACinsertmetastar {%
antos2008active}%
\begin{APACrefauthors}%
Antos, A.%
, Grover, V.%
\BCBL {}\ \BBA {} Szepesv{\'a}ri, C.%
\end{APACrefauthors}%
\unskip\
\newblock
\APACrefYearMonthDay{2008}{}{}.
\newblock
{\BBOQ}\APACrefatitle {Active learning in multi-armed bandits} {Active learning
  in multi-armed bandits}.{\BBCQ}
\newblock
\BIn{} \APACrefbtitle {International Conference on Algorithmic Learning
  Theory.} {International conference on algorithmic learning theory.}
\PrintBackRefs{\CurrentBib}

\bibitem [\protect \citeauthoryear {%
Audibert%
, Munos%
\BCBL {}\ \BBA {} Szepesv{\'a}ri%
}{%
Audibert%
\ \protect \BOthers {.}}{%
{\protect \APACyear {2007}}%
}]{%
audibert2007variance}
\APACinsertmetastar {%
audibert2007variance}%
\begin{APACrefauthors}%
Audibert, J\BHBI Y.%
, Munos, R.%
\BCBL {}\ \BBA {} Szepesv{\'a}ri, C.%
\end{APACrefauthors}%
\unskip\
\newblock
\APACrefYearMonthDay{2007}{}{}.
\newblock
{\BBOQ}\APACrefatitle {Variance estimates and exploration function in
  multi-armed bandit} {Variance estimates and exploration function in
  multi-armed bandit}.{\BBCQ}
\newblock
\BIn{} \APACrefbtitle {CERTIS Research Report 07--31.} {Certis research report
  07--31.}
\PrintBackRefs{\CurrentBib}

\bibitem [\protect \citeauthoryear {%
Bajcsy%
, Aloimonos%
\BCBL {}\ \BBA {} Tsotsos%
}{%
Bajcsy%
\ \protect \BOthers {.}}{%
{\protect \APACyear {2018}}%
}]{%
bajcsy2018revisiting}
\APACinsertmetastar {%
bajcsy2018revisiting}%
\begin{APACrefauthors}%
Bajcsy, R.%
, Aloimonos, Y.%
\BCBL {}\ \BBA {} Tsotsos, J\BPBI K.%
\end{APACrefauthors}%
\unskip\
\newblock
\APACrefYearMonthDay{2018}{}{}.
\newblock
{\BBOQ}\APACrefatitle {Revisiting active perception} {Revisiting active
  perception}.{\BBCQ}
\newblock
\APACjournalVolNumPages{Autonomous Robots}{}{}{}.
\PrintBackRefs{\CurrentBib}

\bibitem [\protect \citeauthoryear {%
Balcan%
, Beygelzimer%
\BCBL {}\ \BBA {} Langford%
}{%
Balcan%
\ \protect \BOthers {.}}{%
{\protect \APACyear {2009}}%
}]{%
balcan2009agnostic}
\APACinsertmetastar {%
balcan2009agnostic}%
\begin{APACrefauthors}%
Balcan, M\BHBI F.%
, Beygelzimer, A.%
\BCBL {}\ \BBA {} Langford, J.%
\end{APACrefauthors}%
\unskip\
\newblock
\APACrefYearMonthDay{2009}{}{}.
\newblock
{\BBOQ}\APACrefatitle {Agnostic active learning} {Agnostic active
  learning}.{\BBCQ}
\newblock
\APACjournalVolNumPages{Journal of Computer and System Sciences}{}{}{}.
\PrintBackRefs{\CurrentBib}

\bibitem [\protect \citeauthoryear {%
Banerjee%
, Merugu%
, Dhillon%
\BCBL {}\ \BBA {} Ghosh%
}{%
Banerjee%
\ \protect \BOthers {.}}{%
{\protect \APACyear {2005}}%
}]{%
banerjee2005clustering}
\APACinsertmetastar {%
banerjee2005clustering}%
\begin{APACrefauthors}%
Banerjee, A.%
, Merugu, S.%
, Dhillon, I\BPBI S.%
\BCBL {}\ \BBA {} Ghosh, J.%
\end{APACrefauthors}%
\unskip\
\newblock
\APACrefYearMonthDay{2005}{}{}.
\newblock
{\BBOQ}\APACrefatitle {Clustering with {B}regman divergences} {Clustering with
  {B}regman divergences}.{\BBCQ}
\newblock
\APACjournalVolNumPages{Journal of Machine Learning Research}{}{}{}.
\PrintBackRefs{\CurrentBib}

\bibitem [\protect \citeauthoryear {%
Barto%
}{%
Barto%
}{%
{\protect \APACyear {2013}}%
}]{%
barto2013intrinsic}
\APACinsertmetastar {%
barto2013intrinsic}%
\begin{APACrefauthors}%
Barto, A\BPBI G.%
\end{APACrefauthors}%
\unskip\
\newblock
\APACrefYearMonthDay{2013}{}{}.
\newblock
{\BBOQ}\APACrefatitle {Intrinsic motivation and reinforcement learning}
  {Intrinsic motivation and reinforcement learning}.{\BBCQ}
\newblock
\BIn{} \APACrefbtitle {Intrinsically Motivated Learning in Natural and
  Artificial Systems.} {Intrinsically motivated learning in natural and
  artificial systems.}
\PrintBackRefs{\CurrentBib}

\bibitem [\protect \citeauthoryear {%
Barto%
, Mirolli%
\BCBL {}\ \BBA {} Baldassarre%
}{%
Barto%
\ \protect \BOthers {.}}{%
{\protect \APACyear {2013}}%
}]{%
barto2013novelty}
\APACinsertmetastar {%
barto2013novelty}%
\begin{APACrefauthors}%
Barto, A\BPBI G.%
, Mirolli, M.%
\BCBL {}\ \BBA {} Baldassarre, G.%
\end{APACrefauthors}%
\unskip\
\newblock
\APACrefYearMonthDay{2013}{}{}.
\newblock
{\BBOQ}\APACrefatitle {Novelty or surprise?} {Novelty or surprise?}{\BBCQ}
\newblock
\APACjournalVolNumPages{Frontiers in Psychology}{4}{}{907}.
\PrintBackRefs{\CurrentBib}

\bibitem [\protect \citeauthoryear {%
Barto%
\ \BBA {} Simsek%
}{%
Barto%
\ \BBA {} Simsek%
}{%
{\protect \APACyear {2005}}%
}]{%
barto2005intrinsic}
\APACinsertmetastar {%
barto2005intrinsic}%
\begin{APACrefauthors}%
Barto, A\BPBI G.%
\BCBT {}\ \BBA {} Simsek, O.%
\end{APACrefauthors}%
\unskip\
\newblock
\APACrefYearMonthDay{2005}{}{}.
\newblock
{\BBOQ}\APACrefatitle {Intrinsic motivation for reinforcement learning systems}
  {Intrinsic motivation for reinforcement learning systems}.{\BBCQ}
\newblock
\BIn{} \APACrefbtitle {Yale Workshop on Adaptive and Learning Systems.} {Yale
  workshop on adaptive and learning systems.}
\PrintBackRefs{\CurrentBib}

\bibitem [\protect \citeauthoryear {%
Bellemare%
\ \protect \BOthers {.}}{%
Bellemare%
\ \protect \BOthers {.}}{%
{\protect \APACyear {2016}}%
}]{%
bellemare2016unifying}
\APACinsertmetastar {%
bellemare2016unifying}%
\begin{APACrefauthors}%
Bellemare, M.%
, Srinivasan, S.%
, Ostrovski, G.%
, Schaul, T.%
, Saxton, D.%
\BCBL {}\ \BBA {} Munos, R.%
\end{APACrefauthors}%
\unskip\
\newblock
\APACrefYearMonthDay{2016}{}{}.
\newblock
{\BBOQ}\APACrefatitle {Unifying count-based exploration and intrinsic
  motivation} {Unifying count-based exploration and intrinsic
  motivation}.{\BBCQ}
\newblock
\BIn{} \APACrefbtitle {Advances in Neural Information Processing Systems.}
  {Advances in neural information processing systems.}
\PrintBackRefs{\CurrentBib}

\bibitem [\protect \citeauthoryear {%
Berseth%
\ \protect \BOthers {.}}{%
Berseth%
\ \protect \BOthers {.}}{%
{\protect \APACyear {2019}}%
}]{%
berseth2019smirl}
\APACinsertmetastar {%
berseth2019smirl}%
\begin{APACrefauthors}%
Berseth, G.%
, Geng, D.%
, Devin, C.%
, Finn, C.%
, Jayaraman, D.%
\BCBL {}\ \BBA {} Levine, S.%
\end{APACrefauthors}%
\unskip\
\newblock
\APACrefYearMonthDay{2019}{}{}.
\newblock
{\BBOQ}\APACrefatitle {{S}{M}i{R}{L}: Surprise Minimizing {R}{L} in Dynamic
  Environments} {{S}{M}i{R}{L}: Surprise minimizing {R}{L} in dynamic
  environments}.{\BBCQ}
\newblock
\APACjournalVolNumPages{arXiv preprint arXiv:1912.05510}{}{}{}.
\PrintBackRefs{\CurrentBib}

\bibitem [\protect \citeauthoryear {%
Besbes%
, Gur%
\BCBL {}\ \BBA {} Zeevi%
}{%
Besbes%
\ \protect \BOthers {.}}{%
{\protect \APACyear {2014}}%
}]{%
besbes2014stochastic}
\APACinsertmetastar {%
besbes2014stochastic}%
\begin{APACrefauthors}%
Besbes, O.%
, Gur, Y.%
\BCBL {}\ \BBA {} Zeevi, A.%
\end{APACrefauthors}%
\unskip\
\newblock
\APACrefYearMonthDay{2014}{}{}.
\newblock
{\BBOQ}\APACrefatitle {Stochastic multi-armed-bandit problem with
  non-stationary rewards} {Stochastic multi-armed-bandit problem with
  non-stationary rewards}.{\BBCQ}
\newblock
\BIn{} \APACrefbtitle {Advances in Neural Information Processing Systems}
  {Advances in neural information processing systems}\ (\BPGS\ 199--207).
\PrintBackRefs{\CurrentBib}

\bibitem [\protect \citeauthoryear {%
Brafman%
\ \BBA {} Tennenholtz%
}{%
Brafman%
\ \BBA {} Tennenholtz%
}{%
{\protect \APACyear {2002}}%
}]{%
brafman2002r}
\APACinsertmetastar {%
brafman2002r}%
\begin{APACrefauthors}%
Brafman, R\BPBI I.%
\BCBT {}\ \BBA {} Tennenholtz, M.%
\end{APACrefauthors}%
\unskip\
\newblock
\APACrefYearMonthDay{2002}{}{}.
\newblock
{\BBOQ}\APACrefatitle {R-max -- {A} General Polynomial Time Algorithm For
  Near-Optimal Reinforcement Learning} {R-max -- {A} general polynomial time
  algorithm for near-optimal reinforcement learning}.{\BBCQ}
\newblock
\APACjournalVolNumPages{Journal of Machine Learning Research}{}{}{}.
\PrintBackRefs{\CurrentBib}

\bibitem [\protect \citeauthoryear {%
Burda%
\ \protect \BOthers {.}}{%
Burda%
\ \protect \BOthers {.}}{%
{\protect \APACyear {2018}}%
}]{%
pathak18largescale}
\APACinsertmetastar {%
pathak18largescale}%
\begin{APACrefauthors}%
Burda, Y.%
, Edwards, H.%
, Pathak, D.%
, Storkey, A.%
, Darrell, T.%
\BCBL {}\ \BBA {} Efros, A\BPBI A.%
\end{APACrefauthors}%
\unskip\
\newblock
\APACrefYearMonthDay{2018}{}{}.
\newblock
{\BBOQ}\APACrefatitle {Large-Scale Study of Curiosity-Driven Learning}
  {Large-scale study of curiosity-driven learning}.{\BBCQ}
\newblock
\BIn{} \APACrefbtitle {International Conference on Learning Representations.}
  {International conference on learning representations.}
\PrintBackRefs{\CurrentBib}

\bibitem [\protect \citeauthoryear {%
Bylinskii%
\ \protect \BOthers {.}}{%
Bylinskii%
\ \protect \BOthers {.}}{%
{\protect \APACyear {2015}}%
}]{%
bylinskii2015towards}
\APACinsertmetastar {%
bylinskii2015towards}%
\begin{APACrefauthors}%
Bylinskii, Z.%
, DeGennaro, E.%
, Rajalinghamd, R.%
, Ruda, H.%
, Zhang, J.%
\BCBL {}\ \BBA {} Tsotsos, J.%
\end{APACrefauthors}%
\unskip\
\newblock
\APACrefYearMonthDay{2015}{}{}.
\newblock
{\BBOQ}\APACrefatitle {Towards the quantitative evaluation of visual attention
  models} {Towards the quantitative evaluation of visual attention
  models}.{\BBCQ}
\newblock
\APACjournalVolNumPages{Vision Research}{}{}{}.
\PrintBackRefs{\CurrentBib}

\bibitem [\protect \citeauthoryear {%
Chentanez%
, Barto%
\BCBL {}\ \BBA {} Singh%
}{%
Chentanez%
\ \protect \BOthers {.}}{%
{\protect \APACyear {2005}}%
}]{%
chentanez2005intrinsically}
\APACinsertmetastar {%
chentanez2005intrinsically}%
\begin{APACrefauthors}%
Chentanez, N.%
, Barto, A\BPBI G.%
\BCBL {}\ \BBA {} Singh, S\BPBI P.%
\end{APACrefauthors}%
\unskip\
\newblock
\APACrefYearMonthDay{2005}{}{}.
\newblock
{\BBOQ}\APACrefatitle {Intrinsically motivated reinforcement learning}
  {Intrinsically motivated reinforcement learning}.{\BBCQ}
\newblock
\BIn{} \APACrefbtitle {Advances in Neural Information Processing Systems.}
  {Advances in neural information processing systems.}
\PrintBackRefs{\CurrentBib}

\bibitem [\protect \citeauthoryear {%
Clement%
, Roy%
, Oudeyer%
\BCBL {}\ \BBA {} Lopes%
}{%
Clement%
\ \protect \BOthers {.}}{%
{\protect \APACyear {2015}}%
}]{%
clement2013multi}
\APACinsertmetastar {%
clement2013multi}%
\begin{APACrefauthors}%
Clement, B.%
, Roy, D.%
, Oudeyer, P\BHBI Y.%
\BCBL {}\ \BBA {} Lopes, M.%
\end{APACrefauthors}%
\unskip\
\newblock
\APACrefYearMonthDay{2015}{}{}.
\newblock
{\BBOQ}\APACrefatitle {Multi-armed bandits for intelligent tutoring systems}
  {Multi-armed bandits for intelligent tutoring systems}.{\BBCQ}
\newblock
\APACjournalVolNumPages{Journal of Educational Data Mining}{}{}{}.
\PrintBackRefs{\CurrentBib}

\bibitem [\protect \citeauthoryear {%
Cohn%
, Ghahramani%
\BCBL {}\ \BBA {} Jordan%
}{%
Cohn%
\ \protect \BOthers {.}}{%
{\protect \APACyear {1996}}%
}]{%
cohn1996active}
\APACinsertmetastar {%
cohn1996active}%
\begin{APACrefauthors}%
Cohn, D\BPBI A.%
, Ghahramani, Z.%
\BCBL {}\ \BBA {} Jordan, M\BPBI I.%
\end{APACrefauthors}%
\unskip\
\newblock
\APACrefYearMonthDay{1996}{}{}.
\newblock
{\BBOQ}\APACrefatitle {Active learning with statistical models} {Active
  learning with statistical models}.{\BBCQ}
\newblock
\APACjournalVolNumPages{Journal of Artificial Intelligence Research}{}{}{}.
\PrintBackRefs{\CurrentBib}

\bibitem [\protect \citeauthoryear {%
Colas%
, Sigaud%
\BCBL {}\ \BBA {} Oudeyer%
}{%
Colas%
\ \protect \BOthers {.}}{%
{\protect \APACyear {2018}}%
}]{%
colas2018gep}
\APACinsertmetastar {%
colas2018gep}%
\begin{APACrefauthors}%
Colas, C.%
, Sigaud, O.%
\BCBL {}\ \BBA {} Oudeyer, P\BHBI Y.%
\end{APACrefauthors}%
\unskip\
\newblock
\APACrefYearMonthDay{2018}{}{}.
\newblock
{\BBOQ}\APACrefatitle {{GEP-PG}: Decoupling Exploration and Exploitation in
  Deep Reinforcement Learning Algorithms} {{GEP-PG}: Decoupling exploration and
  exploitation in deep reinforcement learning algorithms}.{\BBCQ}
\newblock
\BIn{} \APACrefbtitle {International Conference on Machine Learning.}
  {International conference on machine learning.}
\PrintBackRefs{\CurrentBib}

\bibitem [\protect \citeauthoryear {%
de Abril%
\ \BBA {} Kanai%
}{%
de Abril%
\ \BBA {} Kanai%
}{%
{\protect \APACyear {2018}}%
}]{%
de2018curiosity}
\APACinsertmetastar {%
de2018curiosity}%
\begin{APACrefauthors}%
de Abril, I\BPBI M.%
\BCBT {}\ \BBA {} Kanai, R.%
\end{APACrefauthors}%
\unskip\
\newblock
\APACrefYearMonthDay{2018}{}{}.
\newblock
{\BBOQ}\APACrefatitle {Curiosity-driven reinforcement learning with homeostatic
  regulation} {Curiosity-driven reinforcement learning with homeostatic
  regulation}.{\BBCQ}
\newblock
\BIn{} \APACrefbtitle {2018 International Joint Conference on Neural Networks
  (IJCNN).} {2018 international joint conference on neural networks (ijcnn).}
\PrintBackRefs{\CurrentBib}

\bibitem [\protect \citeauthoryear {%
Garivier%
\ \BBA {} Moulines%
}{%
Garivier%
\ \BBA {} Moulines%
}{%
{\protect \APACyear {2011}}%
}]{%
garivier2011upper}
\APACinsertmetastar {%
garivier2011upper}%
\begin{APACrefauthors}%
Garivier, A.%
\BCBT {}\ \BBA {} Moulines, E.%
\end{APACrefauthors}%
\unskip\
\newblock
\APACrefYearMonthDay{2011}{}{}.
\newblock
{\BBOQ}\APACrefatitle {On upper-confidence bound policies for switching bandit
  problems} {On upper-confidence bound policies for switching bandit
  problems}.{\BBCQ}
\newblock
\BIn{} \APACrefbtitle {International Conference on Algorithmic Learning
  Theory.} {International conference on algorithmic learning theory.}
\PrintBackRefs{\CurrentBib}

\bibitem [\protect \citeauthoryear {%
Golovin%
\ \BBA {} Krause%
}{%
Golovin%
\ \BBA {} Krause%
}{%
{\protect \APACyear {2011}}%
}]{%
golovin2011adaptive}
\APACinsertmetastar {%
golovin2011adaptive}%
\begin{APACrefauthors}%
Golovin, D.%
\BCBT {}\ \BBA {} Krause, A.%
\end{APACrefauthors}%
\unskip\
\newblock
\APACrefYearMonthDay{2011}{}{}.
\newblock
{\BBOQ}\APACrefatitle {Adaptive submodularity: Theory and applications in
  active learning and stochastic optimization} {Adaptive submodularity: Theory
  and applications in active learning and stochastic optimization}.{\BBCQ}
\newblock
\APACjournalVolNumPages{Journal of Artificial Intelligence Research}{}{}{}.
\PrintBackRefs{\CurrentBib}

\bibitem [\protect \citeauthoryear {%
Gordon%
\ \BBA {} Ahissar%
}{%
Gordon%
\ \BBA {} Ahissar%
}{%
{\protect \APACyear {2011}}%
}]{%
gordon2011reinforcement}
\APACinsertmetastar {%
gordon2011reinforcement}%
\begin{APACrefauthors}%
Gordon, G.%
\BCBT {}\ \BBA {} Ahissar, E.%
\end{APACrefauthors}%
\unskip\
\newblock
\APACrefYearMonthDay{2011}{}{}.
\newblock
{\BBOQ}\APACrefatitle {Reinforcement active learning hierarchical loops}
  {Reinforcement active learning hierarchical loops}.{\BBCQ}
\newblock
\BIn{} \APACrefbtitle {International Joint Conference on Neural Networks.}
  {International joint conference on neural networks.}
\PrintBackRefs{\CurrentBib}

\bibitem [\protect \citeauthoryear {%
Graves%
, Bellemare%
, Menick%
, Munos%
\BCBL {}\ \BBA {} Kavukcuoglu%
}{%
Graves%
\ \protect \BOthers {.}}{%
{\protect \APACyear {2017}}%
}]{%
graves2017automated}
\APACinsertmetastar {%
graves2017automated}%
\begin{APACrefauthors}%
Graves, A.%
, Bellemare, M\BPBI G.%
, Menick, J.%
, Munos, R.%
\BCBL {}\ \BBA {} Kavukcuoglu, K.%
\end{APACrefauthors}%
\unskip\
\newblock
\APACrefYearMonthDay{2017}{}{}.
\newblock
{\BBOQ}\APACrefatitle {Automated curriculum learning for neural networks}
  {Automated curriculum learning for neural networks}.{\BBCQ}
\newblock
\BIn{} \APACrefbtitle {International Conference on Machine Learning.}
  {International conference on machine learning.}
\PrintBackRefs{\CurrentBib}

\bibitem [\protect \citeauthoryear {%
G{\"u}nther%
, Ady%
, Kearney%
, Dawson%
\BCBL {}\ \BBA {} Pilarski%
}{%
G{\"u}nther%
\ \protect \BOthers {.}}{%
{\protect \APACyear {2020}}%
}]{%
guenther2020examining}
\APACinsertmetastar {%
guenther2020examining}%
\begin{APACrefauthors}%
G{\"u}nther, J.%
, Ady, N\BPBI M.%
, Kearney, A.%
, Dawson, M\BPBI R.%
\BCBL {}\ \BBA {} Pilarski, P\BPBI M.%
\end{APACrefauthors}%
\unskip\
\newblock
\APACrefYearMonthDay{2020}{}{}.
\newblock
{\BBOQ}\APACrefatitle {Examining the Use of {Temporal-Difference Incremental
  Delta-Bar-Delta} for Real-World Predictive Knowledge Architectures}
  {Examining the use of {Temporal-Difference Incremental Delta-Bar-Delta} for
  real-world predictive knowledge architectures}.{\BBCQ}
\newblock
\APACjournalVolNumPages{Frontiers in Robotics and AI}{}{}{}.
\PrintBackRefs{\CurrentBib}

\bibitem [\protect \citeauthoryear {%
Gupta%
, Granmo%
\BCBL {}\ \BBA {} Agrawala%
}{%
Gupta%
\ \protect \BOthers {.}}{%
{\protect \APACyear {2011}}%
}]{%
gupta2011thompson}
\APACinsertmetastar {%
gupta2011thompson}%
\begin{APACrefauthors}%
Gupta, N.%
, Granmo, O\BHBI C.%
\BCBL {}\ \BBA {} Agrawala, A.%
\end{APACrefauthors}%
\unskip\
\newblock
\APACrefYearMonthDay{2011}{}{}.
\newblock
{\BBOQ}\APACrefatitle {Thompson sampling for dynamic multi-armed bandits}
  {Thompson sampling for dynamic multi-armed bandits}.{\BBCQ}
\newblock
\BIn{} \APACrefbtitle {International Conference on Machine Learning and
  Applications and Workshops.} {International conference on machine learning
  and applications and workshops.}
\PrintBackRefs{\CurrentBib}

\bibitem [\protect \citeauthoryear {%
Haber%
, Mrowca%
, Fei-Fei%
\BCBL {}\ \BBA {} Yamins%
}{%
Haber%
\ \protect \BOthers {.}}{%
{\protect \APACyear {2018}}%
}]{%
haber2018learning}
\APACinsertmetastar {%
haber2018learning}%
\begin{APACrefauthors}%
Haber, N.%
, Mrowca, D.%
, Fei-Fei, L.%
\BCBL {}\ \BBA {} Yamins, D\BPBI L.%
\end{APACrefauthors}%
\unskip\
\newblock
\APACrefYearMonthDay{2018}{}{}.
\newblock
{\BBOQ}\APACrefatitle {Learning to Play with Intrinsically-Motivated Self-Aware
  Agents} {Learning to play with intrinsically-motivated self-aware
  agents}.{\BBCQ}
\newblock
\BIn{} \APACrefbtitle {Advances in Neural Information Processing Systems.}
  {Advances in neural information processing systems.}
\PrintBackRefs{\CurrentBib}

\bibitem [\protect \citeauthoryear {%
Hester%
\ \BBA {} Stone%
}{%
Hester%
\ \BBA {} Stone%
}{%
{\protect \APACyear {2017}}%
}]{%
hester2017intrinsically}
\APACinsertmetastar {%
hester2017intrinsically}%
\begin{APACrefauthors}%
Hester, T.%
\BCBT {}\ \BBA {} Stone, P.%
\end{APACrefauthors}%
\unskip\
\newblock
\APACrefYearMonthDay{2017}{}{}.
\newblock
{\BBOQ}\APACrefatitle {Intrinsically motivated model learning for developing
  curious robots} {Intrinsically motivated model learning for developing
  curious robots}.{\BBCQ}
\newblock
\APACjournalVolNumPages{Artificial Intelligence}{}{}{}.
\PrintBackRefs{\CurrentBib}

\bibitem [\protect \citeauthoryear {%
Houthooft%
\ \protect \BOthers {.}}{%
Houthooft%
\ \protect \BOthers {.}}{%
{\protect \APACyear {2016}}%
}]{%
houthooft2016vime}
\APACinsertmetastar {%
houthooft2016vime}%
\begin{APACrefauthors}%
Houthooft, R.%
, Chen, X.%
, Duan, Y.%
, Schulman, J.%
, De~Turck, F.%
\BCBL {}\ \BBA {} Abbeel, P.%
\end{APACrefauthors}%
\unskip\
\newblock
\APACrefYearMonthDay{2016}{}{}.
\newblock
{\BBOQ}\APACrefatitle {Variational Information Maximizing Exploration}
  {Variational information maximizing exploration}.{\BBCQ}
\newblock
\BIn{} \APACrefbtitle {Advances in Neural Information Processing Systems.}
  {Advances in neural information processing systems.}
\PrintBackRefs{\CurrentBib}

\bibitem [\protect \citeauthoryear {%
Itti%
\ \BBA {} Baldi%
}{%
Itti%
\ \BBA {} Baldi%
}{%
{\protect \APACyear {2006}}%
}]{%
itti2006bayesian}
\APACinsertmetastar {%
itti2006bayesian}%
\begin{APACrefauthors}%
Itti, L.%
\BCBT {}\ \BBA {} Baldi, P\BPBI F.%
\end{APACrefauthors}%
\unskip\
\newblock
\APACrefYearMonthDay{2006}{}{}.
\newblock
{\BBOQ}\APACrefatitle {Bayesian surprise attracts human attention} {Bayesian
  surprise attracts human attention}.{\BBCQ}
\newblock
\BIn{} \APACrefbtitle {Advances in Neural Information Processing Systems.}
  {Advances in neural information processing systems.}
\PrintBackRefs{\CurrentBib}

\bibitem [\protect \citeauthoryear {%
Jacobsen%
\ \protect \BOthers {.}}{%
Jacobsen%
\ \protect \BOthers {.}}{%
{\protect \APACyear {2019}}%
}]{%
jacobsen2019meta}
\APACinsertmetastar {%
jacobsen2019meta}%
\begin{APACrefauthors}%
Jacobsen, A.%
, Schlegel, M.%
, Linke, C.%
, Degris, T.%
, White, A.%
\BCBL {}\ \BBA {} White, M.%
\end{APACrefauthors}%
\unskip\
\newblock
\APACrefYearMonthDay{2019}{}{}.
\newblock
{\BBOQ}\APACrefatitle {Meta-descent for Online, Continual Prediction}
  {Meta-descent for online, continual prediction}.{\BBCQ}
\newblock
\BIn{} \APACrefbtitle {{A}{A}{A}{I} Conference on Artificial Intelligence.}
  {{A}{A}{A}{I} conference on artificial intelligence.}
\PrintBackRefs{\CurrentBib}

\bibitem [\protect \citeauthoryear {%
Jaderberg%
\ \protect \BOthers {.}}{%
Jaderberg%
\ \protect \BOthers {.}}{%
{\protect \APACyear {2016}}%
}]{%
jaderberg2016reinforcement}
\APACinsertmetastar {%
jaderberg2016reinforcement}%
\begin{APACrefauthors}%
Jaderberg, M.%
, Mnih, V.%
, Czarnecki, W\BPBI M.%
, Schaul, T.%
, Leibo, J\BPBI Z.%
, Silver, D.%
\BCBL {}\ \BBA {} Kavukcuoglu, K.%
\end{APACrefauthors}%
\unskip\
\newblock
\APACrefYearMonthDay{2016}{}{}.
\newblock
{\BBOQ}\APACrefatitle {Reinforcement learning with unsupervised auxiliary
  tasks} {Reinforcement learning with unsupervised auxiliary tasks}.{\BBCQ}
\newblock
\BIn{} \APACrefbtitle {Proceedings of the International Conference on Learning
  Representations.} {Proceedings of the international conference on learning
  representations.}
\PrintBackRefs{\CurrentBib}

\bibitem [\protect \citeauthoryear {%
Karl%
\ \protect \BOthers {.}}{%
Karl%
\ \protect \BOthers {.}}{%
{\protect \APACyear {2017}}%
}]{%
karl2017unsupervised}
\APACinsertmetastar {%
karl2017unsupervised}%
\begin{APACrefauthors}%
Karl, M.%
, Soelch, M.%
, Becker-Ehmck, P.%
, Benbouzid, D.%
, van~der Smagt, P.%
\BCBL {}\ \BBA {} Bayer, J.%
\end{APACrefauthors}%
\unskip\
\newblock
\APACrefYearMonthDay{2017}{}{}.
\newblock
{\BBOQ}\APACrefatitle {Unsupervised Real-Time Control through Variational
  Empowerment} {Unsupervised real-time control through variational
  empowerment}.{\BBCQ}
\newblock
\APACjournalVolNumPages{arXiv:1710.05101}{}{}{}.
\PrintBackRefs{\CurrentBib}

\bibitem [\protect \citeauthoryear {%
Kearney%
, Veeriah%
, Travnik%
, Pilarski%
\BCBL {}\ \BBA {} Sutton%
}{%
Kearney%
\ \protect \BOthers {.}}{%
{\protect \APACyear {2019}}%
}]{%
kearney2019learning}
\APACinsertmetastar {%
kearney2019learning}%
\begin{APACrefauthors}%
Kearney, A.%
, Veeriah, V.%
, Travnik, J.%
, Pilarski, P\BPBI M.%
\BCBL {}\ \BBA {} Sutton, R\BPBI S.%
\end{APACrefauthors}%
\unskip\
\newblock
\APACrefYearMonthDay{2019}{}{}.
\newblock
{\BBOQ}\APACrefatitle {Learning Feature Relevance Through Step Size Adaptation
  in Temporal-Difference Learning} {Learning feature relevance through step
  size adaptation in temporal-difference learning}.{\BBCQ}
\newblock
\APACjournalVolNumPages{arXiv:1903.03252}{}{}{}.
\PrintBackRefs{\CurrentBib}

\bibitem [\protect \citeauthoryear {%
Kearney%
, Veeriah%
, Travnik%
, Sutton%
\BCBL {}\ \BBA {} Pilarski%
}{%
Kearney%
\ \protect \BOthers {.}}{%
{\protect \APACyear {2018}}%
}]{%
kearney2018tidbd}
\APACinsertmetastar {%
kearney2018tidbd}%
\begin{APACrefauthors}%
Kearney, A.%
, Veeriah, V.%
, Travnik, J\BPBI B.%
, Sutton, R\BPBI S.%
\BCBL {}\ \BBA {} Pilarski, P\BPBI M.%
\end{APACrefauthors}%
\unskip\
\newblock
\APACrefYearMonthDay{2018}{}{}.
\newblock
{\BBOQ}\APACrefatitle {{T}{I}{D}{B}{D}: Adapting temporal-difference step-sizes
  through stochastic meta-descent} {{T}{I}{D}{B}{D}: Adapting
  temporal-difference step-sizes through stochastic meta-descent}.{\BBCQ}
\newblock
\APACjournalVolNumPages{arXiv preprint arXiv:1804.03334}{}{}{}.
\PrintBackRefs{\CurrentBib}

\bibitem [\protect \citeauthoryear {%
Kingma%
\ \BBA {} Ba%
}{%
Kingma%
\ \BBA {} Ba%
}{%
{\protect \APACyear {2015}}%
}]{%
kingma2015adam}
\APACinsertmetastar {%
kingma2015adam}%
\begin{APACrefauthors}%
Kingma, D\BPBI P.%
\BCBT {}\ \BBA {} Ba, J.%
\end{APACrefauthors}%
\unskip\
\newblock
\APACrefYearMonthDay{2015}{}{}.
\newblock
{\BBOQ}\APACrefatitle {{Adam: A Method for Stochastic Optimization}} {{Adam: A
  Method for Stochastic Optimization}}.{\BBCQ}
\newblock
\BIn{} \APACrefbtitle {International Conference on Machine Learning.}
  {International conference on machine learning.}
\PrintBackRefs{\CurrentBib}

\bibitem [\protect \citeauthoryear {%
Konyushkova%
, Sznitman%
\BCBL {}\ \BBA {} Fua%
}{%
Konyushkova%
\ \protect \BOthers {.}}{%
{\protect \APACyear {2017}}%
}]{%
konyushkova2017learning}
\APACinsertmetastar {%
konyushkova2017learning}%
\begin{APACrefauthors}%
Konyushkova, K.%
, Sznitman, R.%
\BCBL {}\ \BBA {} Fua, P.%
\end{APACrefauthors}%
\unskip\
\newblock
\APACrefYearMonthDay{2017}{}{}.
\newblock
{\BBOQ}\APACrefatitle {Learning active learning from data} {Learning active
  learning from data}.{\BBCQ}
\newblock
\BIn{} \APACrefbtitle {Advances in Neural Information Processing Systems.}
  {Advances in neural information processing systems.}
\PrintBackRefs{\CurrentBib}

\bibitem [\protect \citeauthoryear {%
Kulkarni%
, Narasimhan%
, Saeedi%
\BCBL {}\ \BBA {} Tenenbaum%
}{%
Kulkarni%
\ \protect \BOthers {.}}{%
{\protect \APACyear {2016}}%
}]{%
kulkarni2016hierarchical}
\APACinsertmetastar {%
kulkarni2016hierarchical}%
\begin{APACrefauthors}%
Kulkarni, T\BPBI D.%
, Narasimhan, K.%
, Saeedi, A.%
\BCBL {}\ \BBA {} Tenenbaum, J.%
\end{APACrefauthors}%
\unskip\
\newblock
\APACrefYearMonthDay{2016}{}{}.
\newblock
{\BBOQ}\APACrefatitle {Hierarchical deep reinforcement learning: Integrating
  temporal abstraction and intrinsic motivation} {Hierarchical deep
  reinforcement learning: Integrating temporal abstraction and intrinsic
  motivation}.{\BBCQ}
\newblock
\BIn{} \APACrefbtitle {Advances in Neural Information Processing Systems.}
  {Advances in neural information processing systems.}
\PrintBackRefs{\CurrentBib}

\bibitem [\protect \citeauthoryear {%
Lattimore%
\ \BBA {} Szepesv{\'a}ri%
}{%
Lattimore%
\ \BBA {} Szepesv{\'a}ri%
}{%
{\protect \APACyear {2019}}%
}]{%
lattimore2019cleaning}
\APACinsertmetastar {%
lattimore2019cleaning}%
\begin{APACrefauthors}%
Lattimore, T.%
\BCBT {}\ \BBA {} Szepesv{\'a}ri, C.%
\end{APACrefauthors}%
\unskip\
\newblock
\APACrefYearMonthDay{2019}{}{}.
\newblock
{\BBOQ}\APACrefatitle {Cleaning up the neighborhood: A full classification for
  adversarial partial monitoring} {Cleaning up the neighborhood: A full
  classification for adversarial partial monitoring}.{\BBCQ}
\newblock
\BIn{} \APACrefbtitle {International Conference on Algorithmic Learning
  Theory.} {International conference on algorithmic learning theory.}
\PrintBackRefs{\CurrentBib}

\bibitem [\protect \citeauthoryear {%
Little%
\ \BBA {} Sommer%
}{%
Little%
\ \BBA {} Sommer%
}{%
{\protect \APACyear {2013}}%
}]{%
little2013learning}
\APACinsertmetastar {%
little2013learning}%
\begin{APACrefauthors}%
Little, D\BPBI Y\BHBI J.%
\BCBT {}\ \BBA {} Sommer, F\BPBI T.%
\end{APACrefauthors}%
\unskip\
\newblock
\APACrefYearMonthDay{2013}{}{}.
\newblock
{\BBOQ}\APACrefatitle {Learning and exploration in action-perception loops}
  {Learning and exploration in action-perception loops}.{\BBCQ}
\newblock
\APACjournalVolNumPages{Frontiers in Neural Circuits}{}{}{}.
\PrintBackRefs{\CurrentBib}

\bibitem [\protect \citeauthoryear {%
Liu%
, Mandel%
, Brunskill%
\BCBL {}\ \BBA {} Popovic%
}{%
Liu%
\ \protect \BOthers {.}}{%
{\protect \APACyear {2014}}%
}]{%
liu2014trading}
\APACinsertmetastar {%
liu2014trading}%
\begin{APACrefauthors}%
Liu, Y\BHBI E.%
, Mandel, T.%
, Brunskill, E.%
\BCBL {}\ \BBA {} Popovic, Z.%
\end{APACrefauthors}%
\unskip\
\newblock
\APACrefYearMonthDay{2014}{}{}.
\newblock
{\BBOQ}\APACrefatitle {Trading Off Scientific Knowledge and User Learning with
  Multi-Armed Bandits} {Trading off scientific knowledge and user learning with
  multi-armed bandits}.{\BBCQ}
\newblock
\BIn{} \APACrefbtitle {International Conference on Educational Data Mining.}
  {International conference on educational data mining.}
\PrintBackRefs{\CurrentBib}

\bibitem [\protect \citeauthoryear {%
Lopes%
, Lang%
, Toussaint%
\BCBL {}\ \BBA {} Oudeyer%
}{%
Lopes%
\ \protect \BOthers {.}}{%
{\protect \APACyear {2012}}%
}]{%
lopes2012exploration}
\APACinsertmetastar {%
lopes2012exploration}%
\begin{APACrefauthors}%
Lopes, M.%
, Lang, T.%
, Toussaint, M.%
\BCBL {}\ \BBA {} Oudeyer, P\BHBI Y.%
\end{APACrefauthors}%
\unskip\
\newblock
\APACrefYearMonthDay{2012}{}{}.
\newblock
{\BBOQ}\APACrefatitle {Exploration in model-based reinforcement learning by
  empirically estimating learning progress} {Exploration in model-based
  reinforcement learning by empirically estimating learning progress}.{\BBCQ}
\newblock
\BIn{} \APACrefbtitle {Advances in Neural Information Processing Systems.}
  {Advances in neural information processing systems.}
\PrintBackRefs{\CurrentBib}

\bibitem [\protect \citeauthoryear {%
Maei%
, Szepesv{\'a}ri%
, Bhatnagar%
\BCBL {}\ \BBA {} Sutton%
}{%
Maei%
\ \protect \BOthers {.}}{%
{\protect \APACyear {2010}}%
}]{%
maei2010toward}
\APACinsertmetastar {%
maei2010toward}%
\begin{APACrefauthors}%
Maei, H\BPBI R.%
, Szepesv{\'a}ri, C.%
, Bhatnagar, S.%
\BCBL {}\ \BBA {} Sutton, R\BPBI S.%
\end{APACrefauthors}%
\unskip\
\newblock
\APACrefYearMonthDay{2010}{}{}.
\newblock
{\BBOQ}\APACrefatitle {Toward off-policy learning control with function
  approximation} {Toward off-policy learning control with function
  approximation}.{\BBCQ}
\newblock
\BIn{} \APACrefbtitle {International Conference on Machine Learning.}
  {International conference on machine learning.}
\PrintBackRefs{\CurrentBib}

\bibitem [\protect \citeauthoryear {%
Mahmood%
, Sutton%
, Degris%
\BCBL {}\ \BBA {} Pilarski%
}{%
Mahmood%
\ \protect \BOthers {.}}{%
{\protect \APACyear {2012}}%
}]{%
mahmood2012tuning}
\APACinsertmetastar {%
mahmood2012tuning}%
\begin{APACrefauthors}%
Mahmood, A\BPBI R.%
, Sutton, R\BPBI S.%
, Degris, T.%
\BCBL {}\ \BBA {} Pilarski, P\BPBI M.%
\end{APACrefauthors}%
\unskip\
\newblock
\APACrefYearMonthDay{2012}{}{}.
\newblock
{\BBOQ}\APACrefatitle {Tuning-free step-size adaptation} {Tuning-free step-size
  adaptation}.{\BBCQ}
\newblock
\BIn{} \APACrefbtitle {International Conference on Acoustics, Speech and Signal
  Processing.} {International conference on acoustics, speech and signal
  processing.}
\PrintBackRefs{\CurrentBib}

\bibitem [\protect \citeauthoryear {%
Martin%
, Sasikumar%
, Everitt%
\BCBL {}\ \BBA {} Hutter%
}{%
Martin%
\ \protect \BOthers {.}}{%
{\protect \APACyear {2017}}%
}]{%
martin2017count}
\APACinsertmetastar {%
martin2017count}%
\begin{APACrefauthors}%
Martin, J.%
, Sasikumar, S\BPBI N.%
, Everitt, T.%
\BCBL {}\ \BBA {} Hutter, M.%
\end{APACrefauthors}%
\unskip\
\newblock
\APACrefYearMonthDay{2017}{}{}.
\newblock
{\BBOQ}\APACrefatitle {Count-based exploration in feature space for
  reinforcement learning} {Count-based exploration in feature space for
  reinforcement learning}.{\BBCQ}
\newblock
\BIn{} \APACrefbtitle {International Joint Conference on Artificial
  Intelligence.} {International joint conference on artificial intelligence.}
\PrintBackRefs{\CurrentBib}

\bibitem [\protect \citeauthoryear {%
Meuleau%
\ \BBA {} Bourgine%
}{%
Meuleau%
\ \BBA {} Bourgine%
}{%
{\protect \APACyear {1999}}%
}]{%
meuleau1999exploration}
\APACinsertmetastar {%
meuleau1999exploration}%
\begin{APACrefauthors}%
Meuleau, N.%
\BCBT {}\ \BBA {} Bourgine, P.%
\end{APACrefauthors}%
\unskip\
\newblock
\APACrefYearMonthDay{1999}{}{}.
\newblock
{\BBOQ}\APACrefatitle {Exploration of multi-state environments: Local measures
  and back-propagation of uncertainty} {Exploration of multi-state
  environments: Local measures and back-propagation of uncertainty}.{\BBCQ}
\newblock
\APACjournalVolNumPages{Machine Learning}{}{}{}.
\PrintBackRefs{\CurrentBib}

\bibitem [\protect \citeauthoryear {%
Mirolli%
\ \BBA {} Baldassarre%
}{%
Mirolli%
\ \BBA {} Baldassarre%
}{%
{\protect \APACyear {2013}}%
}]{%
mirolli2013functions}
\APACinsertmetastar {%
mirolli2013functions}%
\begin{APACrefauthors}%
Mirolli, M.%
\BCBT {}\ \BBA {} Baldassarre, G.%
\end{APACrefauthors}%
\unskip\
\newblock
\APACrefYearMonthDay{2013}{}{}.
\newblock
{\BBOQ}\APACrefatitle {Functions and mechanisms of intrinsic motivations}
  {Functions and mechanisms of intrinsic motivations}.{\BBCQ}
\newblock
\BIn{} \APACrefbtitle {Intrinsically Motivated Learning in Natural and
  Artificial Systems.} {Intrinsically motivated learning in natural and
  artificial systems.}
\newblock
\APACaddressPublisher{}{Springer}.
\PrintBackRefs{\CurrentBib}

\bibitem [\protect \citeauthoryear {%
Modayil%
, White%
\BCBL {}\ \BBA {} Sutton%
}{%
Modayil%
\ \protect \BOthers {.}}{%
{\protect \APACyear {2014}}%
}]{%
modayil2014multi}
\APACinsertmetastar {%
modayil2014multi}%
\begin{APACrefauthors}%
Modayil, J.%
, White, A.%
\BCBL {}\ \BBA {} Sutton, R\BPBI S.%
\end{APACrefauthors}%
\unskip\
\newblock
\APACrefYearMonthDay{2014}{}{}.
\newblock
{\BBOQ}\APACrefatitle {Multi-timescale nexting in a reinforcement learning
  robot} {Multi-timescale nexting in a reinforcement learning robot}.{\BBCQ}
\newblock
\APACjournalVolNumPages{Adaptive Behavior}{}{}{}.
\PrintBackRefs{\CurrentBib}

\bibitem [\protect \citeauthoryear {%
Orseau%
, Lattimore%
\BCBL {}\ \BBA {} Hutter%
}{%
Orseau%
\ \protect \BOthers {.}}{%
{\protect \APACyear {2013}}%
}]{%
orseau2013universal}
\APACinsertmetastar {%
orseau2013universal}%
\begin{APACrefauthors}%
Orseau, L.%
, Lattimore, T.%
\BCBL {}\ \BBA {} Hutter, M.%
\end{APACrefauthors}%
\unskip\
\newblock
\APACrefYearMonthDay{2013}{}{}.
\newblock
{\BBOQ}\APACrefatitle {Universal knowledge-seeking agents for stochastic
  environments} {Universal knowledge-seeking agents for stochastic
  environments}.{\BBCQ}
\newblock
\BIn{} \APACrefbtitle {International Conference on Algorithmic Learning Theory}
  {International conference on algorithmic learning theory}\ (\BPGS\ 158--172).
\PrintBackRefs{\CurrentBib}

\bibitem [\protect \citeauthoryear {%
Oudeyer%
, Kaplan%
\BCBL {}\ \BBA {} Hafner%
}{%
Oudeyer%
\ \protect \BOthers {.}}{%
{\protect \APACyear {2007}}%
}]{%
oudeyer2007intrinsic}
\APACinsertmetastar {%
oudeyer2007intrinsic}%
\begin{APACrefauthors}%
Oudeyer, P\BHBI Y.%
, Kaplan, F.%
\BCBL {}\ \BBA {} Hafner, V.%
\end{APACrefauthors}%
\unskip\
\newblock
\APACrefYearMonthDay{2007}{}{}.
\newblock
{\BBOQ}\APACrefatitle {Intrinsic motivation systems for autonomous mental
  development} {Intrinsic motivation systems for autonomous mental
  development}.{\BBCQ}
\newblock
\APACjournalVolNumPages{IEEE Transactions on Evolutionary Computation}{}{}{}.
\PrintBackRefs{\CurrentBib}

\bibitem [\protect \citeauthoryear {%
Pathak%
, Agrawal%
, Efros%
\BCBL {}\ \BBA {} Darrell%
}{%
Pathak%
\ \protect \BOthers {.}}{%
{\protect \APACyear {2017}}%
}]{%
pathak2017curiosity}
\APACinsertmetastar {%
pathak2017curiosity}%
\begin{APACrefauthors}%
Pathak, D.%
, Agrawal, P.%
, Efros, A\BPBI A.%
\BCBL {}\ \BBA {} Darrell, T.%
\end{APACrefauthors}%
\unskip\
\newblock
\APACrefYearMonthDay{2017}{}{}.
\newblock
{\BBOQ}\APACrefatitle {Curiosity-driven exploration by self-supervised
  prediction} {Curiosity-driven exploration by self-supervised
  prediction}.{\BBCQ}
\newblock
\BIn{} \APACrefbtitle {International Conference on Machine Learning.}
  {International conference on machine learning.}
\PrintBackRefs{\CurrentBib}

\bibitem [\protect \citeauthoryear {%
Pathak%
, Gandhi%
\BCBL {}\ \BBA {} Gupta%
}{%
Pathak%
\ \protect \BOthers {.}}{%
{\protect \APACyear {2019}}%
}]{%
pathak2019self}
\APACinsertmetastar {%
pathak2019self}%
\begin{APACrefauthors}%
Pathak, D.%
, Gandhi, D.%
\BCBL {}\ \BBA {} Gupta, A.%
\end{APACrefauthors}%
\unskip\
\newblock
\APACrefYearMonthDay{2019}{}{}.
\newblock
{\BBOQ}\APACrefatitle {Self-supervised exploration via disagreement}
  {Self-supervised exploration via disagreement}.{\BBCQ}
\newblock
\BIn{} \APACrefbtitle {International Conference on Machine Learning.}
  {International conference on machine learning.}
\PrintBackRefs{\CurrentBib}

\bibitem [\protect \citeauthoryear {%
Patten%
, Martens%
\BCBL {}\ \BBA {} Fitch%
}{%
Patten%
\ \protect \BOthers {.}}{%
{\protect \APACyear {2018}}%
}]{%
patten2018monte}
\APACinsertmetastar {%
patten2018monte}%
\begin{APACrefauthors}%
Patten, T.%
, Martens, W.%
\BCBL {}\ \BBA {} Fitch, R.%
\end{APACrefauthors}%
\unskip\
\newblock
\APACrefYearMonthDay{2018}{}{}.
\newblock
{\BBOQ}\APACrefatitle {Monte {C}arlo planning for active object classification}
  {Monte {C}arlo planning for active object classification}.{\BBCQ}
\newblock
\APACjournalVolNumPages{Autonomous Robots}{}{}{}.
\PrintBackRefs{\CurrentBib}

\bibitem [\protect \citeauthoryear {%
P{\'e}r{\'e}%
, Forestier%
, Sigaud%
\BCBL {}\ \BBA {} Oudeyer%
}{%
P{\'e}r{\'e}%
\ \protect \BOthers {.}}{%
{\protect \APACyear {2018}}%
}]{%
pere2018unsupervised}
\APACinsertmetastar {%
pere2018unsupervised}%
\begin{APACrefauthors}%
P{\'e}r{\'e}, A.%
, Forestier, S.%
, Sigaud, O.%
\BCBL {}\ \BBA {} Oudeyer, P\BHBI Y.%
\end{APACrefauthors}%
\unskip\
\newblock
\APACrefYearMonthDay{2018}{}{}.
\newblock
{\BBOQ}\APACrefatitle {Unsupervised Learning of Goal Spaces for Intrinsically
  Motivated Goal Exploration} {Unsupervised learning of goal spaces for
  intrinsically motivated goal exploration}.{\BBCQ}
\newblock
\BIn{} \APACrefbtitle {International Conference on Learning Representations.}
  {International conference on learning representations.}
\PrintBackRefs{\CurrentBib}

\bibitem [\protect \citeauthoryear {%
Riedmiller%
\ \protect \BOthers {.}}{%
Riedmiller%
\ \protect \BOthers {.}}{%
{\protect \APACyear {2018}}%
}]{%
riedmiller2018learning}
\APACinsertmetastar {%
riedmiller2018learning}%
\begin{APACrefauthors}%
Riedmiller, M.%
, Hafner, R.%
, Lampe, T.%
, Neunert, M.%
, Degrave, J.%
, Van~de Wiele, T.%
\BDBL {}Springenberg, J\BPBI T.%
\end{APACrefauthors}%
\unskip\
\newblock
\APACrefYearMonthDay{2018}{}{}.
\newblock
{\BBOQ}\APACrefatitle {Learning by Playing-{S}olving Sparse Reward Tasks from
  Scratch} {Learning by playing-{S}olving sparse reward tasks from
  scratch}.{\BBCQ}
\newblock
\BIn{} \APACrefbtitle {International Conference on Machine Learning.}
  {International conference on machine learning.}
\PrintBackRefs{\CurrentBib}

\bibitem [\protect \citeauthoryear {%
Santucci%
, Baldassarre%
\BCBL {}\ \BBA {} Mirolli%
}{%
Santucci%
\ \protect \BOthers {.}}{%
{\protect \APACyear {2013}}%
}]{%
santucci2013intrinsic}
\APACinsertmetastar {%
santucci2013intrinsic}%
\begin{APACrefauthors}%
Santucci, V\BPBI G.%
, Baldassarre, G.%
\BCBL {}\ \BBA {} Mirolli, M.%
\end{APACrefauthors}%
\unskip\
\newblock
\APACrefYearMonthDay{2013}{}{}.
\newblock
{\BBOQ}\APACrefatitle {Which is the best intrinsic motivation signal for
  learning multiple skills?} {Which is the best intrinsic motivation signal for
  learning multiple skills?}{\BBCQ}
\newblock
\APACjournalVolNumPages{Frontiers in Neurorobotics}{7}{}{22}.
\PrintBackRefs{\CurrentBib}

\bibitem [\protect \citeauthoryear {%
Satsangi%
, Lim%
, Whiteson%
, Oliehoek%
\BCBL {}\ \BBA {} White%
}{%
Satsangi%
\ \protect \BOthers {.}}{%
{\protect \APACyear {2020}}%
}]{%
satsangi2020maximizing}
\APACinsertmetastar {%
satsangi2020maximizing}%
\begin{APACrefauthors}%
Satsangi, Y.%
, Lim, S.%
, Whiteson, S.%
, Oliehoek, F.%
\BCBL {}\ \BBA {} White, M.%
\end{APACrefauthors}%
\unskip\
\newblock
\APACrefYearMonthDay{2020}{}{}.
\newblock
{\BBOQ}\APACrefatitle {Maximizing Information Gain in Partially Observable
  Environments via Prediction Rewards} {Maximizing information gain in
  partially observable environments via prediction rewards}.{\BBCQ}
\newblock
\BIn{} \APACrefbtitle {International Conference on Autonomous Agents and
  Multiagent Systems.} {International conference on autonomous agents and
  multiagent systems.}
\PrintBackRefs{\CurrentBib}

\bibitem [\protect \citeauthoryear {%
Satsangi%
, Whiteson%
, Oliehoek%
\BCBL {}\ \BBA {} Spaan%
}{%
Satsangi%
\ \protect \BOthers {.}}{%
{\protect \APACyear {2018}}%
}]{%
satsangi2018exploiting}
\APACinsertmetastar {%
satsangi2018exploiting}%
\begin{APACrefauthors}%
Satsangi, Y.%
, Whiteson, S.%
, Oliehoek, F\BPBI A.%
\BCBL {}\ \BBA {} Spaan, M\BPBI T.%
\end{APACrefauthors}%
\unskip\
\newblock
\APACrefYearMonthDay{2018}{}{}.
\newblock
{\BBOQ}\APACrefatitle {Exploiting submodular value functions for scaling up
  active perception} {Exploiting submodular value functions for scaling up
  active perception}.{\BBCQ}
\newblock
\APACjournalVolNumPages{Autonomous Robots}{}{}{}.
\PrintBackRefs{\CurrentBib}

\bibitem [\protect \citeauthoryear {%
Schaul%
, Horgan%
, Gregor%
\BCBL {}\ \BBA {} Silver%
}{%
Schaul%
\ \protect \BOthers {.}}{%
{\protect \APACyear {2015}}%
}]{%
schaul2015universal}
\APACinsertmetastar {%
schaul2015universal}%
\begin{APACrefauthors}%
Schaul, T.%
, Horgan, D.%
, Gregor, K.%
\BCBL {}\ \BBA {} Silver, D.%
\end{APACrefauthors}%
\unskip\
\newblock
\APACrefYearMonthDay{2015}{}{}.
\newblock
{\BBOQ}\APACrefatitle {Universal value function approximators} {Universal value
  function approximators}.{\BBCQ}
\newblock
\BIn{} \APACrefbtitle {International Conference on Machine Learning.}
  {International conference on machine learning.}
\PrintBackRefs{\CurrentBib}

\bibitem [\protect \citeauthoryear {%
Schembri%
, Mirolli%
\BCBL {}\ \BBA {} Baldassarre%
}{%
Schembri%
\ \protect \BOthers {.}}{%
{\protect \APACyear {2007}}%
}]{%
schembri2007evolving}
\APACinsertmetastar {%
schembri2007evolving}%
\begin{APACrefauthors}%
Schembri, M.%
, Mirolli, M.%
\BCBL {}\ \BBA {} Baldassarre, G.%
\end{APACrefauthors}%
\unskip\
\newblock
\APACrefYearMonthDay{2007}{}{}.
\newblock
{\BBOQ}\APACrefatitle {Evolving childhood's length and learning parameters in
  an intrinsically motivated reinforcement learning robot} {Evolving
  childhood's length and learning parameters in an intrinsically motivated
  reinforcement learning robot}.{\BBCQ}
\newblock
\BIn{} \APACrefbtitle {International Conference on Epigenetic Robotics:
  Modeling Cognitive Development in Robotic Systems.} {International conference
  on epigenetic robotics: Modeling cognitive development in robotic systems.}
\PrintBackRefs{\CurrentBib}

\bibitem [\protect \citeauthoryear {%
Schmidhuber%
}{%
Schmidhuber%
}{%
{\protect \APACyear {1991}}%
{\protect \APACexlab {{\protect \BCnt {1}}}}}]{%
schmidhuber1991curious}
\APACinsertmetastar {%
schmidhuber1991curious}%
\begin{APACrefauthors}%
Schmidhuber, J.%
\end{APACrefauthors}%
\unskip\
\newblock
\APACrefYearMonthDay{1991{\protect \BCnt {1}}}{}{}.
\newblock
{\BBOQ}\APACrefatitle {Curious model-building control systems} {Curious
  model-building control systems}.{\BBCQ}
\newblock
\BIn{} \APACrefbtitle {International Joint Conference on Neural Networks.}
  {International joint conference on neural networks.}
\PrintBackRefs{\CurrentBib}

\bibitem [\protect \citeauthoryear {%
Schmidhuber%
}{%
Schmidhuber%
}{%
{\protect \APACyear {1991}}%
{\protect \APACexlab {{\protect \BCnt {2}}}}}]{%
schmidhuber1991possibility}
\APACinsertmetastar {%
schmidhuber1991possibility}%
\begin{APACrefauthors}%
Schmidhuber, J.%
\end{APACrefauthors}%
\unskip\
\newblock
\APACrefYearMonthDay{1991{\protect \BCnt {2}}}{}{}.
\newblock
{\BBOQ}\APACrefatitle {A possibility for implementing curiosity and boredom in
  model-building neural controllers} {A possibility for implementing curiosity
  and boredom in model-building neural controllers}.{\BBCQ}
\newblock
\BIn{} \APACrefbtitle {International Conference on Simulation of Adaptive
  Behavior: From Animals to Animats.} {International conference on simulation
  of adaptive behavior: From animals to animats.}
\PrintBackRefs{\CurrentBib}

\bibitem [\protect \citeauthoryear {%
Schmidhuber%
}{%
Schmidhuber%
}{%
{\protect \APACyear {2008}}%
}]{%
schmidhuber2008driven}
\APACinsertmetastar {%
schmidhuber2008driven}%
\begin{APACrefauthors}%
Schmidhuber, J.%
\end{APACrefauthors}%
\unskip\
\newblock
\APACrefYearMonthDay{2008}{}{}.
\newblock
{\BBOQ}\APACrefatitle {Driven by compression progress: A simple principle
  explains essential aspects of subjective beauty, novelty, surprise,
  interestingness, attention, curiosity, creativity, art, science, music,
  jokes} {Driven by compression progress: A simple principle explains essential
  aspects of subjective beauty, novelty, surprise, interestingness, attention,
  curiosity, creativity, art, science, music, jokes}.{\BBCQ}
\newblock
\BIn{} \APACrefbtitle {Workshop on Anticipatory Behavior in Adaptive Learning
  Systems.} {Workshop on anticipatory behavior in adaptive learning systems.}
\PrintBackRefs{\CurrentBib}

\bibitem [\protect \citeauthoryear {%
Schossau%
, Adami%
\BCBL {}\ \BBA {} Hintze%
}{%
Schossau%
\ \protect \BOthers {.}}{%
{\protect \APACyear {2016}}%
}]{%
schossau2016information}
\APACinsertmetastar {%
schossau2016information}%
\begin{APACrefauthors}%
Schossau, J.%
, Adami, C.%
\BCBL {}\ \BBA {} Hintze, A.%
\end{APACrefauthors}%
\unskip\
\newblock
\APACrefYearMonthDay{2016}{}{}.
\newblock
{\BBOQ}\APACrefatitle {Information-theoretic neuro-correlates boost evolution
  of cognitive systems} {Information-theoretic neuro-correlates boost evolution
  of cognitive systems}.{\BBCQ}
\newblock
\APACjournalVolNumPages{Entropy}{18}{1}{6}.
\PrintBackRefs{\CurrentBib}

\bibitem [\protect \citeauthoryear {%
Settles%
}{%
Settles%
}{%
{\protect \APACyear {2009}}%
}]{%
settles2009active}
\APACinsertmetastar {%
settles2009active}%
\begin{APACrefauthors}%
Settles, B.%
\end{APACrefauthors}%
\unskip\
\newblock
\APACrefYearMonthDay{2009}{}{}.
\newblock
\APACrefbtitle {Active learning literature survey} {Active learning literature
  survey}\ \APACbVolEdTR{}{\BTR{}}.
\newblock
\APACaddressInstitution{}{University of Wisconsin-Madison Department of
  Computer Sciences}.
\PrintBackRefs{\CurrentBib}

\bibitem [\protect \citeauthoryear {%
Silver%
\ \protect \BOthers {.}}{%
Silver%
\ \protect \BOthers {.}}{%
{\protect \APACyear {2017}}%
}]{%
silver2017predictron}
\APACinsertmetastar {%
silver2017predictron}%
\begin{APACrefauthors}%
Silver, D.%
, van Hasselt, H.%
, Hessel, M.%
, Schaul, T.%
, Guez, A.%
, Harley, T.%
\BDBL {}Degris, T.%
\end{APACrefauthors}%
\unskip\
\newblock
\APACrefYearMonthDay{2017}{}{}.
\newblock
{\BBOQ}\APACrefatitle {The {Predictron}: End-to-end learning and planning} {The
  {Predictron}: End-to-end learning and planning}.{\BBCQ}
\newblock
\BIn{} \APACrefbtitle {International Conference on Machine Learning.}
  {International conference on machine learning.}
\PrintBackRefs{\CurrentBib}

\bibitem [\protect \citeauthoryear {%
Slivkins%
\ \BBA {} Upfal%
}{%
Slivkins%
\ \BBA {} Upfal%
}{%
{\protect \APACyear {2008}}%
}]{%
slivkins2008adapting}
\APACinsertmetastar {%
slivkins2008adapting}%
\begin{APACrefauthors}%
Slivkins, A.%
\BCBT {}\ \BBA {} Upfal, E.%
\end{APACrefauthors}%
\unskip\
\newblock
\APACrefYearMonthDay{2008}{}{}.
\newblock
{\BBOQ}\APACrefatitle {Adapting to a Changing Environment: the Brownian
  Restless Bandits.} {Adapting to a changing environment: the brownian restless
  bandits.}{\BBCQ}
\newblock
\BIn{} \APACrefbtitle {International Conference on Learning Theory.}
  {International conference on learning theory.}
\PrintBackRefs{\CurrentBib}

\bibitem [\protect \citeauthoryear {%
Stadie%
, Levine%
\BCBL {}\ \BBA {} Abbeel%
}{%
Stadie%
\ \protect \BOthers {.}}{%
{\protect \APACyear {2015}}%
}]{%
stadie2015incentivizing}
\APACinsertmetastar {%
stadie2015incentivizing}%
\begin{APACrefauthors}%
Stadie, B\BPBI C.%
, Levine, S.%
\BCBL {}\ \BBA {} Abbeel, P.%
\end{APACrefauthors}%
\unskip\
\newblock
\APACrefYearMonthDay{2015}{}{}.
\newblock
{\BBOQ}\APACrefatitle {Incentivizing exploration in reinforcement learning with
  deep predictive models} {Incentivizing exploration in reinforcement learning
  with deep predictive models}.{\BBCQ}
\newblock
\APACjournalVolNumPages{arXiv:1507.00814}{}{}{}.
\PrintBackRefs{\CurrentBib}

\bibitem [\protect \citeauthoryear {%
Still%
\ \BBA {} Precup%
}{%
Still%
\ \BBA {} Precup%
}{%
{\protect \APACyear {2012}}%
}]{%
still2012information}
\APACinsertmetastar {%
still2012information}%
\begin{APACrefauthors}%
Still, S.%
\BCBT {}\ \BBA {} Precup, D.%
\end{APACrefauthors}%
\unskip\
\newblock
\APACrefYearMonthDay{2012}{}{}.
\newblock
{\BBOQ}\APACrefatitle {An information-theoretic approach to curiosity-driven
  reinforcement learning} {An information-theoretic approach to
  curiosity-driven reinforcement learning}.{\BBCQ}
\newblock
\APACjournalVolNumPages{Theory in Biosciences}{131}{3}{139--148}.
\PrintBackRefs{\CurrentBib}

\bibitem [\protect \citeauthoryear {%
Sutton%
}{%
Sutton%
}{%
{\protect \APACyear {1992}}%
}]{%
sutton1992adapting}
\APACinsertmetastar {%
sutton1992adapting}%
\begin{APACrefauthors}%
Sutton, R\BPBI S.%
\end{APACrefauthors}%
\unskip\
\newblock
\APACrefYearMonthDay{1992}{}{}.
\newblock
{\BBOQ}\APACrefatitle {Adapting bias by gradient descent: An incremental
  version of delta-bar-delta} {Adapting bias by gradient descent: An
  incremental version of delta-bar-delta}.{\BBCQ}
\newblock
\BIn{} \APACrefbtitle {{A}{A}{A}{I} Conference on Artificial Intelligence}
  {{A}{A}{A}{I} conference on artificial intelligence}\ (\BPGS\ 171--176).
\PrintBackRefs{\CurrentBib}

\bibitem [\protect \citeauthoryear {%
Sutton%
\ \BBA {} Barto%
}{%
Sutton%
\ \BBA {} Barto%
}{%
{\protect \APACyear {2018}}%
}]{%
sutton2018reinforcement}
\APACinsertmetastar {%
sutton2018reinforcement}%
\begin{APACrefauthors}%
Sutton, R\BPBI S.%
\BCBT {}\ \BBA {} Barto, A\BPBI G.%
\end{APACrefauthors}%
\unskip\
\newblock
\APACrefYear{2018}.
\newblock
\APACrefbtitle {Reinforcement learning: An introduction, 2nd Edition}
  {Reinforcement learning: An introduction, 2nd edition}.
\newblock
\APACaddressPublisher{}{MIT press}.
\PrintBackRefs{\CurrentBib}

\bibitem [\protect \citeauthoryear {%
Sutton%
, Koop%
\BCBL {}\ \BBA {} Silver%
}{%
Sutton%
\ \protect \BOthers {.}}{%
{\protect \APACyear {2007}}%
}]{%
sutton2007role}
\APACinsertmetastar {%
sutton2007role}%
\begin{APACrefauthors}%
Sutton, R\BPBI S.%
, Koop, A.%
\BCBL {}\ \BBA {} Silver, D.%
\end{APACrefauthors}%
\unskip\
\newblock
\APACrefYearMonthDay{2007}{}{}.
\newblock
{\BBOQ}\APACrefatitle {On the role of tracking in stationary environments} {On
  the role of tracking in stationary environments}.{\BBCQ}
\newblock
\BIn{} \APACrefbtitle {International Conference on Machine Learning.}
  {International conference on machine learning.}
\PrintBackRefs{\CurrentBib}

\bibitem [\protect \citeauthoryear {%
Sutton%
\ \protect \BOthers {.}}{%
Sutton%
\ \protect \BOthers {.}}{%
{\protect \APACyear {2011}}%
}]{%
sutton2011horde}
\APACinsertmetastar {%
sutton2011horde}%
\begin{APACrefauthors}%
Sutton, R\BPBI S.%
, Modayil, J.%
, Delp, M.%
, Degris, T.%
, Pilarski, P\BPBI M.%
, White, A.%
\BCBL {}\ \BBA {} Precup, D.%
\end{APACrefauthors}%
\unskip\
\newblock
\APACrefYearMonthDay{2011}{}{}.
\newblock
{\BBOQ}\APACrefatitle {Horde: A scalable real-time architecture for learning
  knowledge from unsupervised sensorimotor interaction} {Horde: A scalable
  real-time architecture for learning knowledge from unsupervised sensorimotor
  interaction}.{\BBCQ}
\newblock
\BIn{} \APACrefbtitle {International Conference on Autonomous Agents and
  Multiagent Systems.} {International conference on autonomous agents and
  multiagent systems.}
\PrintBackRefs{\CurrentBib}

\bibitem [\protect \citeauthoryear {%
Sutton%
, Precup%
\BCBL {}\ \BBA {} Singh%
}{%
Sutton%
\ \protect \BOthers {.}}{%
{\protect \APACyear {1999}}%
}]{%
sutton1999between}
\APACinsertmetastar {%
sutton1999between}%
\begin{APACrefauthors}%
Sutton, R\BPBI S.%
, Precup, D.%
\BCBL {}\ \BBA {} Singh, S.%
\end{APACrefauthors}%
\unskip\
\newblock
\APACrefYearMonthDay{1999}{}{}.
\newblock
{\BBOQ}\APACrefatitle {Between {MDP}s and semi-{MDP}s: A framework for temporal
  abstraction in reinforcement learning} {Between {MDP}s and semi-{MDP}s: A
  framework for temporal abstraction in reinforcement learning}.{\BBCQ}
\newblock
\APACjournalVolNumPages{Artificial Intelligence}{}{}{}.
\PrintBackRefs{\CurrentBib}

\bibitem [\protect \citeauthoryear {%
Szita%
\ \BBA {} L{\H{o}}rincz%
}{%
Szita%
\ \BBA {} L{\H{o}}rincz%
}{%
{\protect \APACyear {2008}}%
}]{%
szita2008many}
\APACinsertmetastar {%
szita2008many}%
\begin{APACrefauthors}%
Szita, I.%
\BCBT {}\ \BBA {} L{\H{o}}rincz, A.%
\end{APACrefauthors}%
\unskip\
\newblock
\APACrefYearMonthDay{2008}{}{}.
\newblock
{\BBOQ}\APACrefatitle {The Many Faces of Optimism: {A} Unifying Approach} {The
  many faces of optimism: {A} unifying approach}.{\BBCQ}
\newblock
\BIn{} \APACrefbtitle {International Conference on Machine learning.}
  {International conference on machine learning.}
\PrintBackRefs{\CurrentBib}

\bibitem [\protect \citeauthoryear {%
Tang%
\ \protect \BOthers {.}}{%
Tang%
\ \protect \BOthers {.}}{%
{\protect \APACyear {2017}}%
}]{%
tang2017exploration}
\APACinsertmetastar {%
tang2017exploration}%
\begin{APACrefauthors}%
Tang, H.%
, Houthooft, R.%
, Foote, D.%
, Stooke, A.%
, Chen, X.%
, Duan, Y.%
\BDBL {}Abbeel, P.%
\end{APACrefauthors}%
\unskip\
\newblock
\APACrefYearMonthDay{2017}{}{}.
\newblock
{\BBOQ}\APACrefatitle {{\#Exploration}: A study of count-based exploration for
  deep reinforcement learning} {{\#Exploration}: A study of count-based
  exploration for deep reinforcement learning}.{\BBCQ}
\newblock
\BIn{} \APACrefbtitle {Advances in Neural Information Processing Systems.}
  {Advances in neural information processing systems.}
\PrintBackRefs{\CurrentBib}

\bibitem [\protect \citeauthoryear {%
Vigorito%
}{%
Vigorito%
}{%
{\protect \APACyear {2016}}%
}]{%
vigorito2016intrinsically}
\APACinsertmetastar {%
vigorito2016intrinsically}%
\begin{APACrefauthors}%
Vigorito, C\BPBI M.%
\end{APACrefauthors}%
\unskip\
\newblock
\APACrefYear{2016}.
\unskip\
\newblock
\APACrefbtitle {Intrinsically Motivated Exploration in Hierarchical
  Reinforcement Learning} {Intrinsically motivated exploration in hierarchical
  reinforcement learning}\ \APACtypeAddressSchool {\BUPhD}{}{}.
\unskip\
\newblock
\APACaddressSchool {}{University of Massachusetts Amherst}.
\PrintBackRefs{\CurrentBib}

\bibitem [\protect \citeauthoryear {%
A.~White%
, Modayil%
\BCBL {}\ \BBA {} Sutton%
}{%
A.~White%
\ \protect \BOthers {.}}{%
{\protect \APACyear {2012}}%
}]{%
white2012scaling}
\APACinsertmetastar {%
white2012scaling}%
\begin{APACrefauthors}%
White, A.%
, Modayil, J.%
\BCBL {}\ \BBA {} Sutton, R\BPBI S.%
\end{APACrefauthors}%
\unskip\
\newblock
\APACrefYearMonthDay{2012}{}{}.
\newblock
{\BBOQ}\APACrefatitle {Scaling life-long off-policy learning} {Scaling
  life-long off-policy learning}.{\BBCQ}
\newblock
\BIn{} \APACrefbtitle {International Conference on Development and Learning and
  Epigenetic Robotics.} {International conference on development and learning
  and epigenetic robotics.}
\PrintBackRefs{\CurrentBib}

\bibitem [\protect \citeauthoryear {%
A.~White%
, Modayil%
\BCBL {}\ \BBA {} Sutton%
}{%
A.~White%
\ \protect \BOthers {.}}{%
{\protect \APACyear {2014}}%
}]{%
white2014surprise}
\APACinsertmetastar {%
white2014surprise}%
\begin{APACrefauthors}%
White, A.%
, Modayil, J.%
\BCBL {}\ \BBA {} Sutton, R\BPBI S.%
\end{APACrefauthors}%
\unskip\
\newblock
\APACrefYearMonthDay{2014}{}{}.
\newblock
{\BBOQ}\APACrefatitle {Surprise and curiosity for big data robotics} {Surprise
  and curiosity for big data robotics}.{\BBCQ}
\newblock
\BIn{} \APACrefbtitle {{A}{A}{A}{I} Workshop on Sequential Decision-Making with
  Big Data.} {{A}{A}{A}{I} workshop on sequential decision-making with big
  data.}
\PrintBackRefs{\CurrentBib}

\bibitem [\protect \citeauthoryear {%
A\BPBI M.~White%
}{%
A\BPBI M.~White%
}{%
{\protect \APACyear {2015}}%
}]{%
white2015developing}
\APACinsertmetastar {%
white2015developing}%
\begin{APACrefauthors}%
White, A\BPBI M.%
\end{APACrefauthors}%
\unskip\
\newblock
\APACrefYear{2015}.
\unskip\
\newblock
\APACrefbtitle {Developing a Predictive Approach to Knowledge} {Developing a
  predictive approach to knowledge}\ \APACtypeAddressSchool {\BUPhD}{}{}.
\unskip\
\newblock
\APACaddressSchool {}{University of Alberta}.
\PrintBackRefs{\CurrentBib}

\end{thebibliography}

\end{document}